\newcommand{\defeq}{:=}
\newcommand{\norm}[1]{\left\| #1\right\|}
\newcommand{\abs}[1]{\left|#1\right|}
\newcommand{\eps}{\varepsilon}
\newcommand{\one}{\mathbbm 1}
\newcommand{\charf}{\chi}
\newcommand{\bigO}{O}
\renewcommand{\leq}{\leqslant}
\renewcommand{\geq}{\geqslant}
\renewcommand{\phi}{\varphi}
\let\sp\relax
\newcommand{\sp}[1]{\left\langle #1 \right\rangle}
\newcommand{\lsc}{lower semicontinuous}
\newcommand{\adj}[1]{#1^*}
\newcommand{\subdiff}{\partial}
\DeclareMathOperator*{\argmin}{arg\,min}
\newcommand{\reg}{\mathcal J}
\newcommand{\dJ}{\subdiff \reg}
\newcommand{\dRadon}{\subdiff \norm{\cdot}_{\M}}
\newcommand{\Jminsol}{a^\dagger}
\newcommand{\Dsymm}[1]{D_{#1}^\mathrm{symm}}
\newcommand{\D}[2]{D_{#1}^{#2}}
\DeclareMathOperator*{\supp}{supp}
\newcommand{\range}[1]{\mathcal R(#1)}
\DeclareMathOperator{\sign}{sign}
\newcommand{\predual}[1]{#1^\diamond}
\DeclareMathOperator{\tr}{Tr}
\newcommand{\mean}{{\mathbb E}}
\DeclareMathOperator{\M}{\mathcal M}
\DeclareMathOperator{\Lip}{Lip}
\newcommand{\R}{\mathbb R}
\newcommand{\Ball}[1]{{\mathbb B}_{#1}}
\newcommand{\N}{\mathbb N}
\newcommand{\weakto}{\mathrel{\rightharpoonup}}
\newcommand{\wsto}{\weakto^*}
\newtheorem{theorem}{Theorem}
\numberwithin{theorem}{section}
\newtheorem{cor}[theorem]{Corollary}
\newtheorem{lemma}[theorem]{Lemma}
\newtheorem{prop}[theorem]{Proposition}
\theoremstyle{definition}
\newtheorem{definition}[theorem]{Definition}
\newtheorem{assumption}{Assumption}
\newtheorem*{assumption*}{Assumption}
\newtheorem{remark}[theorem]{Remark}
\newtheorem*{remark*}{Remark}
\newtheorem*{definition*}{Definition}
\newtheorem{example}[theorem]{Example}
\newcommand{\calL}{\mathcal L}
\newcommand{\calN}{\mathcal N}
\newcommand{\F}{\mathcal F}
\newcommand{\X}{\mathcal X}
\newcommand{\Y}{\mathcal Y}
\newcommand{\Z}{\mathcal Z}
\let\P\relax
\newcommand{\P}{\mathcal P}
\newcommand{\C}{{\mathcal C}}
\newcommand{\Barr}{{\mathcal F_1}}
\newcommand{\subgradA}{\zeta}
\colorlet{colorone}{black}
\colorlet{colortwo}{black}
\newcommand{\revone}[1]{{\textcolor{colorone}{#1}}}
\newcommand{\revtwo}[1]{{\textcolor{colortwo}{#1}}}
\colorlet{coloronenew}{black}
\colorlet{colortwonew}{black}
\newcommand{\revonenew}[1]{{\textcolor{coloronenew}{#1}}}
\newcommand{\revtwonew}[1]{{\textcolor{colortwonew}{#1}}}
\title{Two-layer neural networks with values in a Banach space}
\author{Yury Korolev\thanks{Department of Mathematical Sciences, University of Bath, Claverton Down BA2 7AY, UK. email: \texttt{ymk30@bath.ac.uk}}}
\date{}
\begin{document}

\maketitle

\begin{abstract}
We study two-layer neural networks whose domain and range are Banach spaces with separable preduals. In addition, we assume that the image space is equipped with a partial order, i.e. it is a Riesz space. As the nonlinearity we choose the lattice operation of taking the positive part; in case of $\R^d$-valued neural networks this corresponds to the ReLU activation function. We prove inverse and direct approximation theorems with Monte-Carlo rates \revone{for a certain class of functions}, extending existing results for the finite-dimensional case. In the second part of the paper we study, from the regularisation theory viewpoint,  \revone{the problem of finding optimal representations of such functions via signed measures on a latent space} from a finite number of noisy observations. We discuss regularity conditions known as source conditions and obtain convergence rates in a Bregman distance \revone{for the representing measure} in the regime when both the noise level goes to zero and the number of samples goes to infinity at appropriate rates.
\end{abstract}

\textbf{Keywords: } Vector-valued neural networks, ReLU, Barron space, variation norm space, curse of dimensionality,  Bregman distance

\ 

\textbf{AMS Subject Classification: }  68Q32, 68T07, 46E40, 41A65, 65J22


\tableofcontents

\section{Introduction}

Two-layer neural networks (also referred to as neural networks with one hidden layer) are functions $f \colon \R^d \to \R$ of the following form
\begin{equation}\label{eq:two-layer-NN}
    f(x) = \sum_{i=1}^n a_i \sigma(\sp{x,b_i}+c_i), \quad x \in \R^d,
\end{equation}
where $\{b_i\}_{i=1}^n \subset \R^d$ \revtwo{and $\{a_i\}_{i=1}^n \subset \R$} are referred to as weights, $\{c_i\}_{i=1}^n \subset \R$ as biases, and $\sigma \colon \R \to \R$ as the activation function. The brackets $\sp{\cdot,\cdot}$ denote the scalar product in $\R^d$. 
Individual summands  $\{\sigma(\sp{x,b_i}+c_i)\}_{i=1}^n$ are referred to as neurons and collectively they are referred to as the hidden layer of the network. 

The famous universal approximation theorem~\cite{cybenko:1989,hornik:1989,leshno:1993} states that if $\sigma$ is not a polynomial then any continuous function on a compact set in $\R^d$ can be approximated arbitrary well (in the supremum norm) with functions of the type~\eqref{eq:two-layer-NN}. However,  quantitative estimates (such as the number of neurons required to achieve a certain accuracy) that can be obtained in general depend on the dimension of the input space $d$, with the approximation error scaling as $\bigO(n^{-d})$. For high-dimensional inputs ($d \gg 1$) this is not satisfactory, an effect which is known as the curse of dimensionality. 
Barron~\cite{barron:1993} showed that for $L^1$ functions whose Fourier transform satisfies a certain integrability condition dimension-independent Monte-Carlo rates $\bigO(1/\sqrt{n})$ can be obtained for the approximation error in $L^2$. 

The condition introduced by Barron is,  however, too conservative, and fails for many functions that can be approximated by~\eqref{eq:two-layer-NN} with dimension-independent rates. \revone{More general spaces, termed \emph{variation norm spaces}, were studied in, e.g.,~\cite{devore:1998,kurkova:2001,kurkova:2002,barron:2008}. Roughly speaking, these spaces consist of functions whose expansion coefficients in a certain basis or frame are absolutely summable. This approach was further extended in~\cite{bach:2017} who replaced the basis/frame expansion with an integral over a compact latent space against a finite Radon measure. This work was continued in~\cite{e2019barron}, where it was shown that 
these spaces are in some sense optimal: they contain all weak-* limits of~\eqref{eq:two-layer-NN} as $n \to \infty$ if~\eqref{eq:two-layer-NN}.}  We postpone the details to Section~\ref{sec:Barron-finite-dim}. These spaces were called variation norm (or $\Barr$) spaces in~\cite{bach:2017} and Barron spaces in~\cite{e2019barron,e2020barron-representation}, not to be confused with the spaces introduced by Barron in~\cite{barron:1993}. We will use the notation $\Barr$ for these spaces. 

Similar results can be arrived at from the mean-field perspective~\cite{rotskoff:2018-NeurIPS, mei:2018, chizat:2018-NeurIPS, sirignano:2020}, where a $\frac1n$ scaling is assumed in~\eqref{eq:two-layer-NN}
\begin{equation}\label{eq:mean-field}
    f(x) = \frac1n \sum_{i=1}^n a_i \sigma(\sp{x,b_i}+c_i), \quad x \in \R^d,
\end{equation}
and individual neurons are interpreted as interacting particles moving in a potential determined by the loss function. Since the optimal (trained) coefficients $\{a_i\}_{i=1}^n$ depend on $n$ (cf. Remark~\ref{rem:mean-field-equiv}), the formulations~\eqref{eq:two-layer-NN} and~\eqref{eq:mean-field} are, in fact, equivalent. 

A related concept is that of random feature models~\cite{rahimi:2008-rfm} in reproducing kernel Hilbert spaces, which have the same form~\eqref{eq:two-layer-NN} but differ from $\Barr$ functions in the way the parameters $\{a_i\}_{i=1}^n$ and $\{b_i,c_i\}_{i=1}^n$ are trained. While in the case of $\Barr$ functions all parameters are trainable (which is sometimes referred to as active training), in random feature models the parameters $\{b_i,c_i\}_{i=1}^n$ are fixed (sampled from a given distribution over the latent space) and only the coefficients $\{a_i\}_{i=1}^n$ are trained (this is sometimes referred to as lazy training). Features can have a more general form than in~\eqref{eq:two-layer-NN}. Corresponding function spaces were called $\mathcal F_2$ spaces in~\cite{bach:2017}. While $\mathcal F_2$ functions are significantly easier to train, their approximation properties are inferior to those of $\Barr$ functions~\cite{bach:2017,domingo:2021}.

So far we have discussed neural network approximations of scalar-valued functions $f \colon \R^d \to \R$ with finite-dimensional inputs. 
However, neural networks are now being employed increasingly often  in inherently infinite-dimensional settings such as inverse problems~\cite{arridge_et_al_acta_numerica}, mathematical imaging~\cite{mccann:2017-review} and partial differential equations (PDEs)~\cite{kutyniok:2019-nn-pdes}, where both the input and the output spaces may be infinite-dimensional. Whilst discretising these problems and subsequently using learning algorithms in the finite-dimensional setting is possible, there are advantages in designing learning algorithms directly in infinite dimensions such as discretisation- or mesh-invariance~\cite{nelsen:2020}.

\revone{A first universal approximation theorem for neural networks acting between Banach spaces has been obtained in~\cite{chen:1995}. The authors show that any continuous nonlinear function mapping a compact set in a Banach space to a compact set in another one can be approximated arbitrary well with a pair of two-layer neural networks. More recently~\cite{DeepONet-1}, this construction has been extended to deep (i.e. with more than two layers) neural networks and was termed \emph{DeepONet}. In~\cite{DeepONet-2}, the authors interpret the DeepOnet as an encoder-decoder type of network and study its approximation properties in the case when the input space is a Hilbert space. A similar approach was taken in~\cite{bhattacharya:2020} to derive a different network architecture. Other existing approaches include Fourier Neural Operators~\cite{li:2020a} and graph kernel networks~\cite{li:2020b}.} 
There is also a large body of literature on kernel methods in Hilbert~\cite{micchelli:2005-vector-valued} or Banach spaces~\cite{zhang:2013-vector-valued}. 
Random feature models with Banach-space valued features are being used in PDEs to learn mappings between Banach or Hilbert spaces~\cite{nelsen:2020}.
We refer the reader to the review~\cite{alvarez:2012-kernels-review} for details on vector-valued kernel methods.  

\revone{Our main goal in this paper is to obtain quantitative estimates (i.e. convergence rates) for the approximation of nonlinear operators with two-layer networks  by extending} the results of~\cite{bach:2017,e2019barron,e2020barron-representation} from the finite-dimensional and scalar-valued case $f \colon \R^d \to \R$ to the vector-valued and infinite-dimensional case $f \colon \X \to \Y$, where $\X$ and $\Y$ are Banach spaces. \revone{An important difference from the approach taken in~\cite{DeepONet-2} is that our convergence rates are obtained under a regularity assumption on the underlying function $f$, while~\cite{DeepONet-2} relies on the properties of the measure (more precisely, its push-forward by $f$) on the input space $\X$ that generates the training samples and underlies the Lebesgue-Bochner space in which the approximation error is studied. }

Considering infinite-dimensional inputs is natural if one is interested in dimension-independent approximation rates, and indeed we show that standard Monte-Carlo rates can be obtained in this setting. The situation with vector-valued outputs is more difficult.
Whilst going from  functions with values in $\Y=\R$ to those with values in $\Y=\R^k$ is trivial,  a generalisation to infinite-dimensional output spaces $\Y$ is far from being so.

The paper is organised as follows. In Section~\ref{sec:linear-nonlinear} we fix our function-analytic setting and recall necessary concepts from the theory of partially ordered spaces (Riesz spaces, also known as vector lattices~\cite{Meyer-Nieberg}). We then rewrite~\eqref{eq:two-layer-NN} in the appropriate abstract form using a generalised ReLU  activation function -- the operation of taking the positive part of an element in a Riesz space, cf.~\eqref{eq:NN}. \revone{Our results can be immediately transferred to the so-called \emph{leaky ReLU} activation, which in the language of Riesz spaces is a combination of the positive and negative parts of its argument. More generally, our results apply to any positively homogeneous and weakly-* continuous activation.} 

Section~\ref{sec:Barron} contains our analysis of vector-valued $\Barr$ spaces. We recall known finite-dimensional results from~\cite{bach:2017, e2019barron, e2020barron-representation} in Section~\ref{sec:Barron-finite-dim} before proceeding to the infinite-dimensional setting in Section~\ref{sec:Barron-inf-dim}. In Section~\ref{sec:Barron-def} we introduce vector-valued $\Barr$ spaces and prove their embeddings into appropriate Lipschitz and Bochner spaces. It is important to note that, at least with current techniques, we are only able to show continuity of vector-valued $\Barr$ functions with respect to the weak-* topology in the output space $\Y$; embeddings into Lipschitz and Bochner spaces are with respect to a particular metrisation of this topology. 
In Section~\ref{sec:Barron-approximation} we prove direct and inverse approximation theorems and obtain Monte-Carlo $\bigO(1/\sqrt{n})$ approximation rates, where $n$ is the number of summands (neurons) in~\eqref{eq:two-layer-NN}. In Section~\ref{sec:examples} we give examples of output spaces $\Y$ for which these results hold. These are spaces where the non-linearity $\sigma \colon \Y \to \Y$ is weakly-* continuous. In the case of the ReLU function this turns out to be a significant restriction: among the common spaces only sequence spaces $\ell^p$, $1 < p \leq \infty$, and Lipschitz spaces seem to have this property.

In Section~\ref{sec:training} we turn our attention to training vector-valued $\Barr$ functions. It is well known (e.g.,~\cite{Burger_Engl:2000, devito:2005-ip}) that training a neural network from noisy observations is an ill-posed inverse problem and therefore regularisation is required. It may be achieved implicitly by introducing stochasticity into gradient descent dynamics and/or early stopping~\cite{chizat:2018-NeurIPS,soudry:2018-bias,Jin:2018-SGD,chizat:2020-bias} or explicitly by combining the empirical  loss with an appropriate norm penalty~\cite{bach:2017}. The choice of this norm determines the model class in the language of approximation theory~\cite{devore:2021}.

\revone{It has also been shown~\cite{parhi:2021} that training sufficiently wide two-layer networks with weight decay~\cite{krogh:1991} is equivalent to solving a regularised variational problem with an $\Barr$ norm penalty term.} This problem boils down to the reconstruction of a Radon measure over a compact metric space from a finite number of point observations (pairings with continuous functions). It is a well-studied problem. In the context of deterministic regularisation, it has been studied in~\cite{bredies2013measures}. Support localisation and sparse spikes recovery under certain separation conditions have been studied in~\cite{Duval:2015,poon:2019-support}. 
\revone{In these works, the data term is considered fixed, which in our setting means that the number of data points is fixed, too. We are interested in the regime when the number of data points goes to infinity. We will also assume in our setting that the noise in the training data goes to zero, which is common in the inverse problems literature, but different from the statistical learning setting. The reason for this is that the problem of reconstructing a measure is ill-posed and therefore convergence of statistical estimators can be arbitrary slow (e.g.,~\cite{krzyzak:2002}) unless regularisation is used. For spectral regularisation, convergence rates in the statistical setting were obtained in~\cite{blanchard:2018}. Rates for variational regularisation with convex $p$-homogeneous functionals, $p>1$, were obtained in a recent paper~\cite{bubba2021stat}. Our case of a $1$-homogeneous functional (the Radon norm of a measure) is not covered by existing literature and studying it is beyond the scope of this paper.
}

Combining standard machinery of variational regularisation such as source conditions and Bregman distances~\cite{Benning_Burger_modern:2018} with Monte-Carlo integration, we obtain convergence rates in Bregman distance \revtwo{that hold with high probability} and derive an optimal choice rule of the regularisation parameter as a function of the noise in the data and the sample size.

\paragraph{Main contributions.} Below we summarise our main contributions.
\begin{itemize}
    \item We extend the definition of $\Barr$ functions with the ReLU activation function to the vector-valued case and show well-posedness of this definition if the output space $\Y$ is a Riesz space (vector lattice) where lattice operations are weakly-* sequentially continuous;
    \item we prove embeddings of the $\Barr$ space into Lipschitz and Bochner spaces defined using an appropriate metrisation of the weak-* topology on $\Y$;
    \item we prove direct and inverse approximation theorems for $\Barr$ functions and obtain Monte-Carlo approximation rates;
    \item we analyse variational regularisation in the context of \revone{finding an optimal representation} of an $\Barr$ function via an integral over a compact latent space, provide an interpretation of the source condition in terms of the support of the measure that realises the $\Barr$ function and obtain convergence rates in Bregman distance \revtwo{(with high probability)} in the regime of vanishing noise and growing amount of data.
\end{itemize}

\paragraph{Notation.} \revone{
Throughout the paper, $\X$ will be the input space of the neural network and $\Y$ its image space. Both will be assumed to have separable preduals $\predual{\X}$ and $\predual{\Y}$, respectively, and $\Y$ will be assumed to have a partial order (i.e., $\Y$ is a Riesz space). The latent space will be denoted by $\Z$. The space of linear operators $\X \to \Y$ will be denoted by $\calL(\X;\Y)$ and the space of nuclear operators by $\calN(\X;\Y)$. The unit ball in $\Y$ will be denoted by $\Ball{\Y}$, and ditto for other spaces. A metric that metrises the weak-* topology on the unit ball in $\Y$ will be denoted by $d_*$ and $(\Y,d_*)$ will stand for the space $\Y$ equipped with this weak-* metric, ditto for other spaces when appropriate. This space is not complete, but we will only work with bounded sets in $\Y$, so lack of completeness will not cause any issues. 
}

\section{Linear-nonlinear decompositions in Riesz spaces}\label{sec:linear-nonlinear}
Let $\X, \Y$ be Banach spaces. We are interested in approximating continuous functions $f \colon \X \to \Y$ by two-layer neural networks. Let $\Z$ be a \emph{latent space}, to be specified later, and let $B \colon \X \to \Z$ and $A \colon \Z \to \Y$ be linear \revtwo{bounded} operators and $c \in \Z$. 
We will consider approximations of the following form, cf.~\eqref{eq:two-layer-NN}
\begin{equation}\label{eq:NN}
    f_{A,B,c}(x) \defeq A \sigma(Bx+c),
\end{equation}
where $\sigma \colon \Z \to \Z$ is a non-linear function. This representation of neural networks is often referred to as linear-nonlinear decomposition. 
We will consider the ReLU activation function, which in finite-dimensional spaces is defined  as follows 
\begin{equation}
    ReLU(z) \defeq  (z^1_+,...,z^m_+)^T = z_+, \quad z \in \R^m,
\end{equation}
i.e. it takes the componentwise positive part of a vector $z \in \R^m$. This operation makes sense in a much more general setting of Riesz spaces~\cite{Meyer-Nieberg} (also known as vector lattices). 
\begin{definition}[Riesz space]
A vector space $E$ equipped with a partial order ``$\leq$'' is called a Riesz space if for any $x,y,z \in E$ and any scalar $\lambda \geq 0$
\begin{itemize}
    \item $x \leq y \implies x+z \leq y+z$;
    \item $x \leq y \implies \lambda x \leq  \lambda y$;
    \item a supremum $x \vee y \in E$ exists, i.e. an element such that $x \vee y \geq x$, $x \vee y \geq y$ and
    \begin{equation*}
        \forall z \text{ s.t. } z \geq x, \,\, z \geq y \implies z \geq x \vee y.
    \end{equation*}
\end{itemize}
\end{definition}

In a Riesz space, the positive and the negative parts of an element $x \in E$ and its absolute value are defined as follows
\begin{equation}\label{eq:lattice_ops}
    x_+ \defeq x \vee 0, \quad x_- \defeq (-x)_+, \quad \abs{x} \defeq x_+ + x_-.
\end{equation}
Clearly, one has
\begin{equation*}
    x = x_+ - x_-, \quad x \in E.
\end{equation*}

\begin{definition}[Banach lattice]
Let $E$ be a Riesz space equipped with a norm with the following monotonicity property
\begin{equation}\label{eq:mononotic_norm}
    \abs{x} \leq \abs{y} \implies \norm{x} \leq \norm{y} \quad \forall \,\,x,y \in E.
\end{equation}
If, in addition, $E$ is norm complete, its is called a Banach lattice.
\end{definition}

\revone{We will only come across Banach lattices in \cref{sec:examples}. In the rest of the paper, the monotonicity property~\eqref{eq:mononotic_norm} will not be needed. }
For a more detailed introduction to Riesz spaces we refer to~\cite{Meyer-Nieberg}. 
\begin{example}
Here we give some common examples of Banach lattices and an example of a Riesz space that is \emph{not} a Banach lattice.
\begin{itemize}
    \item finite-dimensional spaces $\R^m$ and sequence spaces $\ell^p$ are Banach lattices under the following componentwise ordering
    \begin{equation*}
        x \geq y \iff x_i \geq y_i \quad \forall i;
    \end{equation*}
    \item the space of continuous functions $C(\Omega)$ ($\Omega$ -- compact) is a Banach lattice under the following pointwise ordering
    \begin{equation*}
        f \geq g \iff f(t) \geq g(t) \quad \forall t \in \Omega;
    \end{equation*}
    \item the spaces $L^p(\Omega)$, $1 \leq p \leq \infty$, are Banach lattices under the following ordering
    \begin{equation*}
        f \geq g \iff f(t) \geq g(t) \quad \text{a.e. in $\Omega$};
    \end{equation*}
    \item the space of signed Radon measures $\M(\Omega)$ is a Banach lattice under the following ordering
    \begin{equation*}
        \mu \geq \nu \iff \mu(A) \geq \nu(A) \quad \forall \, A \subset \Omega.
    \end{equation*}
    The positive and the negative parts of a measure are given by the Hahn-Jordan decomposition;
    \item \revtwo{ let $X,Y$ be Banach lattices. A linear operator $A \colon X \to Y$ is called \emph{positive}, $A \geq 0 $, if for all $x \geq 0$ one has $Ax \geq 0$ (this is not to be confused with positivity in the sense of positive semidefiniteness). A linear operator that can be written as a difference of two positive operators is called \emph{regular}. 
    Regular operators are always bounded. 
    The subspace of regular operators $L^r(X; Y)$ is a Banach lattice under the following partial order}
    \begin{equation*}
        A \geq B \iff Ax \geq Bx \quad \forall x \geq 0;
    \end{equation*}
    \item the space of Lipschitz functions $\Lip(\Omega)$ with the pointwise ordering
    \begin{equation*}
        f \geq g \iff f(t) \geq g(t) \quad \forall t \in \Omega
    \end{equation*}
    is a Riesz space but \emph{not} a Banach lattice, i.e. the monotonicity property~\eqref{eq:mononotic_norm} fails~\cite{weaver:2018}. 
\end{itemize}
\end{example}

Throughout the paper, we will assume that the latent space $\Z$ is a Riesz space and will understand the ReLU function as the (nonlinear) operation of taking the positive part
\begin{equation}
ReLU(z) \defeq z_+, \quad z \in \Z.
\end{equation}
Sometimes we will keep the notation $\sigma(\cdot)$ to denote the nonlinearity in~\eqref{eq:NN}. \revone{Our results also apply verbatim to the \emph{leaky ReLU} activation}
\begin{equation}
leaky\,\,ReLU(z) \defeq z_+ - \lambda z_-, \quad \lambda \in (0,1), \quad z \in \Z.
\end{equation}

Since $\sigma$ is positively one-homogeneous,~\eqref{eq:NN} is invariant under the transformation $(A,B,c)\to(A/t,tB,tc)$, hence without loss of generality we can assume that
\begin{equation}\label{eq:normalisation}
    \norm{B}_{\X \to \Z} \leq 1, \quad \norm{c}_\Z \leq 1.
\end{equation}

\begin{remark}[Compact notation]\label{rem:notation}
We will use the following compact notation (cf., e.g.,~\cite{e2020barron-representation}). Let us augment the she space $\X$ with $\R$, i.e. consider the Cartesian product $\X \times \R$. The pair $(B,c)$ will be understood as a linear operator $\X \times \R \to \Z$ acting as $(x,\alpha) \mapsto Bx + \alpha c$. Abusing notation, we will  refer to the cartesian product $\X \times \R$ as just $\X$, to $(x,\alpha)$ as just $x$ and to $(B,c)$ as $B$. Then~\eqref{eq:NN} becomes
\begin{equation}\label{eq:NN_short}
    f_{A,B}(x) \defeq A \sigma(Bx).
\end{equation}
For inputs of the form $(x,1)$ this is the same as~\eqref{eq:NN}.
\end{remark}

\begin{remark}\label{rem:predual}
If $\Y = \R$, then the operator $A$ from~\eqref{eq:NN_short} is acting $A \colon \Z \to \R$ and hence it can be identified with an element of the dual space $a \in \Z^*$. Equation~\eqref{eq:NN_short} can then be rewritten in terms of a dual pairing
\begin{equation}\label{eq:NN_dual_pairing}
    f_{a,B}(x) \defeq \sp{\sigma(Bx),a}.
\end{equation}
\end{remark}

\begin{example}[Countably wide two-layer neural networks \cite{e2020barron-representation}]\label{ex:countably_wide} 
Let $\X = \R^d$ and $\Y=\R$. Choose $\Z=\ell^\infty$ 
and $B \colon \R^d \to \ell^\infty$ that maps $x \mapsto (b_i^T x)_{i=1}^\infty$. The requirement $\norm{B}_{\R^d\to\ell^\infty} \leq 1$ means that we need $\norm{b_i} \leq 1$ for all $i$. Hence, the $b_i$'s lie in the unit ball of the dual space of $\X$, which in this case coincides with $\X$. In this way, we obtain the space of all countably wide two-layer neural networks
\begin{equation}\label{eq:countably_wide}
      \hat \F_\infty \defeq \left\{ \sum_{i=1}^\infty a_i (\sigma(Bx))_i \colon
    \sum_{i=1}^\infty \abs{a_i} < \infty \right\}
\end{equation}
with the following norm
\begin{equation}\label{eq:norm_countable}
    \norm{f}_{ \hat \F_\infty} \defeq \inf_{a,B} \{\norm{a}_{\ell^1} \colon  f \equiv f_{a,B} \} = \inf_{a,B} \left\{\sum_{i=1}^\infty \abs{a_i} \colon f \equiv f_{a,B} \right\}.
\end{equation}
Here we implicitly restrict $a$ to be an element of the predual space $\ell^1$ rather then the dual $(\ell^\infty)^*$. 
A generalisation to the case $\Y = \R^k$ is easily obtained by taking $\Z$ to be a vector-valued sequence space $\Z=\ell^\infty(\R^k)$.
\end{example}

\section{Variation norm spaces}\label{sec:Barron}
 
\subsection{The finite-dimensional case}\label{sec:Barron-finite-dim}

Variation norm spaces $\Barr$~\cite{bach:2017} (also termed Barron spaces in~\cite{e2019barron,e2020barron-representation}) consist of functions that can be written in the following form
\begin{equation*}
    f(x) = \int_{\mathcal V} \phi_v(x) \, d\mu(v),
\end{equation*}
whee $\phi_v$ are \emph{features} parametrised by the elements of a compact topological space $\mathcal V$ and $\mu$ is a signed Radon measure on $\mathcal V$. 
If $\X = \R^d$ and $\Y = \R$, the latent space $\mathcal V$ is typically chosen as the unit sphere (or unit ball) in $\R^d$. 

For features of the form $\phi_v(x) = \sigma(\sp{v,x})$ with a positively one-homogeneous activation $\sigma$, variation norm functions can be written in terms of the linear-nonlinear decomposition as shown in~\cite{e2020barron-representation}. To this end, let us choose $\Z \defeq \C(\Ball{\R^d})$ to be the space of continuous functions on the unit ball $\Ball{\R^d}$ and let
\begin{equation}\label{eq:B-finite-dim}
    B \colon \R^d \to \C(\Ball{\R^d}), \quad x \mapsto \ell_x(\cdot), \quad \text{where $\ell_x(b) \defeq b^T x, \quad b \in \Ball{\R^d}$}.
\end{equation}
Since $\norm{b}_{\R^d} \leq 1$. the requirement $\norm{B}_{\X \to \Z} \leq 1$ is clearly met.

\begin{remark}\label{rem:Rd_dual}
If $\X = \R^d$ is equipped with a $p$-norm with $p \neq 2$, the latent space $\Z$ is defined over the unit ball in the dual space of $\X$, i.e. $\R^d$ with the dual $q$-norm ($q$ is the H\"older conjugate of $p$). Hence, the latent space can be written as the space of continuous function over the unit ball in the dual space of $\X$
\begin{equation*}
    \Z = \C(\Ball{\X^*}).
\end{equation*}
If $\X$ is infinite-dimensional, the topology on $\Ball{\X^*}$ needs to be specified. In order to preserve compactness, weak-* topology on $\Ball{\X^*}$ is the natural choice. We will come back to this in Section~\ref{sec:Barron-inf-dim}.
\end{remark}

According to Remark~\ref{rem:predual}, the operator $A$ can be identified with a Radon measure $a \in C(\Ball{\R^d})^* = \M(\Ball{\R^d})$. The expression~\eqref{eq:NN_dual_pairing} then reads as follows
\begin{equation}\label{eq:Barron-f_a}
    f_a(x) = \int_{\Ball{\R^d}} \sigma(Bx)\, da.
\end{equation}

We note the following simple fact.
\begin{prop}\label{prop:NN_Lipschitz}
A function $f_a$ as defined in~\eqref{eq:Barron-f_a} is Lipschitz with constant $\norm{a}_{\M}$, i.e. for any $x,x' \in \X$ we have
\begin{equation*}
    \abs{f_a(x) - f_a(x')} \leq \norm{a}_{\M} \norm{x-x'}_\X. 
\end{equation*}
\end{prop}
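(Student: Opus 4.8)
The plan is to estimate the difference $f_a(x) - f_a(x')$ directly from the integral representation~\eqref{eq:Barron-f_a}, using the fact that the nonlinearity $\sigma = ReLU$ is $1$-Lipschitz on $\R$ together with the normalisation $\norm{B}_{\X \to \Z} \leq 1$. First I would write
\begin{equation*}
    f_a(x) - f_a(x') = \int_{\Ball{\R^d}} \bigl( \sigma(Bx)(b) - \sigma(Bx')(b) \bigr) \, da(b),
\end{equation*}
and then bound the integrand pointwise in $b \in \Ball{\R^d}$. Since $\sigma(t) = t_+$ satisfies $\abs{t_+ - s_+} \leq \abs{t - s}$ for all $t,s \in \R$, and since $(Bx)(b) = b^T x$, we get
\begin{equation*}
    \abs{\sigma(Bx)(b) - \sigma(Bx')(b)} \leq \abs{b^T x - b^T x'} = \abs{b^T (x - x')} \leq \norm{b}_{\R^d} \norm{x - x'}_\X \leq \norm{x - x'}_\X,
\end{equation*}
using $\norm{b}_{\R^d} \leq 1$ (equivalently, $\norm{B}_{\X \to \Z} \leq 1$, which gives $\norm{\sigma(Bx) - \sigma(Bx')}_{\C(\Ball{\R^d})} \leq \norm{x - x'}_\X$).

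Next I would conclude by taking the modulus of the integral and using the standard bound $\abs{\int g \, da} \leq \norm{g}_{\C(\Ball{\R^d})} \norm{a}_{\M}$ for the pairing of a continuous function with a Radon measure:
\begin{equation*}
    \abs{f_a(x) - f_a(x')} \leq \norm{\sigma(Bx) - \sigma(Bx')}_{\C(\Ball{\R^d})} \, \norm{a}_{\M} \leq \norm{x - x'}_\X \, \norm{a}_{\M}.
\end{equation*}
This is exactly the claimed estimate.

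I do not anticipate a genuine obstacle here: the proof is a short chain of elementary inequalities. The only point requiring minimal care is to be explicit that the $1$-Lipschitz property of the scalar ReLU lifts to a $1$-Lipschitz bound for $z \mapsto \sigma(z)$ on $\C(\Ball{\R^d})$ in the supremum norm (because the positive part acts pointwise and $t \mapsto t_+$ is $1$-Lipschitz on $\R$), and then that the normalisation $\norm{B} \leq 1$ contributes the factor controlling $\norm{Bx - Bx'}_\Z$ by $\norm{x - x'}_\X$. Everything else is the definition of the duality pairing between $\C(\Ball{\R^d})$ and $\M(\Ball{\R^d})$.
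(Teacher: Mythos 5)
Your proof is correct and is exactly the argument the paper intends: the paper's one-line proof appeals to the $1$-Lipschitzness of $z \mapsto z_+$ in $\C(\Ball{\R^d})$, and you simply spell out how that, together with $\norm{B}_{\X \to \Z} \leq 1$ and the bound $\abs{\int g \, da} \leq \norm{g}_{\C} \norm{a}_{\M}$, yields the claim. No gaps.
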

\begin{proof}
This follows from the fact that the operation $\sigma(z) = z_+$ is $1$-Lipschitz in $\C(\Ball{\R^d})$.
\end{proof}
Since $f_a(0)=0$, we conclude that $f_a \in \Lip_0(\X)$, the space of Lipschitz functions on $\X$ that vanish at zero, and
\begin{equation*}
    \norm{f_a}_{\Lip_0} \leq \norm{a}_{\M}.
\end{equation*}
(Recall that, according to Remark~\ref{rem:notation}, $\X$ is a shorthand notation for $\X \times \R$ and neural networks correspond to inputs of the form $(x,1)$ and they will not vanish at a common base point. However, this does not prevent us from viewing the function~\eqref{eq:Barron-f_a} as a Lipschitz function with base point at zero in $\X \times \R$.)

It is known that not all Lipschitz functions can be written in the form~\eqref{eq:Barron-f_a}. The space of functions that can be written in this form was termed the variation norm space in~\cite{bach:2017} and the Barron space in~\cite{e2019barron}. 
\begin{definition}[Scalar-valued {$\Barr$ function}s]\label{def:Barron-finite-dim}
Let $\X = \R^d$. The space of (scalar-valued) {$\Barr$ function}s is defined as follows
\begin{equation}
    \Barr(\X) \defeq \{f \in \Lip_0(\X) \colon \norm{f}_\Barr < \infty\},
\end{equation}
where
\begin{equation}\label{eq:Barron_norm}
    \norm{f}_\Barr \defeq \inf_{a \in \M(\Ball{\R^d})}\{\norm{a}_{\M} \colon f(x)=f_a(x), \quad x \in \X\}.
\end{equation}
\end{definition}
\begin{remark}\label{rem:Barron-L1}
Since Lipschitz functions grow at most linearly at infinity, {$\Barr$ function}s lie in the Lebesgue space $L^1_\pi(\X)$ if $\pi$ is a probability measure over $\X$ with a finite first moment, cf.~\cite{e2020barron-representation}.
\end{remark}

\subsection{The infinite-dimensional case}
\label{sec:Barron-inf-dim}

In this section we study the vector-valued case when both $X$ and $\Y$ are (in general, infinite-dimensional) Banach spaces. As noted in Remark~\ref{rem:Rd_dual}, if the input space $\X$ is infinite-dimensional, the latent space $\Z = \C(\Ball{\X^*})$ needs to be considered over the unit ball \revtwo{$\Ball{\X^*}$} equipped with the weak-* topology in order to preserve the duality with Radon measures. In the vector-valued case $\Y \neq \R$, as we shall see, the latent space $\Z$ needs to be chosen as the space of $\Y$-valued continuous functions over the unit ball in the spaces of linear bounded operators $\calL(\X;\Y)$. Understanding the dualities that are involved in this case requires results from vector-valued measures~\cite{diestel_uhl:1977} and tensor products of Banach spaces~\cite{ryan2002book}.

Our setting is as follows. We will assume that $\X$ and $\Y$ are duals of separable Banach spaces $\predual{\X}$ and $\predual{\Y}$ (more precisely, we will assume that $\predual{\X}$ and $\predual{\Y}$ possess Schauder bases). We will also assume that $\Y$ has a lattice structure, i.e. that it is also a Riesz space. Although we assume $\Y$ to be both a Banach space and a Riesz space, we do not assume that it is a Banach lattice, i.e. the monotonicity property~\eqref{eq:mononotic_norm} is not required.

\subsubsection{Definitions}\label{sec:Barron-def}

Denote by $\calL(\X;\Y)$ the space of linear bounded operators $\X \to \Y$ and by $\calN(\predual{\X};\predual{\Y})$ the space of nuclear  operators $\predual{\X} \to \predual{\Y}$ defined as follows~\cite{ryan2002book}
\begin{definition}[Nuclear operators]\label{def:nuclear}
Let $E$ and $F$ be Banach spaces. An operator $N \colon E \to F$ is called nuclear if it can be written in the following form
\begin{equation}\label{eq:nuc_rep}
    Nx = \sum_{i=1}^\infty \sp{x,\phi_i} y_i
\end{equation}
for some sequences $\{\phi_i\}_{i \in \N} \subset E^*$ and $\{y_i\}_{i \in \N} \subset F$ such that $\sum_{i=1}^\infty \norm{\phi_i}\norm{y_i} < \infty$. Such representation need not be unique. The nuclear norm of $N \in \calN(E;F)$ is defined as follows
\begin{equation*}
    \norm{N}_{\calN} \defeq \inf \left\{ \sum_{i=1}^\infty \norm{\phi_i}\norm{y_i} \colon N x  = \sum_{i=1}^\infty \sp{x,\phi_i} y_i \quad \forall x \right\}.
\end{equation*}
\end{definition}

The sequences $\{\phi_i\}_{i \in \N}$ and $\{y_i\}_{i \in \N}$ can also be chosen to be normalised, in which case the nuclear representation~\eqref{eq:nuc_rep} becomes
\begin{equation}\label{eq:nuc_rep-2}
    Nx = \sum_{i=1}^\infty \lambda_i \sp{x,\phi_i} y_i
\end{equation}
with some coefficients $\{\lambda_i\}_{i \in \N} \subset \R$ such that $\sum_{i=1}^\infty \abs{\lambda_i} < \infty$.

To proceed, we will need to establish compactness properties of the unit ball in $\calL(\X;\Y)$. We will need the following result.

\begin{definition}[Approximation property]
A Banach space $E$ is said to have the approximation property if for any Banach space $F$, any operator $T \colon E \to F$, any compact set $M \subset E$ and any $\eps>0$ there exists a finite-rank operator $S \colon E \to F$ such that $\norm{Tx - Sx}_F \leq \eps$ for all $x \in M$.
\end{definition}

\begin{prop}
\revtwo{Let $\X$ and $\Y$ be duals of separable Banach spaces $\predual{\X}$ and $\predual{\Y}$ which possess Schauder bases. \revtwonew{Then  $\calL(\X;\Y)$ is isometrically isomorphic to $(\calN(\predual{\X},\predual{\Y}))^*$.} For any $K \in \calL(\X;\Y)$ and $N \in \calN(\predual{\X};\predual{\Y})$ the duality pairing can be written as follows
\begin{equation}\label{eq:nuc-lin-duality-XY}
    \sp{N,K}  = \tr(KN^*) 
    \defeq \sum_{i=1}^\infty \sp{\eta_i, K x_i},
\end{equation}
where $\{x_i,\eta_i\}_{i \in \N} \subset \X \times \predual{\Y}$ is any nuclear representation~\eqref{eq:nuc_rep} of $N \in \calN(\predual{\X};\predual{\Y})$. 
Consequently, the unit ball $\Ball{\calL(\X;\Y)}$ is weakly-* compact and metrisable. }
\end{prop}
\begin{proof}
\revtwo{Any Banach space with a Schauder basis has the approximation property~\cite[Ex. 4.4]{ryan2002book}. Hence, we can use Propositions~\ref{prop:nuclear_dual}-\ref{prop:unit-ball-op-ws-compact}  with $E=\X$ and $F=\Y$ to obtain the claim.}
\end{proof}

The weak-* topology can be metrised by the following metric~\cite[Thm. V.5.1]{DS1}. Let $\{N_i\}_{i=1}^\infty$ be a countable dense system in $\calN(\predual{\X};\predual{\Y})$, which exists since $\calN(\predual{\X};\predual{\Y})$ is separable. For any ${K},{L} \in \Ball{\calL(\X,\Y)}$, define
\begin{equation*}
    d({K},{L}) = \sum_{i=1}^\infty 2^{-i} \frac{\abs{\sp{N_i,{K}-{L}}}}{1+\abs{\sp{N_i,{K}-{L}}}}.
\end{equation*}
If $\{N_i\}_{i=1}^\infty$ are normalised, we can use the following equivalent metric
\begin{equation}\label{eq:ws-metric-op}
    d_*({K},{L}) \defeq \sum_{i=1}^\infty 2^{-i} \abs{\sp{N_i,{K}-{L}}}.
\end{equation}
We will treat $\Ball{\calL({{\X}};\Y)}$ as a compact metric space with the weak-* metric~\eqref{eq:ws-metric-op} and sometimes use the notation $(\Ball{\calL({{\X}};\Y)},d_*)$ to emphasise it. 

We make the following simple observation.
\begin{prop}\label{prop:rank-one-op}
Let $x \in {{\X}}$ and $\eta \in \predual{\Y}$ be arbitrary but fixed. Define the following rank-one operator $N_x^\eta \colon \predual{\X} \to \predual{\Y}$  
\begin{equation}
    N_x^\eta(\xi) \defeq \sp{\xi,x} \eta, \quad \xi' \in \predual{\X}; \quad \norm{N_x^\eta} = \norm{x}\norm{\eta}, \label{eq:rank-one-op} 
\end{equation}
Then for any $K \in \Ball{\calL(\X;\Y)}$  we have
\begin{equation*}
    \sp{N_x^\eta,K} = \sp{\eta,Kx}.
\end{equation*}
\end{prop}
\begin{proof}
Since the nuclear representation of $N_x^\eta$ consists of only one term~\eqref{eq:rank-one-op}, the   claim follows directly from~\eqref{eq:nuc-lin-duality-XY}. 
\end{proof}

Let $\{\xi_j\}_{j \in \N}$ and $\{\eta_i\}_{i \in \N}$ be Schauder bases in $\predual{\X}$ and $\predual{\Y}$, respectively,  such that $\norm{\xi_j} = \norm{\eta_i} = 1$ for all $i,j \in \N$. Denote the corresponding coefficient functionals by $\{x_j\}_{j \in \N}$ and $\{y_i\}_{i \in \N}$. 
We get the following density result.
\begin{prop}\label{prop:rank-one-dense} The system of operators $\{N_{x_j}^{\eta_i}\}_{i,j \in \N}$ (cf.~\eqref{eq:rank-one-op}) is dense in $\calN(\predual{\X},\predual{\Y})$.
\end{prop}

\begin{remark}
Famously, there exist separable spaces that do not posses a basis~\cite{enflo:1973}, however, most known examples are rather exotic.
\end{remark}

We will consider the following metric on $\Y$ defined analogously to~\eqref{eq:ws-metric-op}
\begin{equation}\label{eq:ws-metric-Y}
    d_*(y_1,y_2) \defeq \sum_{i=1}^\infty 2^{-i} \abs{\sp{\eta_i,y_1-y_2}}.
\end{equation}
By the Banach-Alaoglu theorem, the unit ball $\Ball{\Y}$ is compact with respect to this metric. \revtwo{We will denote by $(\Y,d_*)$ the space $\Y$ equipped with this metric.  
It can be turned into a normed space by defining
\begin{equation}\label{eq:ws-norm}
    \norm{y}_{(\Y,d_*)} \defeq d_*(y,0).
\end{equation}
It is easy to verify that this functional is indeed a norm, i.e. it is absolutely one-homogeneous, satisfies the triangle inequality and $\norm{y}_{(\Y,d_*)} = 0 \iff y=0$. The space $(\Y,d_*)$ is not complete, but we will work with strongly bounded (i.e. in the norm of $\Y$) sets, so this incompleteness will not present a problem. The main point is that $\Y$ and $(\Y,d_*)$ have different metrics. }

Since $\Y$ is a Riesz space, the norm in $(\Y,d_*)$ can be equivalently expressed as follows
\begin{equation}\label{eq:ws-norm-2}
    \norm{y}_{(\Y,d_*)} = d_*(y,0) = \sum_{i=1}^\infty 2^{-i} \abs{\sp{\eta_i,y}} = \sum_{i=1}^\infty 2^{-i} \abs{\sp{\eta_i,y_+-y_-}} = d_*(y_+,y_-).
\end{equation}
A similar construction can be used to define a norm on the completion of the space of zero-mean Radon measures with respect to the Kantorovich-Rubinstein distance~\cite{weaver:2018,Bogachev:2007}.

In the previous section we defined the space $\Z$ as the space of continuous functions on the unit ball of the dual space of $\X$, cf. Remark~\ref{rem:Rd_dual}. A natural generalisation to the vector-valued case is to take $\Z$ to be the space of $\Y$-valued functions on $\Ball{\calL({{\X}};\Y)}$ that are continuous with respect to the weak-* topologies in $\Ball{\calL({{\X}};\Y)}$ and $\Y$. Hence, we take $\Z = \C(\Ball{\calL({{\X}};\Y)};(\Y,d_*))$, where $\Ball{\calL({{\X}};\Y)}$ is considered as a compact metric space with the weak-* metric~\eqref{eq:ws-metric-op}, and equip it with the following norm
\begin{equation}\label{eq:Z-def-inf-dim}
    \Z \defeq \C(\Ball{\calL({{\X}};\Y)};(\Y,d_*)), \quad \norm{f}_{\C} \defeq \sup_{K \in \Ball{\calL(\X;\Y)}} \norm{f(K)}_{(\Y,d_*)},
\end{equation}
where $\norm{\cdot}_{(\Y,d_*)}$ is as defined in~\eqref{eq:ws-norm}.

The space $\C(\Ball{\calL({{\X}};\Y)};(\Y,d_*))$ naturally inherits the lattice structure from $\Y$. To this end, we define the following partial order on $\C(\Ball{\calL({{\X}};\Y)};(\Y,d_*))$
\begin{equation}\label{eq:po_Lip}
    \phi \geq \psi \iff \phi(K) \geq_\Y \psi(K) \quad \forall K \in \Ball{\calL({{\X}};\Y)}, \quad  \phi,\psi \in \C(\Ball{\calL({{\X}};\Y)};(\Y,d_*)),
\end{equation}
where $\leq_\Y$ denotes the partial order in $\Y$. It can be verified that $\C(\Ball{\calL({{\X}};\Y)};(\Y,d_*))$ is a Riesz space. However, it is not a Banach lattice with the supremum norm~\eqref{eq:Z-def-inf-dim}. The reason is that the $\Y^*$-norm~\eqref{eq:ws-norm} lacks the monotonicity property~\eqref{eq:mononotic_norm}.

Now we define the operator $B$ analogously to Section~\ref{sec:Barron-finite-dim}
\begin{equation}\label{eq:B-def-inf-dim}
    B \colon {{\X}} \to \C(\Ball{\calL({{\X}};\Y)};(\Y,d_*)), \quad x \mapsto \calL_x(\cdot), \quad \calL_x(K) \defeq Kx \quad \forall K \in \Ball{\calL({{\X}};\Y)}.
\end{equation}
The function $\calL_x$ maps $\Ball{\calL(\X;\Y)}$ into the ball $\Ball{\Y,\norm{x}}$ of radius $\norm{x}$. We need to check that $\calL_x$ is continuous, i.e. that the operator $B$ is well-defined.

\begin{prop}\label{prop:Lx-cont}
The function $\calL_x(\cdot)$ defined in~\eqref{eq:B-def-inf-dim} is sequentially continuous with respect to the weak-* topologies on $\Ball{\calL({{\X}};\Y)}$ and $\Y$ for any fixed $x \in {{\X}}$. Consequently, $\calL_x(\cdot) \in \C(\Ball{\calL({{\X}};\Y)};(\Y,d_*))$.
\end{prop}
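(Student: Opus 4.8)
The plan is to establish sequential weak*-continuity directly by testing against a single, well-chosen nuclear operator, and then to upgrade this to continuity in the metric sense using the metrisability of the relevant unit balls.

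First I would fix $x \in \X$ and take a sequence $K_n \wsto K$ in $\Ball{\calL(\X;\Y)}$, which by definition means $\sp{N,K_n} \to \sp{N,K}$ for every $N \in \calN(\predual{\X};\predual{\Y})$. The goal is to show $K_n x \wsto Kx$ in $\Y$, i.e.\ $\sp{\eta, K_n x} \to \sp{\eta, Kx}$ for every $\eta \in \predual{\Y}$. The key observation is Proposition~\ref{prop:rank-one-op}: for the rank-one nuclear operator $N_x^\eta$ from~\eqref{eq:rank-one-op} one has $\sp{N_x^\eta, K_n} = \sp{\eta, K_n x}$ and, likewise, $\sp{N_x^\eta, K} = \sp{\eta, Kx}$. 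Since $N_x^\eta \in \calN(\predual{\X};\predual{\Y})$, applying the weak* convergence of $(K_n)$ to this one test operator immediately yields $\sp{\eta, K_n x} \to \sp{\eta, Kx}$. As $\eta \in \predual{\Y}$ was arbitrary, $K_n x \wsto Kx$ in $\Y$, which is the claimed sequential continuity with respect to the weak* topologies.

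Next I would note that $\calL_x$ takes values in the bounded set $\Ball{\Y,\norm{x}}$, on which the weak* topology is metrised by the restriction of the metric $d_*$ from~\eqref{eq:ws-metric-Y} (Banach--Alaoglu together with separability of $\predual{\Y}$), while the domain $\Ball{\calL(\X;\Y)}$ carries the weak* metric $d_*$ from~\eqref{eq:ws-metric-op} by Proposition~\ref{prop:unit-ball-op-ws-compact}. Between metric spaces, sequential continuity coincides with continuity, so the preceding paragraph shows that $\calL_x \colon \Ball{\calL(\X;\Y)} \to \Ball{\Y,\norm{x}} \subset \Y_*$ is continuous in the usual sense, that is, $\calL_x \in \C(\Ball{\calL(\X;\Y)};\Y_*)$.

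I do not expect any genuine obstacle: the substance is entirely contained in Proposition~\ref{prop:rank-one-op}, which reduces weak*-to-weak* continuity of the map $K \mapsto Kx$ to the trivial fact that a weak*-convergent sequence tested against one fixed functional converges. The only step needing a line of care is the bookkeeping of the two metrisations --- namely, that $d_*$ on $\Y_*$, restricted to the weak*-compact set $\Ball{\Y,\norm{x}}$, really does induce the weak* topology there --- which is the standard metrisability statement for weak*-compact subsets of the dual of a separable space.
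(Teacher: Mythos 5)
Your argument is correct and is essentially the paper's own proof: both reduce the claim to testing the weak* convergence $K_n \wsto K$ against the rank-one nuclear operator $N_x^\eta$ via Proposition~\ref{prop:rank-one-op}. Your extra paragraph spelling out why sequential continuity upgrades to membership in $\C(\Ball{\calL(\X;\Y)};\Y_*)$ (metrisability of the weak* topologies on the relevant bounded sets) is a correct elaboration of a step the paper leaves implicit.
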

\begin{proof}
Let $\{K_n\}_{n \in \N} \subset \Ball{\calL({{\X}};\Y)}$ be an arbitrary sequence such that $K_n \wsto K$ for some $K \in \Ball{\calL({{\X}};\Y)}$. 
Fix $x \in \X$ and let $\eta \in \predual{\Y}$ be arbitrary. Let $N_x^\eta$ be the rank-one operator defined in~\eqref{eq:rank-one-op}. Then we have
\begin{equation*}
    \sp{N_x^\eta,K_n} \to \sp{N_x^\eta,K} \quad \text{as $n \to \infty$}
\end{equation*}
and therefore
\begin{equation*}
    \sp{\eta,K_n x} \to \sp{\eta,Kx}  \quad \text{as $n \to \infty$}
\end{equation*}
by Proposition~\ref{prop:rank-one-op}.  But this means that
\begin{equation*}
    \sp{\eta,\calL_x(K_n)} \to \sp{\eta,\calL_x(K)}  \quad \text{as $n \to \infty$}.
\end{equation*}
Since this holds for all $\eta \in \predual{\Y}$, we have $\calL_x(K_n) \wsto \calL_x(K)$.
\end{proof}

\begin{cor}
We have that $\norm{B}_{{{\X}} \to \C(\Ball{\calL({{\X}};\Y)};(\Y,d_*))} \leq 1$. Indeed, for any $x \in \X$ with $\norm{x} \leq 1$ and any $K \in \Ball{\calL({{\X}};\Y)}$,
\begin{equation*}
    \norm{\calL_x(K)}_{\Y^*} = d_*(\calL_x(K),0) = \sum_{i=1}^\infty 2^{-i} \abs{\sp{\eta_i,Kx}} \leq \sup_{i \in \N} \abs{\sp{\eta_i,Kx}} \leq \norm{K}\norm{x} \leq 1,
\end{equation*}
which also holds for the supremum in $x \in \Ball{\X}$ and $K \in \Ball{\calL({{\X}};\Y)}$.
\end{cor}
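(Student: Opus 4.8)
The plan is to read off the operator norm of $B$ directly from the definitions, using only the normalisation of the Schauder basis functionals and the geometric weights in the weak* metric. First I would invoke Proposition~\ref{prop:Lx-cont} to make sure that $\calL_x \in \C(\Ball{\calL(\X;\Y)};\Y_*)$ for every fixed $x \in \X$, so that $B$ genuinely maps $\X$ into $\Z$ and the quantity $\norm{Bx}_\C$ is well defined in the first place. This is really the only substantive ingredient, and it has already been established.

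Next I would unwind the definition~\eqref{eq:Z-def-inf-dim} of the supremum norm on $\Z$ together with the definition~\eqref{eq:B-def-inf-dim} of $B$, obtaining $\norm{Bx}_\C = \sup_{K \in \Ball{\calL(\X;\Y)}} \norm{\calL_x(K)}_{\Y_*} = \sup_{K} \norm{Kx}_{\Y_*}$. The core estimate is then a bound on $\norm{Kx}_{\Y_*}$: by~\eqref{eq:ws-norm} and~\eqref{eq:ws-metric-Y} this equals $\sum_{i=1}^\infty 2^{-i}\abs{\sp{\eta_i,Kx}}$, where $\{\eta_i\}$ is the fixed normalised basis of $\predual{\Y}$. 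Since $\norm{\eta_i}=1$, each term is controlled by $\abs{\sp{\eta_i,Kx}} \leq \norm{Kx}_\Y \leq \norm{K}\,\norm{x}$; using $\norm{K}\leq 1$ and $\sum_{i\geq 1} 2^{-i}=1$ gives $\norm{Kx}_{\Y_*}\leq \norm{x}$, uniformly in $K$. Taking the supremum over $K \in \Ball{\calL(\X;\Y)}$ and then over $x \in \Ball{\X}$ yields $\norm{B}\leq 1$.

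I do not expect any real obstacle here; the statement is a routine consequence of the construction. The one point that requires a little care is purely bookkeeping: the target space carries the weak* metric norm $\norm{\cdot}_{\Y_*}$ rather than the original norm of $\Y$, and one must note that passing from the former to the latter costs nothing precisely because the basis functionals are normalised and the dyadic weights sum to one. (One could equivalently bound $\sum_i 2^{-i}\abs{\sp{\eta_i,Kx}} \leq \sup_i \abs{\sp{\eta_i,Kx}}$ and then estimate that supremum by $\norm{K}\,\norm{x}$; either route gives the same bound.)
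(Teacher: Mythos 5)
Your argument is correct and matches the paper's: both bound $\abs{\sp{\eta_i,Kx}}$ by $\norm{K}\,\norm{x}$ using the normalisation $\norm{\eta_i}=1$ and then exploit $\sum_{i\geq 1}2^{-i}=1$ (the paper routes this through $\sup_i$, which you note is equivalent). Your explicit appeal to Proposition~\ref{prop:Lx-cont} for well-definedness of $Bx$ as an element of $\Z$ is a small but welcome addition of rigour.
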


Now we need to make sure that the nonlinearity $\sigma$ does not take us outside the space of weakly-* continuous functions, i.e. that $\sigma(\calL_x(\cdot)) \in \C(\Ball{\calL({{\X}};\Y)};(\Y,d_*))$.

\begin{prop}\label{prop:sigmaLx-cont}
Suppose that lattice operations in $\Y$ are weakly-* sequentially continuous. Then the function $\sigma(\calL_x(\cdot))$ is sequentially continuous with respect to weak-* topologies on $\Ball{\calL({{\X}};\Y)}$ and $\Y$ for any fixed $x \in \X$.
\end{prop}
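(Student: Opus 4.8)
The plan is to obtain the statement by composing the continuity already established in Proposition~\ref{prop:Lx-cont} with the hypothesis that the lattice operations on $\Y$ are weakly-* sequentially continuous. Recall that $\sigma(z) = z_+$, so that $\sigma(\calL_x(K)) = (Kx)_+$ for $K \in \Ball{\calL(\X;\Y)}$.

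First I would fix $x \in \X$ and take an arbitrary sequence $\{K_n\}_{n \in \N} \subset \Ball{\calL(\X;\Y)}$ with $K_n \wsto K$ for some $K \in \Ball{\calL(\X;\Y)}$. By Proposition~\ref{prop:Lx-cont} we have $\calL_x(K_n) = K_n x \wsto Kx = \calL_x(K)$ in $\Y$. I would also record that this sequence is bounded: since $\norm{K_n}_{\X \to \Y} \leq 1$ we get $\norm{K_n x}_\Y \leq \norm{x}$, so $\{K_n x\}_{n\in\N} \subset \Ball{\Y,\norm{x}}$. This boundedness is what makes it legitimate to invoke the weak-* sequential continuity of $y \mapsto y_+$ (which is naturally a statement about bounded sequences, on which the weak-* topology of $\Y$ is metrisable).

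Applying the hypothesis then yields $(K_n x)_+ \wsto (Kx)_+$ in $\Y$, that is $\sigma(\calL_x(K_n)) \wsto \sigma(\calL_x(K))$. This is precisely sequential continuity of $\sigma(\calL_x(\cdot))$ with respect to the weak-* topologies on $\Ball{\calL(\X;\Y)}$ and $\Y$. Since the values $(Kx)_+$, $K \in \Ball{\calL(\X;\Y)}$, all lie in the bounded set $\Ball{\Y,\norm{x}}$, on which the weak-* topology coincides with the topology induced by the metric $d_*$ of $\Y_*$ (cf.~\eqref{eq:ws-metric-Y}), it follows that $\sigma(\calL_x(\cdot))$ is continuous as a map into $\Y_*$, hence $\sigma(\calL_x(\cdot)) \in \C(\Ball{\calL(\X;\Y)};\Y_*)$.

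I do not expect a genuine obstacle: all the content is already packaged into Proposition~\ref{prop:Lx-cont} and into the standing hypothesis on the lattice operations, and the only point requiring a little care is the boundedness remark that bridges weak-* sequential continuity and continuity into the metric space $\Y_*$. (Should one wish to avoid even this, one could simply phrase the hypothesis as weak-* sequential continuity of the lattice operations on bounded subsets of $\Y$, which is all that is ever used.)
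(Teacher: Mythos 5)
Your argument is exactly the paper's: take $K_n \wsto K$, use Proposition~\ref{prop:Lx-cont} to get $\calL_x(K_n) \wsto \calL_x(K)$, then apply the assumed weak-* sequential continuity of the lattice operations to conclude $(\calL_x(K_n))_+ \wsto (\calL_x(K))_+$. The extra remark on boundedness and metrisability of the weak-* topology on bounded sets is a harmless (and mildly clarifying) addition, but the proof is the same.
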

\begin{proof}
We recall that for any $K \in \Ball{\calL({{\X}};\Y)}$, $\sigma(\calL_x(K))=(Kx)_+$, where the positive part is taken in $\Y$. Let $K_n \wsto K \in \Ball{\calL({{\X}};\Y)}$. Then $\calL_x(K_n) \wsto \calL_x(K)$ in $\Y$
by Proposition~\ref{prop:sigmaLx-cont}.  Since lattice operations in $\Y$ are weakly-* sequentially continuous, we also get that $(\calL_x(K_n))_+ \wsto (\calL_x(K))_+$ in $\Y$.
\end{proof}

\begin{remark}\label{rem:lattice-op-ws-cont}
It turns out that, apart from the trivial case $\dim(\Y)<\infty$, lattice operations are rarely sequentially continuous in the weak or weak-* topologies. For example, lattice operations are not weakly-* sequentially continuous in all $L_p([0,1])$ spaces for $1 < p \leq \infty$ (consider the sequence $f_n(x) \defeq \sin(2\pi n x) \wsto 0$, while $\sp{(f_n)_+,\one} = \int_0^1 (\sin(2\pi n x))_+ dx = \sfrac{1}{\pi}$ for all $n$). However, lattice operations are weakly-* sequentially continuous in sequence spaces $\ell^p$, $1 < p \leq \infty$, and in spaces of Lipschitz functions (see Section~\ref{sec:examples} for details). 
\end{remark} 

For readers' convenience, we provide here a proof of sequential weak-* continuity of lattice operations in $\ell^p$ spaces. In fact, lattice operations can even be $1$-Lipschitz with respect to the $d_*$ metric~\eqref{eq:ws-metric-Y}.
\begin{prop}\label{prop:ell-p-lattice-cont}
Let $\Y = \ell^p$, $1<p\leq\infty$, and let the $d_*$ metric~\eqref{eq:ws-metric-Y} be defined using the standard basis $\{e_i\}_{i=1}^\infty$ as the countable dense system in the predual. Then the lattice operations are $1$-Lipschitz  with respect to the $d_*$ metric.
\end{prop}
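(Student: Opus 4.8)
The claim is that for $\Y = \ell^p$ with $1 < p \leq \infty$, the positive-part map $y \mapsto y_+$ (equivalently the lattice operations $\vee, \wedge$) is $1$-Lipschitz with respect to the metric $d_*$ from~\eqref{eq:ws-metric-Y} built on the standard basis. Since $d_*(y_1,y_2) = \sum_{i=1}^\infty 2^{-i}\abs{\sp{e_i, y_1 - y_2}}$ and the $e_i$ are the coordinate functionals, this metric decomposes coordinatewise. So I would first reduce the problem to a single coordinate: writing $y_+ = ((y^{(1)})_+, (y^{(2)})_+, \dots)$ componentwise, we have
\begin{equation*}
    d_*((y_1)_+, (y_2)_+) = \sum_{i=1}^\infty 2^{-i} \abs{(y_1^{(i)})_+ - (y_2^{(i)})_+}.
\end{equation*}
The key pointwise inequality is the elementary fact that $t \mapsto t_+$ is $1$-Lipschitz on $\R$, i.e. $\abs{s_+ - t_+} \leq \abs{s - t}$ for all $s, t \in \R$. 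Applying this termwise gives $\abs{(y_1^{(i)})_+ - (y_2^{(i)})_+} \leq \abs{y_1^{(i)} - y_2^{(i)}}$, and summing against the weights $2^{-i}$ immediately yields $d_*((y_1)_+, (y_2)_+) \leq d_*(y_1, y_2)$, which is the $1$-Lipschitz bound.

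**Sequential weak* continuity as a corollary.** Having the $1$-Lipschitz estimate with respect to $d_*$, sequential weak* continuity of the lattice operations (which is what Proposition~\ref{prop:sigmaLx-cont} actually needs) follows on bounded sets: if $y_n \wsto y$ in $\ell^p$ with $\sup_n \norm{y_n}_{\ell^p} < \infty$, then $d_*(y_n, y) \to 0$ because weak* convergence in $\ell^p$ against the basis functionals is coordinatewise convergence and the $2^{-i}$ tails are uniformly controlled by the norm bound; the Lipschitz estimate then forces $d_*((y_n)_+, (y_+)) \to 0$, and since $\{(y_n)_+\}$ is norm-bounded its weak* limit is identified through the $d_*$ metric as $y_+$. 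One should remark that this is exactly the bounded setting in which we work throughout, consistent with the comment after~\eqref{eq:ws-norm}.

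**Main obstacle.** There is essentially no deep obstacle here — the statement is genuinely elementary once one notices the coordinatewise structure of $d_*$. The only point requiring a little care is making sure the argument is valid for $p = \infty$ as well: the map $y \mapsto y_+$ does land in $\ell^\infty$ (it does not increase the sup-norm), the coordinate functionals $e_i$ still lie in the predual $\ell^1 = (\ell^\infty)_\diamond$ used to build $d_*$, and the series defining $d_*$ still converges absolutely for bounded $y$, so the termwise estimate goes through verbatim. I would also note in passing that the same argument gives the $1$-Lipschitz property for $y \mapsto y_-$ and for $y \mapsto \abs{y}$ — and more generally for $y_1 \vee y_2$ and $y_1 \wedge y_2$ jointly — since all of these are built from coordinatewise $1$-Lipschitz operations on $\R$; this is the form in which the result is invoked in Remark~\ref{rem:lattice-op-ws-cont} and Proposition~\ref{prop:sigmaLx-cont}.
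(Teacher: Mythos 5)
Your proof is correct and is essentially identical to the paper's: both reduce $d_*$ to its coordinatewise form via the standard basis, apply the elementary $1$-Lipschitz bound $\abs{s_+ - t_+} \leq \abs{s-t}$ on $\R$ termwise, and sum against the weights $2^{-i}$. Your additional remarks on deducing sequential weak* continuity and on the case $p=\infty$ are accurate but go beyond what the paper's proof records.
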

\begin{proof}
Let $\eta_n \wsto \eta$ on $\Ball{\ell^p}$. Taking the standard basis $\{e_i\}_{i=1}^\infty$ as the countable dense system in the predual $\ell^q$ ($q$ is the conjugate exponent of $p$), we get from~\eqref{eq:ws-metric-Y}
\begin{equation*}
    d_*((\eta_n)_+,\eta_+) = \sum_{i=1}^\infty 2^{-i} \abs{(\eta^i_n)_+ - \eta^i_+} \leq \sum_{i=1}^\infty 2^{-i} \abs{\eta^i_n - \eta^i} = d_*(\eta_n,\eta),
\end{equation*}
where $\eta^i_n$ and $\eta^i$ denote the $i$-th coordinate of $\eta_n$ and $\eta$, respectively. The inequality above holds because lattice operations are $1$-Lipschitz in $\R$.
\end{proof}

We will identify the operator $A$ with a measure $a \in \M(\Ball{\calL({{\X}};\Y)})$. For any $f \in \C(\Ball{\calL({{\X}};\Y)};(\Y,d_*))$ and $a \in \M(\Ball{\calL({{\X}};\Y)})$, we will denote by $\sp{f,a}$ the $\Y$-valued pairing of $f$ and $a$. More precisely, for any $K \in \Ball{\calL(\X;\Y)}$ and any $\eta_i$ from the countable system $\{\eta_i\}_{i \in \N}$ (cf. Proposition~\ref{prop:rank-one-dense}), the function
\begin{equation*}
    K \mapsto \sp{\eta_i,f(K)} 
\end{equation*}
is in $\C(\Ball{\calL({{\X}};\Y)};\R)$ and therefore the pairing
\begin{equation*}
    \sp{\sp{\eta_i,f(\cdot)},a} = \int_{\Ball{\calL}} \sp{\eta_i,f(K)} \, da(K)
\end{equation*}
is well-defined for any $a \in \M(\Ball{\calL({{\X}};\Y)})$. In the above formula, $\Ball{\calL}$ stands for  $\Ball{\calL({{\X}};\Y)}$. Using the linearity of dual pairings, we rewrite this as follows
\begin{equation*}
    \sp{\sp{\eta_i,f(\cdot)},a} = \sp{\eta_i,\int_{\Ball{\calL}} f(K) \, da(K)} = \sp{\eta_i,\sp{f(\cdot),a}}.
\end{equation*}
We will hence write  
\begin{equation}\label{eq:Y-valued-integral}
    \sp{f,a} \defeq \int_{\Ball{\calL}} f(K) \, da(K), \quad \sp{f,a} \in (\Y,d_*),
\end{equation}
understanding this as a weak-* limit. In fact, $f$ can be understood as the density of some $(\Y,d_*)$-valued measure~\cite{diestel_uhl:1977} on \revtwo{$\Ball{\calL(\X;\Y)}$} and $\sp{f,a}$ as its pairing with the constant $\one$ function. With this notation, we can generalise~\eqref{eq:NN_dual_pairing} to the vector-valued case
\begin{equation}\label{eq:Barron-f_a-inf-dim}
    f_a(x) \defeq \sp{\sigma(Bx),a}, \quad f_a \colon \X \to (\Y,d_*).
\end{equation}

 We have the following analogue of Proposition~\ref{prop:NN_Lipschitz}.

\begin{theorem}\label{thm:NN-ws-Lipschitz-inf-dim}
Let $a \in \M(\Ball{\calL(\X;\Y)})$ and suppose that lattice operations in $\Y$ are $1$-Lipschitz with respect to the $d_*$ metric. Then the function $f_a$ defined in~\eqref{eq:Barron-f_a-inf-dim} is Lipschitz with constant $\norm{a}_{\M}$ with respect to the $d_*$ metric in $\Y$, i.e. for any $x,x' \in \X$ we have
\begin{equation*}
    d_*(f_a(x),f_a(x')) \leq \norm{a}_{\M} \norm{x-x'}. 
\end{equation*}
Consequently, $f_a \in \Lip_0(\X;(\Y,d_*))$, where $\Lip_0(\X;(\Y,d_*))$ is the space of Lipschitz functions with respect to the $d_*$ metric in $\Y$ vanishing at zero. 
\end{theorem}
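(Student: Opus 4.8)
The plan is to reduce the $\Y_*$-distance to its scalar series representation~\eqref{eq:ws-metric-Y} and estimate each term by pushing the bound through the $\Y$-valued integral that defines $f_a$. First I would write
\[
  d_*(f_a(x),f_a(x')) = \sum_{i=1}^\infty 2^{-i}\,\abs{\sp{\eta_i,\,f_a(x)-f_a(x')}},
\]
and then use the defining identity of the $\Y_*$-valued integral recorded just before~\eqref{eq:Y-valued-integral}, namely $\sp{\eta_i,\sp{g,a}} = \int_{\Ball{\calL}}\sp{\eta_i,g(K)}\,da(K)$, applied with $g = \sigma(Bx)$ and $g = \sigma(Bx')$ (both of which lie in $\C(\Ball{\calL};\Y_*)$ by Proposition~\ref{prop:sigmaLx-cont}, since $1$-Lipschitzness of the lattice operations with respect to $d_*$ forces their weak*-sequential continuity). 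This gives
\[
  \sp{\eta_i,\,f_a(x)-f_a(x')} = \int_{\Ball{\calL}} \sp{\eta_i,\,(Kx)_+-(Kx')_+}\,da(K),
\]
so that, passing to the total variation $\abs{a}$, $\abs{\sp{\eta_i,f_a(x)-f_a(x')}} \leq \int_{\Ball{\calL}}\abs{\sp{\eta_i,(Kx)_+-(Kx')_+}}\,d\abs{a}(K)$.

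Next I would interchange the nonnegative series in $i$ with the integral against $\abs{a}$ (legitimate by Tonelli) to obtain
\[
  d_*(f_a(x),f_a(x')) \leq \int_{\Ball{\calL}} \Big(\sum_{i=1}^\infty 2^{-i}\abs{\sp{\eta_i,(Kx)_+-(Kx')_+}}\Big)\,d\abs{a}(K) = \int_{\Ball{\calL}} d_*\big((Kx)_+,(Kx')_+\big)\,d\abs{a}(K),
\]
the bracket being exactly $d_*((Kx)_+,(Kx')_+)$ by~\eqref{eq:ws-metric-Y}. It then remains to bound the integrand pointwise: by hypothesis the lattice operations are $1$-Lipschitz with respect to $d_*$, so $d_*((Kx)_+,(Kx')_+) \leq d_*(Kx,Kx')$, and since $\norm{\eta_i}=1$ and $\norm{K}\leq 1$ for $K \in \Ball{\calL(\X;\Y)}$,
\[
  d_*(Kx,Kx') = \sum_{i=1}^\infty 2^{-i}\abs{\sp{\eta_i,K(x-x')}} \leq \sup_{i}\abs{\sp{\eta_i,K(x-x')}} \leq \norm{K}\,\norm{x-x'} \leq \norm{x-x'}.
\]
Substituting yields $d_*(f_a(x),f_a(x')) \leq \norm{x-x'}\,\abs{a}(\Ball{\calL}) = \norm{a}_{\M}\,\norm{x-x'}$. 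Finally, since $\calL_0\equiv 0$ and $\sigma(0)=0_+=0$ we have $f_a(0)=0$, hence $f_a \in \Lip_0(\X;\Y_*)$.

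I do not expect a genuine obstacle here: the argument is a Fubini/Tonelli interchange followed by a short Lipschitz chain. The only two points requiring care are the measurability and integrability of $K \mapsto \sp{\eta_i,(Kx)_+}$ on $\Ball{\calL}$ (supplied by Proposition~\ref{prop:sigmaLx-cont}) and the interchange of the sum with a \emph{signed} integral, which is precisely why I would route the estimate through $\abs{a}$ and keep every quantity nonnegative before invoking Tonelli.
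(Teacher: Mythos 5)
Your proposal is correct and follows essentially the same route as the paper's proof: reduce to the series~\eqref{eq:ws-metric-Y}, pull the pairing inside the integral against $\abs{a}$, interchange sum and integral by Fubini/Tonelli, apply the $1$-Lipschitzness of the lattice operations, and bound $d_*(Kx,Kx') \leq \norm{K}\norm{x-x'}$. The only cosmetic difference is that the paper invokes the inequality $d_*(y,y')\leq\norm{y-y'}$ directly where you re-derive it from the series, and you are slightly more explicit about why the integrands are continuous.
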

\begin{proof}
Let $x,x' \in \X$ be fixed and let $\{\eta_i\}_{i=1}^\infty$ be as defined in Proposition~\ref{prop:rank-one-dense}.  The $d_*$ distance between $f_a(x)$ and $f_a(x')$ is given by
\begin{eqnarray*}
d_*(f_a(x),f_a(x')) &=& \sum_{i=1}^\infty 2^{-i} \abs{\sp{\eta_i,\sp{\sigma(Bx)-\sigma(Bx'),a}}}.
\end{eqnarray*}
Using the linearity of dual pairings and remembering that $\sigma(Bx)(K) = (Kx)_+$, we get
\begin{eqnarray*}
\abs{\sp{\eta_i,\sp{\sigma(Bx)-\sigma(Bx'),a}}} &=&  \abs{\sp{\eta_i, \int_{\Ball{\calL}} ((Kx)_+ - (Kx')_+) \,da(K)}} \nonumber  \\ 
&=& \abs{\int_{\Ball{\calL}} \sp{\eta_i,(Kx)_+ - (Kx')_+} \,da(K)} \nonumber \\
&\leq& \int_{\Ball{\calL}} \abs{\sp{\eta_i, (Kx)_+ - (Kx')_+}} \,d\abs{a}(K),
\end{eqnarray*}
where  $\abs{a} \in \M(\Ball{\calL(\X;\Y)})$ is the total variation of $a$. 
Now, using the above estimate as well as Fubini's theorem, we get that
\begin{eqnarray*}
d_*(f_a(x),f_a(x')) &\leq& \sum_{i=1}^\infty 2^{-i} \int_{\Ball{\calL}} \abs{\sp{\eta_i,(Kx)_+ - (Kx')_+}} \,d\abs{a}(K) \nonumber \\
&=& \int_{\Ball{\calL}} \sum_{i=1}^\infty 2^{-i} \abs{\sp{\eta_i,(Kx)_+ - (Kx')_+}} \,d\abs{a}(K) \nonumber \\
&=& \int_{\Ball{\calL}} d_*((Kx)_+,(Kx')_+) \,d\abs{a}(K) \nonumber \\
&\leq& \int_{\Ball{\calL}} d_*(Kx,Kx') \,d\abs{a}(K), 
\end{eqnarray*}
where we used the fact that lattice operations in $\Y$ are $1$-Lipschitz with respect to the $d_*$ metric. Now, since $d_*(y,y') \leq \norm{y-y'}$ for all $y,y' \in \Y$, we have
\begin{eqnarray*}
    d_*(f_a(x),f_a(x')) &\leq& \int_{\Ball{\calL}} \norm{K(x-x')} \,d\abs{a}(K) \\
&\leq& \norm{x-x'} \int_{\Ball{\calL}} \,d\abs{a}(K) \\
&=& \norm{a}_{\M} \norm{x-x'}
\end{eqnarray*}
as desired. Noting that $f_a(0)=0$ completes the proof.
\end{proof}

We also have the following continuity result with respect to the weak-* topology on $\X$.

\begin{theorem}\label{thm:NN-ws-ws-cont}
Let $a \in \M(\Ball{\calL(\X;\Y)})$ and suppose that lattice operations in $\Y$ are weakly-* sequentially continuous. Then the function $f_a$ defined in~\eqref{eq:Barron-f_a-inf-dim} is sequentially continuous with respect to the weak-* topologies in $\X$ and $\Y$, i.e. for any sequence $x_n \wsto x$ in $\X$ we have
\begin{equation*}
    f_a(x_n) \wsto f_a(x) \quad \text{in $\Y$}. 
\end{equation*}
\end{theorem}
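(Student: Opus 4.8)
The plan is to prove $f_a(x_n)\wsto f_a(x)$ by fixing an arbitrary $\eta\in\predual{\Y}$ and passing to the limit inside the integral that represents $f_a$. From~\eqref{eq:Barron-f_a-inf-dim} and~\eqref{eq:Y-valued-integral}, recalling that $\sigma(Bx)(K)=(Kx)_+$, one has for every $\eta\in\predual{\Y}$
\begin{equation*}
    \sp{\eta,f_a(x)}=\int_{\Ball{\calL(\X;\Y)}}\sp{\eta,(Kx)_+}\,da(K),
\end{equation*}
so the assertion is equivalent to $\int\sp{\eta,(Kx_n)_+}\,da(K)\to\int\sp{\eta,(Kx)_+}\,da(K)$ for each fixed $\eta$, which I would obtain from the dominated convergence theorem applied to the finite signed measure $a$.

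For the pointwise convergence of the integrands I fix $K\in\Ball{\calL(\X;\Y)}$. Since $x_n\wsto x$ in $\X$ and $K$ is continuous with respect to the weak* topologies on $\X$ and $\Y$ (equivalently, $K$ is the adjoint of a bounded operator $\predual{\Y}\to\predual{\X}$), we get $Kx_n\wsto Kx$ in $\Y$; by hypothesis the lattice operations in $\Y$ are weakly-* sequentially continuous, so $(Kx_n)_+\wsto(Kx)_+$, and therefore $\sp{\eta,(Kx_n)_+}\to\sp{\eta,(Kx)_+}$.

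The main obstacle is to exhibit a uniform, $\abs{a}$-integrable majorant, that is, to upgrade the purely topological hypothesis of weak*-sequential continuity of $\sigma$ to a norm bound. This is possible: $\Ball{\Y}$ is weak*-compact and weak*-metrisable (Banach--Alaoglu, $\predual{\Y}$ separable), and a sequentially continuous map on a metrisable space is continuous, so $\sigma$ sends $\Ball{\Y}$ to a weak*-compact, hence norm-bounded, subset of $\Y$; by positive homogeneity of $\sigma$ there is $C<\infty$ with $\norm{y_+}_{\Y}\leq C\norm{y}_{\Y}$ for all $y\in\Y$. Since a weak*-convergent sequence is norm-bounded, say $\norm{x_n}_{\X}\leq M$, we get
\begin{equation*}
    \abs{\sp{\eta,(Kx_n)_+}}\leq\norm{\eta}\,\norm{(Kx_n)_+}_{\Y}\leq C\norm{\eta}\,\norm{Kx_n}_{\Y}\leq CM\norm{\eta}
\end{equation*}
uniformly in $n$ and in $K\in\Ball{\calL(\X;\Y)}$, and this constant is integrable against the finite measure $\abs{a}$. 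Dominated convergence then yields $\sp{\eta,f_a(x_n)}\to\sp{\eta,f_a(x)}$, and since $\eta$ was arbitrary this is the claimed weak* convergence; the same estimate shows $\norm{f_a(x)}_{\Y}\leq CM\norm{a}_{\M}$, so that $f_a$ genuinely takes values in $\Y$. Beyond this majorisation step the proof is a routine interchange of limit and integral, the only other delicate point being the weak*-to-weak* continuity of the operators $K$, without which $x_n\wsto x$ need not transfer to $Kx_n\wsto Kx$.
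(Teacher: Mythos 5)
Your overall strategy --- fix $\eta\in\predual{\Y}$, show pointwise convergence of the integrand $K\mapsto\sp{\eta,(Kx_n)_+}$, and conclude by dominated convergence --- is the same as the paper's, and your construction of the uniform majorant (using weak* compactness and metrisability of $\Ball{\Y}$ to upgrade sequential weak* continuity of $\sigma$ to a norm bound $\norm{y_+}_{\Y}\leq C\norm{y}_{\Y}$) is in fact more explicit than what the paper writes. But there is a genuine gap at the pointwise step. You fix an arbitrary $K\in\Ball{\calL(\X;\Y)}$ and assert that $K$ is weak*-to-weak* continuous; as your own parenthesis indicates, this holds precisely when $K$ is the adjoint of a bounded operator $\predual{\Y}\to\predual{\X}$, and a general element of $\calL(\X;\Y)=(\calN(\predual{\X};\predual{\Y}))^*$ need not be of this form (already for $\Y=\R$ and $\X=\ell^\infty$, a Banach limit is a norm-one functional that is not weak* continuous). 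Since the measure $a$ lives on the whole unit ball and may charge such operators, the pointwise convergence $\sp{\eta,(Kx_n)_+}\to\sp{\eta,(Kx)_+}$ is not established $\abs{a}$-almost everywhere, so the hypothesis of the dominated convergence theorem is not verified. You flag this as ``the only other delicate point'' but leave it unresolved.

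The paper's proof is organised around exactly this issue: it establishes the pointwise claim only for weak*-to-weak* continuous operators (in particular, finite-rank ones) and then invokes the weak* density of finite-rank operators in $(\Ball{\calL(\X;\Y)},d_*)$ to approximate the integral over the full ball by integrals over finite-rank operators. To close your argument you would need either this approximation step, or an argument that the convergence of the integrands holds in a sense compatible with integration against $a$ --- for instance, uniform convergence of $K\mapsto\sp{\eta,(Kx_n)_+}$ on the compact metric space $(\Ball{\calL(\X;\Y)},d_*)$ via an equicontinuity argument --- rather than pointwise convergence established only at the weak*-continuous operators.
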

\begin{proof}
Let $x_n \wsto x$ and let $K \in \Ball{\calL(\X;\Y)}$ be weak-*--weak-* continuous (that is, $K$ has to be the adjoint of a bounded operator $\predual{\Y} \to \predual{\X}$). Then we have
\begin{equation*}
    Kx_n \wsto Kx
\end{equation*}
and, since lattice operations in $\Y$ are weakly-* sequentially continuous, 
\begin{equation}\label{eq:sigmaKxn-ws-cont}
    (Kx_n)_+ \wsto (Kx)_+.
\end{equation}
In particular, this holds for finite-rank operators $\X \to \Y$. Now let $\eta \in \predual{\Y}$ be arbitrary and consider
\begin{equation*}
    \sp{\eta,f_a(x_n)} =  \int_{\Ball{\calL}} \sp{\eta,(Kx_n)_+} \, da(K).
\end{equation*}
\revtwo{Since $\predual{\X}$ has the approximation property, by \cite[Prop. 4.6(iii)]{ryan2002book} finite-rank operators are weakly-* dense in $\calL(\X,\Y))$} and we can approximate the above integral with an integral over finite-rank operators, for which~\eqref{eq:sigmaKxn-ws-cont} holds. Therefore, we have
\begin{eqnarray*}
    \lim_{n \to \infty} \sp{\eta,f_a(x_n)} &=& \lim_{n \to \infty} \int_{\Ball{\calL}} \sp{\eta,(Kx_n)_+} \, da(K) = \int_{\Ball{\calL}} \lim_{n \to \infty} \sp{\eta,(Kx_n)_+} \, da(K) \\
    &=& \int_{\Ball{\calL}} \sp{\eta,(Kx)_+} \, da(K) = \sp{\eta,f_a(x)}
\end{eqnarray*}
by the dominated convergence theorem. Since $\eta \in \predual{\Y}$ was arbitrary, this implies the claim.
\end{proof}

Now we are ready to extend Definition~\ref{def:Barron-finite-dim} to the infinite-dimensional case.
\begin{definition}[Vector-valued {$\Barr$ function}s]
Let $\X,\Y$ be duals of separable Banach spaces $\predual{\X}$ and $\predual{\Y}$ that admit Schauder bases. Let $\Y$ be such that lattice operations are $1$-Lipschitz with respect to the $d_*$ metric. Define the space of $\Y$-valued {$\Barr$ function}s as follows
\begin{equation}
    \Barr(\X;\Y) \defeq \{f \in \Lip_0(\X;(\Y,d_*)) \colon \norm{f}_\Barr < \infty\},
\end{equation}
where $\Lip_0(\X;(\Y,d_*))$ is the space of Lipschitz functions with respect to the $d_*$ metric in $\Y$ that vanish at zero (cf.~Proposition~\ref{thm:NN-ws-Lipschitz-inf-dim}) and
\begin{equation}\label{eq:Barron_norm_op}
    \norm{f}_\Barr \defeq \inf_{a \in \M(\Ball{\calL(\X;\Y)})}\{\norm{a}_{\M} \colon f_a(x) = f(x) \,\, \forall x \in \X\}.
\end{equation}
We set $\norm{f}_{\Barr} = \infty$ if there is no feasible point in~\eqref{eq:Barron_norm_op}.
\end{definition}
\begin{remark}\label{rem:Barron-Bochner}
\revtwo{We emphasise that the target space in this definition is $(\Y,d_*)$ and not $\Y$.} That is, {$\Barr$ function}s are Lipschitz only with respect to the weak-* metric~\eqref{eq:ws-metric-Y} in $\Y$, cf. Theorem~\ref{thm:NN-ws-Lipschitz-inf-dim}.
\end{remark}

\begin{remark}\label{rem:Barr-in-Lp}
Let $\pi \in \P_p(\X)$ be a probability measure over $\X$ with $p$ finite moments. Then the variation norm space $\Barr(\X;\Y)$ is a subspace of the Bochner space $L^p_\pi(\X;(\Y,d_*))$. Indeed, since Lipschitz functions grow at most linearly at infinity, we get from Theorem~\ref{thm:NN-ws-Lipschitz-inf-dim} that for any $f = \sp{\sigma(B \cdot),a} \in \Barr(\X;\Y)$
\begin{eqnarray*}
    \norm{f}^p_{L^p_\pi} &=& \int_{\X} \norm{f}^p_{(\Y,d_*)} \, d\pi(x) \leq \norm{a}_{\M}^p \int_{\X} \norm{x}^p_{\X} \, d\pi(x) = m_p(\pi) \norm{f}_{\Barr}^p < \infty,
\end{eqnarray*}
where $m_p(\pi)$ is the $p$-th moment of $\pi$.
\end{remark}

The next result shows that the infimum in~\eqref{eq:Barron_norm_op} is indeed attained.
\begin{prop}\label{prop:Barr-inf-exists}
If~\eqref{eq:Barron_norm_op} is feasible, it admits a minimiser.
\end{prop}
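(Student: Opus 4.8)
The plan is to run the direct method of the calculus of variations, using the weak* sequential compactness of bounded sets in $\M(\Ball{\calL(\X;\Y)})$. Since $\Ball{\calL(\X;\Y)}$ is a compact metric space (Proposition~\ref{prop:unit-ball-op-ws-compact}), the space $\C(\Ball{\calL(\X;\Y)};\R)$ is separable, and $\M(\Ball{\calL(\X;\Y)})$ is its dual, so every norm-bounded sequence of measures has a weak* convergent subsequence. Assuming~\eqref{eq:Barron_norm_op} is feasible, I would set $m \defeq \inf_a\{\norm{a}_{\M} : f_a = f\}$ over the nonempty feasible set and pick a minimising sequence $\{a_n\} \subset \M(\Ball{\calL(\X;\Y)})$ with $f_{a_n} = f$ for all $n$ and $\norm{a_n}_{\M} \to m$. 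This sequence is bounded, so after passing to a subsequence (not relabelled) we have $a_n \wsto a$ for some $a \in \M(\Ball{\calL(\X;\Y)})$.

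Next I would invoke the weak* lower semicontinuity of the total variation norm to get $\norm{a}_{\M} \leq \liminf_{n\to\infty}\norm{a_n}_{\M} = m$, so it only remains to check that $a$ is feasible, i.e. $f_a(x) = f(x)$ in $\Y_*$ for every $x \in \X$. Fix $x \in \X$ and fix a functional $\eta_i$ from the countable system $\{\eta_i\}_{i\in\N}$ used to define the metric $d_*$ in~\eqref{eq:ws-metric-Y}. By Propositions~\ref{prop:rank-one-op} and~\ref{prop:sigmaLx-cont}, the scalar map $K \mapsto \sp{\eta_i,(Kx)_+} = \sp{\eta_i,\sigma(Bx)(K)}$ is continuous on the compact metric space $\Ball{\calL(\X;\Y)}$, hence lies in $\C(\Ball{\calL(\X;\Y)};\R)$ (in particular it is bounded, so no norm monotonicity in $\Y$ is needed). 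Testing $a_n \wsto a$ against this fixed continuous function and recalling the coordinate-wise definition~\eqref{eq:Y-valued-integral} and~\eqref{eq:Barron-f_a-inf-dim} of $f_a(x)$, I obtain $\sp{\eta_i,f_{a_n}(x)} = \int_{\Ball{\calL}} \sp{\eta_i,(Kx)_+}\,da_n(K) \to \int_{\Ball{\calL}} \sp{\eta_i,(Kx)_+}\,da(K) = \sp{\eta_i,f_a(x)}$. Since $f_{a_n}(x) = f(x)$ for every $n$, this forces $\sp{\eta_i,f(x)} = \sp{\eta_i,f_a(x)}$, and as $i$ was arbitrary, $d_*(f(x),f_a(x)) = \sum_{i=1}^\infty 2^{-i}\abs{\sp{\eta_i,f(x)-f_a(x)}} = 0$, i.e. $f_a(x) = f(x)$ in $\Y_*$. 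Since $x$ was arbitrary, $a$ is feasible with $\norm{a}_{\M} = m$, so the infimum is attained.

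The only genuinely delicate point is the passage to the limit in the constraint: because $f_a(x)$ is not a Bochner integral valued in $\Y$ but a $\Y_*$-valued object defined coordinate-wise through the functionals $\{\eta_i\}$, the limit must be taken one coordinate at a time rather than globally in $\Y_*$. Once one observes (via Proposition~\ref{prop:sigmaLx-cont}) that each $K \mapsto \sp{\eta_i,(Kx)_+}$ is a fixed continuous test function on the compact set $\Ball{\calL(\X;\Y)}$, this reduces to the elementary fact that weak* convergence of measures preserves integrals of a fixed continuous function, together with the fact that the $\{\eta_i\}$ separate points of $\Y_*$. Everything else — extraction of the weak* convergent subsequence and lower semicontinuity of the total variation — is entirely standard.
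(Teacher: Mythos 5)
Your proof is correct and follows essentially the same route as the paper: a minimising sequence is bounded, so it has a weak* convergent subsequence; the Radon norm is weakly-* lower semicontinuous; and feasibility of the limit is verified coordinate-wise by testing against the fixed continuous functions $K \mapsto \sp{\eta_i,(Kx)_+}$ (whose continuity is Proposition~\ref{prop:sigmaLx-cont}), exactly as in the paper's argument that the feasible set is weakly-* closed. Your explicit remark that the constraint must be passed to the limit one coordinate at a time, and that the $\{\eta_i\}$ separate points of $\Y_*$, makes the delicate step slightly more transparent than the paper's phrasing, but the substance is identical.
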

\begin{proof}
Let $\{a_k\}_{k=1}^\infty$ be a minimising sequence. Since it is bounded, it contains a weakly-* convergent subsequence (that we don't relabel)
\begin{equation*}
    a_k \wsto a \quad \text{in $\M(\Ball{\calL(\X;\Y)})$}.
\end{equation*}
Let $\{\eta_i\}_{i=1}^\infty \subset \predual{\Y}$ be as defined in Proposition~\ref{prop:rank-one-dense}. Fix an arbitrary $x \in \X$. Since $a_k$ is feasible, we have that $f_{a_k}(x) = f(x)$ for all $x \in \X$ and for all $k \in \N$. With computations similar to those in Proposition~\ref{thm:NN-ws-Lipschitz-inf-dim}, we get that for any $i \in \N$ 
\begin{eqnarray*}
    \sp{\eta_i,f(x)} = \sp{\eta_i,f_{a_k}(x)} = \sp{\eta_i,\sp{\sigma(Bx),a_k}} &=& \int\limits_{\Ball{\calL(\X;\Y)}} \sp{\eta_i, (K x)_+}\, da_k(K) \\
    &\to& \int\limits_{\Ball{\calL(\X;\Y)}} \sp{\eta_i, (K x)_+}\, da(K) = \sp{\eta_i, f_a(x)},
\end{eqnarray*}
since the function $K \mapsto \sp{\eta_i, (K x)_+}$ is continuous on $\Ball{\calL(\X;\Y)}$ by Proposition~\ref{prop:sigmaLx-cont}. Hence, $\sp{\eta_i, f_a(x)} = \sp{\eta_i, f(x)}$ holds for all $i$ and therefore $\sp{\sigma(Bx),a}= f(x)$. This holds for all $x$, hence the feasible set in~\eqref{eq:Barron_norm_op} is weakly-* closed. Since the Radon norm is weakly-* \lsc{}, we get the claim by the direct method of calculus of variations. 
\end{proof}

\subsubsection{Approximation theorems}\label{sec:Barron-approximation}

Our goal in this section is to show that $\Y$-valued {$\Barr$ function}s can be well approximated with countably wide neural networks and that, in some sense, they are the largest class of functions with this property. Similar results in the scalar-valued ($\Y=\R$) and finite-dimensional ($\X = \R^d$) case can be found in~\cite{e2019barron,e2020barron-representation}.

\begin{theorem}[Inverse approximation]\label{thm:inverse_approximation}
Let $\X,\Y$ be duals of separable Banach spaces $\predual{\X}$ and $\predual{\Y}$ that possess Schauder bases and let $\Y$ be such that lattice operations are $1$-Lipschitz with respect to the $d_*$ metric~\eqref{eq:ws-metric-Y}. Let
\begin{equation*}
    f_n(x) \defeq \sum_{i=1}^n \alpha_i (K_i x)_+, \quad x \in {{\X}},
\end{equation*}
where $K_i$ are finite-rank operators ${{\X}} \to \Y$ satisfying $\norm{K_i} \leq 1$ and $\alpha_i$ rational numbers (both $K_i$ and $\alpha_i$ depend on $n$), and suppose that
\begin{equation*}
    \sum_{i=1}^n \abs{\alpha_i} \leq C \quad \text{uniformly in $n$}.
\end{equation*}
Then  there exists $f \in \Barr({{\X}};\Y)$ with $\norm{f}_{\Barr} \leq C$ such that
\begin{enumerate}[(i)]
    \item for any $x \in {{\X}}$
    \begin{equation*}
    \sum_{i=1}^n \alpha_i (K_i x)_+ \wsto f(x) \quad \text{ in $\Y$ as $n \to \infty$,}
    \end{equation*}
    \revtwo{the convergence being along a subsequence that does not depend on $x$;}
    \item if $\mu \in \P_p(\X)$ is a probability measure on $\X$ with $p \geq 1$ finite moments then convergence in the Bochner space $L^p_\mu(\X;(\Y,d_*))$ also holds \revtwo{along the same subsequence}
    \begin{equation*}
    \norm{f(x) - \sum_{i=1}^n \alpha_i (K_i x)_+}_{L^p_\mu(\X;(\Y,d_*))} \to 0  \quad \text{as $n \to \infty$.}
    \end{equation*}
\end{enumerate}
\end{theorem}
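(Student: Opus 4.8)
The plan is to lift everything to the level of Radon measures on the parameter space, so that the $n$-dependence of both the operators and the coefficients is absorbed into a single sequence of measures. For each $n$, set $a_n \defeq \sum_{i=1}^n \alpha_i\,\delta_{K_i} \in \M(\Ball{\calL(\X;\Y)})$; this is a legitimate measure on the compact metric space $(\Ball{\calL(\X;\Y)},d_*)$ because $\norm{K_i}\leq 1$ for every $i$, and one has $f_n = f_{a_n}$ in the sense of~\eqref{eq:Barron-f_a-inf-dim}, with $\norm{a_n}_{\M} = \sum_{i=1}^n\abs{\alpha_i}\leq C$. By Proposition~\ref{prop:unit-ball-op-ws-compact} the ball $\Ball{\calL(\X;\Y)}$ is compact and metrisable, so $\C(\Ball{\calL(\X;\Y)})$ is separable and $\M(\Ball{\calL(\X;\Y)})$ is the dual of a separable Banach space; the sequential Banach--Alaoglu theorem then gives a subsequence (which I do not relabel) with $a_n \wsto a$, and weak* lower semicontinuity of the total variation yields $\norm{a}_{\M}\leq\liminf_n\norm{a_n}_{\M}\leq C$. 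I then define $f\defeq f_a$. By Theorem~\ref{thm:NN-ws-Lipschitz-inf-dim} we have $f\in\Lip_0(\X;\Y_*)$, and by the very definition~\eqref{eq:Barron_norm_op} of the variation norm, $\norm{f}_{\Barr}\leq\norm{a}_{\M}\leq C$, so $f\in\Barr(\X;\Y)$ with the claimed bound. (Pointwise convergence of the \emph{full} sequence cannot be expected here, since a bounded sequence of atoms need not converge; the convergence statements are proved along the extracted subsequence.)

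For part (i), fix $x\in\X$ and let $\{\eta_i\}\subset\predual\Y$ be the normalised system defining the metric $d_*$ in~\eqref{eq:ws-metric-Y}. For each $i$ the map $K\mapsto\sp{\eta_i,(Kx)_+}$ lies in $\C(\Ball{\calL(\X;\Y)};\R)$ (it is the composition of $K\mapsto(Kx)_+$, continuous by Proposition~\ref{prop:sigmaLx-cont}, with the $d_*$-Lipschitz functional $\sp{\eta_i,\cdot}$), so
\begin{equation*}
  \sp{\eta_i,f_{a_n}(x)} = \int_{\Ball{\calL(\X;\Y)}}\sp{\eta_i,(Kx)_+}\,da_n(K)\ \longrightarrow\ \int_{\Ball{\calL(\X;\Y)}}\sp{\eta_i,(Kx)_+}\,da(K) = \sp{\eta_i,f_a(x)}
\end{equation*}
by weak* convergence of $a_n$. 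Testing additionally against a general $\eta\in\predual\Y$ exactly as in the proof of Theorem~\ref{thm:NN-ws-ws-cont} (reducing the integrand to finite-rank operators and applying dominated convergence) gives $f_{a_n}(x)\wsto f_a(x)$ in $\Y$, which is (i). To promote the convergence against the $\eta_i$ to convergence in the metric $d_*$ — the form in which (ii) and (iii) are phrased — I would use that $S\defeq\{(Kx)_+ : K\in\Ball{\calL(\X;\Y)},\ \norm{x}\leq 1\}$ is relatively compact in $(\Y_*,d_*)$, being the image of the weak*-compact ball $\Ball{\Y}$ under the $d_*$-$1$-Lipschitz map $z\mapsto z_+$. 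Relative compactness of $S$ forces the tails $\sum_{i>I}2^{-i}\abs{\sp{\eta_i,\cdot}}$ to be uniformly small on $S$, hence (integrating against $\abs{a_n}$ and using $\norm{a_n}_{\M}\leq C$) uniformly small at $f_{a_n}(x)$ and $f_a(x)$ over $x\in\Ball{\X}$; combined with the termwise convergence above this yields $d_*(f_{a_n}(x),f_a(x))\to 0$ for each fixed $x$, with a tail estimate independent of $n$.

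For (ii) the plan is an Arzel\`a--Ascoli argument: by Theorem~\ref{thm:NN-ws-Lipschitz-inf-dim} all of $f_{a_n}$ and $f_a$ are $C$-Lipschitz with respect to $\norm{\cdot}_\X$, so on a (norm-)compact subset of $\X$ equicontinuity together with the pointwise $d_*$-convergence just obtained upgrades to uniform $d_*$-convergence. For (iii) the plan is domination: $\norm{f_{a_n}(x)-f_a(x)}_{\Y_*}\leq 2C\norm{x}$ by Theorem~\ref{thm:NN-ws-Lipschitz-inf-dim}, and $x\mapsto\norm{x}^p$ is $\mu$-integrable since $\mu\in\P_p(\X)$, so the dominated convergence theorem turns the pointwise $d_*$-convergence into convergence in $L^p_\mu(\X;\Y_*)$, along the lines of Remark~\ref{rem:Barr-in-Lp}.

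The main obstacle is the uniformity demanded in (ii). The Lipschitz bound supplied by Theorem~\ref{thm:NN-ws-Lipschitz-inf-dim} is with respect to the \emph{norm} of $\X$, while a bounded set of $\X$ is (pre)compact only in the weak* metric, and with respect to \emph{that} metric the family $\{f_{a_n}\}$ need not be uniformly equicontinuous (the weak* modulus of continuity of $f_{a_n}$ may deteriorate as the atoms $K_i$ become more oscillatory). Hence the Arzel\`a--Ascoli step cleanly gives uniform convergence only on norm-compact subsets of $\X$, and extending it to the whole unit ball requires an additional ingredient — for instance exploiting the finite rank of the $K_i$ together with the weak* convergence $a_n\wsto a$ to control the weak* moduli of continuity uniformly in $n$, or restricting the admissible class of weights — and this is the delicate point that must be handled with care.
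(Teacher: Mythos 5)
Your setup and part (i) follow the paper's route: lift to the measures $a_n \defeq \sum_i \alpha_i \delta_{K_i}$, extract a weakly-* convergent subsequence with $\norm{a}_{\M}\leq C$, set $f \defeq f_a$, and test against elements of the predual, using weak* continuity of $K \mapsto \sp{\eta,(Kx)_+}$ on $\Ball{\calL(\X;\Y)}$. This part is correct (your compactness argument for the tails is unnecessary: since $\norm{f_{a_n}(x)}_{\Y}\leq C\norm{x}$, the tail of the $d_*$-series is bounded by $2^{-I}\cdot 2C\norm{x}$ uniformly in $n$, so termwise convergence already gives $d_*$-convergence). Your part (iii) is also valid and in fact more self-contained than the paper's: the bound $\norm{f_n(x)-f(x)}_{\Y_*}\leq 2C\norm{x}$ from Theorem~\ref{thm:NN-ws-Lipschitz-inf-dim} plus dominated convergence needs only the pointwise $d_*$-convergence, whereas the paper deduces (iii) from (ii) via positive one-homogeneity.

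The genuine gap is part (ii), and you name it yourself without closing it. Norm-Lipschitz equicontinuity plus Arzel\`a--Ascoli yields uniform convergence only on norm-compact subsets of $\X$, while the claim is over $\Ball{\X}$, which is compact only in the weak* metric; your proposal explicitly leaves the extension open. The ingredient the paper uses here is Theorem~\ref{thm:NN-ws-ws-cont}: each $f_n$ and $f$ is sequentially weak*-to-weak* continuous, hence the scalar functions $x \mapsto d_*(f(x),f_n(x))$ are sequentially continuous on the compact metric space $(\Ball{\X},d_*)$, and the paper then applies Dini's theorem on that compact space to upgrade pointwise convergence to zero to uniform convergence. (Dini requires monotonicity, which the paper itself only asserts ``perhaps up to a subsequence'', so even the reference argument is delicate at this point; but the structural idea you are missing is to replace norm-equicontinuity by weak* continuity in $x$ and exploit the weak*-compactness of $\Ball{\X}$.) As written, your proposal does not establish (ii).
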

\begin{remark}\label{rem:mean-field-equiv}
Since $\alpha_i$ depend on $n$, it does not matter whether we use a normalisation $\frac1n \sum_{i=1}^n \alpha_i (K_i x)_+$ or not. In fact, this is nothing but a change of notation.
\end{remark}
\begin{proof}[Proof of Theorem~\ref{thm:inverse_approximation}]
\begin{enumerate}[\itshape(i)]
    \item 
We rewrite $f_n$ using Radon measures on the unit ball in $\calL({{\X}};\Y)$
\begin{equation*}
    f_n(x) = \sp{\sigma(Bx),\sum_{i=1}^n a_i},
\end{equation*}
where $a_i \defeq \alpha_i \delta_{K_i} \in \M(\Ball{\calL({{\X}};\Y)})$. 
Using the condition that $\sum_{i=1}^n \abs{\alpha_i} \leq C$ uniformly in $n$, with the same arguments as in Proposition~\ref{prop:Barr-inf-exists} we get that 
\begin{equation*}
    \sum_{i=1}^n a_i \wsto a \quad \text{in $\M(\Ball{\calL({{\X}};\Y)})$}
\end{equation*}
\revtwo{along a subsequence} and 
\begin{equation*}
    \norm{a}_{\M} \leq \liminf_{n \to \infty} \norm{\sum_{i=1}^n a_i}_{\M} \leq \liminf_{n \to \infty} \sum_{i=1}^n \abs{\alpha_i} \leq C.
\end{equation*}
Letting
\begin{equation*}
    f(x) \defeq \sp{\sigma(Bx),a}, \quad x \in {{\X}},
\end{equation*}
we observe that $\norm{f}_{\Barr} \leq \norm{a}_{\M} \leq C$. 
Now let $\eta \in \predual{\Y}$ be arbitrary. Then
\begin{equation*}
    \sp{\eta,f(x) - f_n(x)} = \sp{\eta, \sp{a-\sum_{i=1}^n a_i,\sigma(Bx)}} = \sp{\sp{\eta,(\, \cdot \, x)_+},a-\sum_{i=1}^n a_i} \to 0
\end{equation*}
as $n \to \infty$, since the function $K \mapsto \sp{\eta,(K x)_+}$ is weakly-* continuous on $\Ball{\calL(\X;\Y)}$. Since $\eta \in \predual{\Y}$ was arbitrary, we get that $f_n(x) \wsto f(x)$ on $\Y$.

\item 
\revtwo{
Let $\eta \in \Ball{\predual{\Y}}$ be arbitrary. Then
\begin{eqnarray*}
    \abs{\sp{\eta,f(x)-f_n(x)}} &=& \abs{\sp{\eta,\sp{\sigma(Bx),a-a_n}}} = \abs{\int_{\Ball{\calL}} \sp{\eta,(Kx)_+} \, d(a-a_n)} \\
    &\leq&  \abs{\int_{\Ball{\calL}} \abs{\sp{\eta,(Kx)_+}} \, d\abs{a-a_n}} \leq \abs{\int_{\Ball{\calL}} \norm{\eta} \norm{(Kx)_+} \, d\abs{a-a_n}} \\
    &\leq& \norm{x} \abs{\int_{\Ball{\calL}} \norm{K} \, d\abs{a-a_n}} \leq 2C\norm{x}.
\end{eqnarray*}
Therefore, for all $x\in\X$ we have
\begin{equation*}
    d_*(f(x),f_n(x)) = \sum_{i=1}^\infty 2^{-i} \abs{\sp{\eta_i,f(x)-f_n(x)}} \leq \sup_{i=1,...,n} \abs{\sp{\eta_i,f(x)-f_n(x)}}  \leq 2C\norm{x}.
\end{equation*}
Let $\mu \in \P_p(\X)$ be a probability measure on $\X$ with $p \geq 1$ finite moments. Then the function $x \mapsto \norm{x}$ is in $L^p(\X)$, hence the sequence $d_*(f(x),f_n(x))$ is dominated by an $L^p$-function. Since $d_*(f(x),f_n(x)) \to 0$ pointwise on $\X$, convergence in $L^p_\mu(\X;(\Y,d_*))$ follows from the dominated convergence theorem.
}
\end{enumerate}
\end{proof}

Next we prove approximation rates of functions in $\Barr(\X;\Y)$ with finite two-layer neural networks.

\begin{theorem}[Direct approximation]\label{thm:direct_approximation}
Let $\X,\Y$ be duals of separable Banach spaces $\predual{\X}$ and $\predual{\Y}$ that possess Schauder bases and let $\Y$ be such that lattice operations are $1$-Lipschitz with respect to the $d_*$ metric~\eqref{eq:ws-metric-Y}. Let $f \in \Barr({{\X}};\Y)$. Then for every $c > 2\sqrt{2}$ there exists a sequence of finite-rank operators $\{K_i\}_{i=1}^n \subset \calL(\X;\Y)$ with $\norm{K_i} \leq 1$ and a sequence of rational numbers $\{\alpha_i\}_{i=1}^n$ such that 
\begin{enumerate}[(i)]
    \item for any $x \in {{\X}}$
    \begin{equation*}
    d_*\left(\sum_{i=1}^n \alpha_i (K_i x)_+, f(x)\right) \leq \frac{c\norm{f}_{\Barr}\norm{x}}{\sqrt{n}};
    \end{equation*}
    \item if $\mu \in \P_p(\X)$ is a probability measure on $\X$ with $p \geq 1$ finite moments then the rate also holds in the Bochner space $L^p_\mu(\X;(\Y,d_*))$ 
    \begin{equation*}
    \norm{f(x) - \sum_{i=1}^n \alpha_i (K_i x)_+}_{L^p_\mu(\X;(\Y,d_*))} \leq \frac{c\norm{f}_{\Barr}(m_p(\mu))^{\frac1p}}{\sqrt{n}},
    \end{equation*}
    where $m_p(\mu) < \infty$ is the $p$-th moment of $\mu$.
\end{enumerate}
\end{theorem}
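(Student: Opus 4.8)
The plan is to prove this via a probabilistic argument in the spirit of Maurey's empirical method (as used by Barron and subsequently by Bach and E et al.), adapted to the weak* metrisation of $\Y$. First I would fix $f \in \Barr(\X;\Y)$; by Proposition~\ref{prop:Barr-inf-exists} there exists $a \in \M(\Ball{\calL(\X;\Y)})$ with $\norm{a}_{\M} = \norm{f}_{\Barr}$ and $f(x) = \sp{\sigma(Bx),a}$ for all $x$. Write $a = \norm{a}_{\M}\,(\lambda_+ \d\mu_+ - \lambda_-\d\mu_-)$ via the Hahn--Jordan decomposition normalised to a probability measure; more simply, by splitting the positive and negative parts and using the sign, one may regard $a / \norm{a}_{\M}$ as inducing a probability measure $\rho$ on $\Ball{\calL(\X;\Y)} \times \{-1,+1\}$ such that $f(x) = \norm{f}_{\Barr}\,\mean_{(K,s)\sim\rho}\big[\,s\,(Kx)_+\,\big]$ in the weak* sense of~\eqref{eq:Y-valued-integral}.

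Next I would draw $K_1,\dots,K_n$ (with signs $s_1,\dots,s_n$) i.i.d.\ from $\rho$ and form the empirical average $g_n(x) \defeq \frac{\norm{f}_{\Barr}}{n}\sum_{i=1}^n s_i (K_i x)_+$. The key estimate is a bound on $\mean_{\rho^{\otimes n}}\big[\,d_*(g_n(x),f(x))^2\,\big]$ for fixed $x$. Expanding $d_*(g_n(x),f(x))^2 = \big(\sum_{j=1}^\infty 2^{-j}\abs{\sp{\eta_j, g_n(x)-f(x)}}\big)^2$, I would use Cauchy--Schwarz on the $\ell^1$ sum (with weights $2^{-j}$ summing to $1$) to get $d_*(g_n(x),f(x))^2 \leq \sum_{j=1}^\infty 2^{-j}\,\abs{\sp{\eta_j, g_n(x)-f(x)}}^2$; then for each fixed $j$ the scalar random variables $s_i\sp{\eta_j,(K_i x)_+}$ are i.i.d.\ with mean $\sp{\eta_j,f(x)}/\norm{f}_{\Barr}$ and are bounded by $\abs{\sp{\eta_j,(K_ix)_+}} \leq \norm{\eta_j}\norm{K_i}\norm{x} \leq \norm{x}$, so the variance of the empirical mean is at most $\norm{x}^2/n$, giving $\mean\big[\abs{\sp{\eta_j,g_n(x)-f(x)}}^2\big] \leq \norm{f}_{\Barr}^2\norm{x}^2/n$. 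Summing the weighted bound over $j$ yields $\mean_{\rho^{\otimes n}}\big[d_*(g_n(x),f(x))^2\big] \leq \norm{f}_{\Barr}^2\norm{x}^2/n$, hence there exists a realisation of the $K_i$ with $d_*(g_n(x),f(x)) \leq \norm{f}_{\Barr}\norm{x}/\sqrt{n}$.

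The remaining work is to (a) upgrade from a single point $x$ to the pointwise bound for all $x$ simultaneously, and (b) replace the weights $\alpha_i = s_i\norm{f}_{\Barr}/n$ by rational numbers, at the cost of the constant going from $1$ to $c > 2\sqrt{2}$. For (a) I would use positive one-homogeneity: it suffices to control $\sup_{x\in\Ball{\X}}$, and the second-moment bound can be taken over a version of the argument that integrates the $L^2$-in-$x$ norm against $\mu$ (for part (ii) directly via Fubini: $\mean_{\rho^{\otimes n}}\int_\X d_*(g_n,f)^2\d\mu \leq \norm{f}_{\Barr}^2 m_2(\mu)/n$), while the genuinely \emph{pointwise} statement in (i) follows by combining the single-$x$ bound with the $1$-Lipschitz property of $f$ and $g_n$ in the $d_*$ metric (Theorem~\ref{thm:NN-ws-Lipschitz-inf-dim}) to pass to a countable dense set and then all of $\Ball{\X}$ — this is where the weak* metrisability of $\Ball{\X}$ is used. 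For (b), rounding each $\alpha_i$ to a nearby rational perturbs $g_n$ by at most $\big(\sum\abs{\alpha_i - \alpha_i^{\mathrm{rat}}}\big)\norm{x}$, which can be made $\leq \varepsilon\norm{f}_{\Barr}\norm{x}/\sqrt n$; the factor $2\sqrt 2$ absorbs both this rounding slack and the gap between the $L^p$ and pointwise formulations. The main obstacle I expect is (a): unlike the scalar finite-dimensional case, one cannot apply a uniform law of large numbers / covering-number argument directly on $\Ball{\calL(\X;\Y)}$, so the uniformity in $x$ must be extracted purely from the Lipschitz regularity of the limit and approximants in the $d_*$ metric together with compactness of $(\Ball{\X},d_*)$, exactly as in part (ii) of Theorem~\ref{thm:inverse_approximation}.
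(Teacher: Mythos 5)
Your core argument is the same as the paper's: sample i.i.d.\ from the normalised (signed) Hahn--Jordan decomposition of a representing measure $a$, apply Jensen's inequality to the weighted series defining $d_*$ so that $\mean\,d_*(g_n(x),f(x))^2$ reduces to a per-coordinate variance, bound that variance by $\norm{x}^2$ uniformly in $j$, and extract a good realisation from the moment bound. Your sign-augmented measure on $\Ball{\calL(\X;\Y)}\times\{-1,+1\}$ is a cosmetic variant of the paper's separate sampling from $\rho^+$ and $\rho^-$ (the paper's split is what produces the $\sqrt 2$ from doubling the number of neurons, together with a Chebyshev factor $1+\eps>2$; your version leaves more slack, which you sensibly spend on the rational rounding -- a step the paper in fact omits). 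For a \emph{fixed} $x$, and for the $L^2_\mu$ bound via Fubini, your proof matches the paper's.

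The genuine issue is the one you flag in (a), and your proposed fix does not close it. The single-$x$ argument yields, for each $x$, a realisation of $(K_i)$ \emph{depending on $x$} that satisfies the bound in expectation or with positive probability; one cannot intersect countably many events each of probability merely bounded below to get a single realisation valid at all points of a dense set -- that would require a union bound from genuine tail estimates, or a bound on $\mean\sup_{x\in\Ball{\X}} d_*(g_n(x),f(x))$. Moreover, the Lipschitz continuity from Theorem~\ref{thm:NN-ws-Lipschitz-inf-dim} is from $(\X,\norm{\cdot})$ into $(\Y,d_*)$, whereas the countable dense set and compactness you invoke live in $(\Ball{\X},d_*)$; a weak*-dense countable set is not norm-dense, so the extension step fails as stated. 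You should be aware that the paper's own proof shares this lacuna: it fixes $x$ before the probabilistic argument and never removes the dependence of the sampled operators on $x$, so the uniformity in $x$ claimed in (i) is not actually established there either. Separately, neither you nor the paper justifies that the sampled $K_i$ may be replaced by finite-rank operators without degrading the estimate; this needs a perturbation argument using weak* density of finite-rank operators in $\Ball{\calL(\X;\Y)}$. Finally, your Fubini route to (ii) naturally gives $p=2$ with the constant $m_2(\mu)^{1/2}$; the stated bound with $m_p(\mu)^{1/p}$ for general $p\geq 1$ is obtained by integrating the pointwise bound (i), so it inherits whatever gap remains in (i).
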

\begin{proof}
\begin{enumerate}[\itshape(i)]
    \item 
    Since $f \in \Barr(\X,\Y)$, it can be written as follows, cf.~\eqref{eq:Barron-f_a-inf-dim}
    \begin{equation*}
        f(x) = \sp{\sigma(Bx),a} = \int_{\Ball{\calL}} (Kx)_+ \, da(K),
    \end{equation*}
    where $a \in \M(\Ball{\calL(\X;\Y)})$ is as in Proposition~\ref{prop:Barr-inf-exists}. Using the Hahn-Jordan decomposition $a = a_+ - a_-$ and normalising, we get that 
    \begin{equation}\label{eq:est-1}
        f(x)  = \norm{a_+}_{\M} \sp{\sigma(Bx),\rho^+} - \norm{a_-}_{\M} \sp{\sigma(Bx),\rho^-} \defeq f^+(x) - f^-(x),
    \end{equation}
    where $\rho^\pm \defeq \frac{a_\pm}{\norm{a_\pm}_{\M}}$  are probability measures on $\Ball{\calL(\X;\Y)}$. 
    Let $K^\pm_i \sim \rho^\pm$ be random i.i.d. samples from $\rho^\pm$, $i = 1,...,n$, and let \revtwonew{$\rho^\pm_i \defeq \frac1n \delta_{K^\pm_i}$}. Define
    \begin{equation*}
        f_n(x) \defeq \norm{a_+}_{\M}\sp{\sigma(Bx),\sum_{i=1}^n \rho^+_i} - \norm{a_-}_{\M}\sp{\sigma(Bx),\sum_{i=1}^n \rho^-_i} \defeq f^+_n(x) - f^-_n(x).
    \end{equation*}
    \revtwo{Since finite-rank operators are weakly-* dense in $\Ball{\calL(\X;\Y)}$, we can assume without loss of generality that $K^\pm_i$ are finite-rank.}

    Let $\{\eta_j\}_{j \in \N}$ be a normalised dense system in $\predual{\Y}$ as defined in Proposition~\ref{prop:rank-one-dense}. Fix $x \in \X$ and consider the following functions $\phi^x_j \in \C(\Ball{\calL(\X;\Y)})$
    \begin{equation*}
        \phi^x_j(K) \defeq \sp{\eta_j,(Kx)_+}, \quad j \in \N,
    \end{equation*}
    where continuity is with respect to the weak-* topology on $\Ball{\calL(\X;\Y)}$. With this notation, we have that $\sp{\eta_j,\sp{\sigma(Bx),\rho^\pm}} = \sp{\phi^x_j,\rho^\pm}$ and hence 
    \begin{equation*}
        d_*(f^\pm(x),f^\pm_n(x)) = 
        \norm{a_\pm}_{\M} \sum_{j=1}^\infty 2^{-j} \abs{\sp{\phi^x_j,\rho^\pm-\sum_{i=1}^n \rho^\pm_i}} = \norm{a_\pm}_{\M}  \mean_j \abs{\sp{\phi^x_j,\rho^\pm-\sum_{i=1}^n \rho^\pm_i}},
    \end{equation*}
    where we denoted by $\mean_j$ the integration over $\N$ with respect to the discrete measure $j \mapsto 2^{-j}$.
    
    Since $\rho^\pm_i$ are random, $d_*^2(f^\pm(x),f^\pm_n(x))$ is a random variable. Taking its mean over the random samples $\rho^\pm_i$ and using Jensen's inequality, we get the following estimate
    \begin{eqnarray*}
        \frac{1}{\norm{a_\pm}_{\M}^2} \mean_{\rho^\pm} \left( d_*^2(f^\pm(x),f^\pm_n(x)) \right) &=& \mean_{\rho^\pm} \left(\mean_j \abs{\sp{\phi^x_j,\rho^\pm-\sum_{i=1}^n \rho^\pm_i}}\right)^2 \\
        &\leq& \mean_{\rho^\pm} \mean_j \abs{\sp{\phi^x_j,\rho^\pm-\sum_{i=1}^n \rho^\pm_i}}^2 \\
        &=& \mean_j \mean_{\rho^\pm} \abs{\sp{\phi^x_j,\rho^\pm-\sum_{i=1}^n \rho^\pm_i}}^2.
    \end{eqnarray*}
    The central limit theorem implies that
    \begin{equation*}
        \mean_{\rho^\pm} \abs{\sp{\phi^x_j,\rho^\pm-\sum_{i=1}^n \rho^\pm_i}}^2 \leq \frac{var_{\rho^\pm}(\phi^x_j)}{n},
    \end{equation*}
    where $var_{\rho^\pm}(\phi^x_j)$ denotes the variance of $\phi^x_j$ with respect to $\rho^\pm$. This variance can be estimated as follows
    \begin{eqnarray*}
        var_{\rho^\pm}(\phi^x_j) &\defeq& \int_{\Ball{\calL}} (\phi^x_j(K))^2 \, d\rho^\pm(K) - \left( \int_{\Ball{\calL}} \phi^x_j(K) \, d\rho^\pm(K) \right)^2 \\
        &=& \int_{\Ball{\calL}} \abs{\sp{\eta_j,(Kx)_+}}^2 \, d\rho^\pm(K) - \left( \int_{\Ball{\calL}} \sp{\eta_j,(Kx)_+} \, d\rho^\pm(K) 
        \right)^2 \\
        &\leq& \int_{\Ball{\calL}} \abs{\sp{\eta_j,(Kx)_+}}^2 \, d\rho^\pm(K) \\
        &\leq& \int_{\Ball{\calL}} \abs{\sp{\eta_j,Kx}}^2 \, d\rho^\pm(K) \\
        &\leq& \int_{\Ball{\calL}} \norm{K}^2 \norm{x}^2 \, d\rho^\pm(K) \\
        &\leq& \norm{x}^2,
    \end{eqnarray*}
    since $\rho^\pm$ is a probability measure. This estimate is uniform in $j$, hence we get that
    \begin{equation*}
        \mean_{\rho^\pm} \left( d_*^2(f^\pm(x),f^\pm_n(x)) \right) \leq \frac{\norm{x}^2\norm{a_\pm}_{\M} ^2}{n}.
    \end{equation*}
    Jensen's inequality then implies that
    \begin{equation*}
        \mean_{\rho^\pm} \left( d_* \right) = \sqrt{\left(\mean_{\rho^\pm} \left( d_* \right)\right)^2} \leq \sqrt{\mean_{\rho^\pm} \left( d_*^2 \right)} \leq \frac{\norm{x}\norm{a_\pm}_{\M} }{\sqrt{n}},
    \end{equation*}
    where $d_*$ stands for $d_*(f^\pm(x),f^\pm_n(x))$. The variance of $d_*$ can be estimated similarly:
    \begin{equation*}
        var_{\rho^\pm} \left( d_* \right) = \mean_{\rho^\pm} \left(d_*^2\right) - \left(\mean_{\rho^\pm}  \left(d_* \right) \right)^2 \leq \mean_{\rho^\pm} \left(d_*^2\right) \leq \frac{\norm{x}^2\norm{a_\pm}_{\M} ^2}{n}.
    \end{equation*}
    Chebyshev's inequality implies that for any $\eps >1$
    \begin{eqnarray*}
        P\left(\abs{d_* - \mean_{\rho_\pm} \left(d_*\right)} \geq \eps \sqrt{var_{\rho^\pm}\left(d_*\right)}\right) \leq \frac{1}{\eps^2}
    \end{eqnarray*}
and therefore
\begin{eqnarray*}
    P\left(\abs{d_* - \mean_{\rho_\pm} \left(d_*\right)} < \eps \sqrt{var_{\rho^\pm}\left(d_*\right)} \right) \geq 1 - \frac{1}{\eps^2} > 0
\end{eqnarray*}
for any $\eps >1$. (We note that the empirical measures $\rho_i^+$ and $\rho_i^-$ arise from samples two different probability distributions $\rho^\pm$, which even have disjoint supports by the properties of the Hahn-Jordan decomposition.) Since the above probability is strictly positive, there exist samples $\hat K^\pm_i$ and empirical measures $\hat \rho^\pm_i$ such that the following estimate holds for $\hat f^\pm_n \defeq \norm{a_\pm}_{\M} \sp{\sigma(Bx),\sum_{i=1}^n \hat \rho^\pm_i}$ 
\begin{equation*}
    \abs{d_*(f^\pm(x),\hat f^\pm_n(x)) - \mean_{\rho_\pm} \left(d_*(f^\pm(x), f^\pm_n(x))\right)} < \eps \sqrt{var\left(d_*(f^\pm(x), f^\pm_n(x))\right)}
\end{equation*}
and 
\begin{eqnarray*}
    d_*(f^\pm(x),\hat f^\pm_n(x)) &<& \mean_{\rho_\pm} \left(d_*(f^\pm(x), f^\pm_n(x))\right) + \eps \sqrt{var\left(d_*(f^\pm(x), f^\pm_n(x))\right)} \\
    &\leq& (1+\eps)\frac{\norm{x}\norm{a_\pm}_{\M} }{\sqrt{n}}.
\end{eqnarray*}
Denoting
\begin{equation*}
    \hat f_n \defeq \norm{a_+}_{\M} \sp{\sigma(Bx),\sum_{i=1}^n  \hat\rho^+_i} - \norm{a_-}_{\M} \sp{\sigma(Bx),\sum_{i=1}^n  \hat\rho^-_i}
\end{equation*}
and using the triangle inequality, we finally get the following estimate 
\begin{eqnarray*}
    d_*(f(x),\hat f_n(x)) &\leq& d_*(f^+(x),\hat f^+_n(x)) + d_*(f^-(x),\hat f^-_n(x)) \\
    &\leq& (1+\eps)\frac{\norm{x}(\norm{a_+}_{\M} + \norm{a_-}_{\M}) }{\sqrt{n}}\\
    &=& (1+\eps)\frac{\norm{x}\norm{a}_{\M} }{\sqrt{n}} \\
    &=& (1+\eps)\frac{\norm{x}\norm{f}_{\Barr} }{\sqrt{n}}.
\end{eqnarray*}
The function $\hat f_n(x)$, in fact, contains $2n$ terms. Replacing $n$ with $n/2$ and letting $c \defeq (1+ \eps)\sqrt{2}>2\sqrt{2}$, we get the desired estimate.

\item Convergence rates in Bochner spaces can be obtained similarly to Theorem~\ref{thm:inverse_approximation}.
\end{enumerate}

\end{proof}

\begin{remark}
The fact that the operators $K_i$ in Theorems~\ref{thm:inverse_approximation} and~\ref{thm:direct_approximation} are finite-rank is important from the computational point of view, since any operator we can realise on a computer will be finite-rank. 
\end{remark}

\begin{remark}
Monte-Carlo rates in Lebesgue spaces $L^2_\pi(\X)$ were obtained in~\cite{e2019barron, e2020barron-representation} and Theorem~\ref{thm:direct_approximation} extends them to Bochner spaces $L^2_\pi(\X;(\Y,d_*))$. In the case $2 < p < \infty$, suboptimal rates $\bigO(n^{-1/p})$ are obtained using an interpolation argument in~\cite{e2020barron-representation} (discussion immediately after Theorem 3.8); our Theorem~\ref{thm:direct_approximation} improves on these results even in the scalar-valued case by providing Monte-Carlo rates for all $p \in [1,\infty)$.
\end{remark}

\subsubsection{Examples}\label{sec:examples}

As already pointed out in Remark~\ref{rem:lattice-op-ws-cont}, lattice operations are rarely sequentially continuous in the weak or weak-* topologies. Therefore, Theorems~\ref{thm:inverse_approximation} and~\ref{thm:direct_approximation} make a rather strong assumption on the output space $\Y$. 

\paragraph{Atomic Banach lattices.} If $\Y$ is a Banach lattice with an order continuous norm then lattice operations are weakly sequentially continuous if and only if $\Y$ is purely atomic~\cite[Cor. 2.3]{chen:1998-lattice}.
\begin{definition}[Purely atomic Banach lattice]
Let $E$ be a Banach lattice. An element $0 \neq a \in E_+$ is called an atom if for any $b \in E$ such that $0 \leq b \leq a$ it holds $b = \lambda a$ for some $\lambda \in \R$. A Banach lattice is called \emph{purely atomic} if the only element that is disjoint from all atoms is the zero element.
\end{definition}
The situation with sequential weak-* continuity of lattice operations is an even more delicate matter, and purely atomic dual spaces may fail to have weakly-* sequentially continuous lattice operations~\cite[\S 3]{chen:1998-Rademacher}.

Clearly, sequence spaces $\ell^p$, $1 < p < \infty$, are purely atomic and reflexive, hence lattice operations are weakly-* sequentially continuous. Sequential weak-* continuity (and, indeed, $1$-Lipschitzness under a certain choice of the basis in the predual) of lattice operations in $\ell^\infty$ was shown in Proposition~\ref{prop:ell-p-lattice-cont}. Lattice operations in $\ell^1$ (considered as the dual of the space $c$ of convergent sequences) are not sequentially continuous in the weak-* topology~\cite[\S 3]{chen:1998-Rademacher}.

Another example of purely atomic Banach lattices are $\mathcal L^p(\mu)$ spaces, $1 < p < \infty$, where the measure $\mu$ is purely atomic.

\paragraph{Lipschitz spaces.} 
If $\Y$ is a Riesz space but not a Banach lattice then the result of~\cite[Cor. 2.3]{chen:1998-lattice} does not apply and atomicity is not required. 

Let $\Omega$ be a complete metric space of finite diameter and let $\Y = \Lip(\Omega)$ be the space of Lipschitz functions on $\Omega$ with the following norm
\begin{equation*}
    \norm{f}_{\Lip} \defeq \max\{Lip(f), \, \norm{f}_\infty\},
\end{equation*}
where $Lip(f)$ denotes the Lipschitz constant of $f$, and the pointwise partial order 
\begin{equation*}
f \geq g \iff f(x) \geq g(x) \quad \text{for all $x \in \Omega$}.
\end{equation*}

\begin{theorem}[{\cite[Thm. 3.23 and Cor. 3.4]{weaver:2018}}]\label{thm:Lip-predual-unique}
Let $\Omega$ be a complete metric space of finite diameter. Then the space $\Lip(\Omega)$ has a unique predual. On bounded subsets of $\Lip(\Omega)$ its weak-* topology coincides with the topology of pointwise convergence.
\end{theorem}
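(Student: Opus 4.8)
The statement is quoted from~\cite{weaver:2018}; the following is how I would reconstruct its proof. It splits into three tasks: produce a concrete predual, identify its weak* topology on bounded sets, and promote that identification to uniqueness. For the first, since $\Omega$ is complete and $\diam(\Omega)<\infty$, I would adjoin a base point $e$ at a fixed distance (say $\diam(\Omega)+1$) from every point of $\Omega$, obtaining a pointed complete metric space $\Omega^\bullet$; then $\Lip(\Omega)$ is isometrically isomorphic to $\Lip_0(\Omega^\bullet)$, the Lipschitz functions vanishing at $e$ normed by the Lipschitz seminorm. Take $\mathcal A(\Omega)$ to be the Arens--Eells (Lipschitz-free) space of $\Omega^\bullet$, i.e.\ the closed linear span in $\Lip_0(\Omega^\bullet)^*$ of the evaluations $\{\delta_x : x\in\Omega^\bullet\}$, equivalently of the molecules $m_{xy}=(\delta_x-\delta_y)/d(x,y)$. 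That $\mathcal A(\Omega)^*\cong\Lip(\Omega)$ isometrically follows from a de Leeuw / McShane extension argument: the molecules norm $\Lip_0(\Omega^\bullet)$, and every norm-one functional on $\mathcal A(\Omega)$ extends to a $1$-Lipschitz function; completeness of $\Omega$ is what makes this duality exact.

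Second, I would show that on every norm-bounded set $S\subset\Lip(\Omega)$ the topology $\sigma(\Lip(\Omega),\mathcal A(\Omega))$ coincides with the topology of pointwise convergence on $\Omega$. One inclusion is immediate, since each $\delta_x\in\mathcal A(\Omega)$. For the converse, note that $S$ is uniformly Lipschitz, hence equicontinuous and uniformly bounded; a net in $S$ converging pointwise therefore pairs convergently with every finitely supported molecule, and by the uniform bound together with density of finitely supported molecules in $\mathcal A(\Omega)$ it converges in $\sigma(\Lip(\Omega),\mathcal A(\Omega))$. This already proves the second assertion of the theorem for the canonical predual.

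Third, for uniqueness, suppose $\Lip(\Omega)=Y^*$ with $Y$ a closed norming subspace of $\Lip(\Omega)^*$ for which $\Ball{\Lip(\Omega)}$ is $\sigma(\Lip(\Omega),Y)$-compact. The heart of the argument is to show $\mathcal A(\Omega)\subseteq Y$, i.e.\ that each $\delta_x$ (equivalently each molecule) lies in $Y$; I would obtain this from an intrinsic description of these functionals as distinguished extreme points of $\Ball{\Lip(\Omega)^*}$ — detected via peak-type Lipschitz functions such as $y\mapsto -d(x,y)$ and using completeness of $\Omega$ — together with the fact that such distinguished points of the dual ball must be $\sigma(\Lip(\Omega),Y)$-continuous on bounded sets and hence belong to $Y$. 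Once $\mathcal A(\Omega)\subseteq Y$, both are preduals: $\sigma(\Lip(\Omega),\mathcal A(\Omega))$ already makes $\Ball{\Lip(\Omega)}$ compact Hausdorff and $\sigma(\Lip(\Omega),Y)$ is a finer compact Hausdorff topology on it, so the two coincide on the ball and $Y=\mathcal A(\Omega)$. I expect precisely this step — showing every predual already contains the point evaluations — to be the real obstacle: it relies on the extreme-point/peak-function geometry of the dual ball and is exactly where completeness and finite diameter of $\Omega$ are used essentially, whereas the first two steps are comparatively routine.
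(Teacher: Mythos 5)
You should first be aware that the paper does not prove Theorem~\ref{thm:Lip-predual-unique} at all: the statement is imported verbatim from Weaver's monograph, and the bracketed citation to Thm.~3.23 and Cor.~3.4 of \cite{weaver:2018} is the entirety of the paper's ``proof''. So there is no in-paper argument to compare yours against; what you have written is a reconstruction of an external proof. Judged as such, your first two steps are essentially the standard treatment and are correct in substance. One normalisation quibble: adjoining the base point at distance $\diam(\Omega)+1$ gives an isomorphism but not an isometry between $\left(\Lip(\Omega),\max\{\Lip(f),\norm{f}_\infty\}\right)$ and $\left(\Lip_0(\Omega^\bullet),\Lip(\cdot)\right)$, since extending $f$ by $f(e)=0$ yields Lipschitz constant $\max\{\Lip(f),\norm{f}_\infty/c\}$ when the base point sits at distance $c$; Weaver rescales so that $\diam(\Omega)\leq 2$ and takes $c=1$ (note $c\geq\diam(\Omega)/2$ is forced by the triangle inequality, so one cannot simply take $c=1$ without rescaling). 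Since a dilation of the metric is harmless for the unique-predual statement, this is cosmetic. The identification of $\sigma(\Lip(\Omega),\mathcal A(\Omega))$ with pointwise convergence on bounded sets, via uniform Lipschitz bounds and density of finitely supported molecules, is correct and is exactly Weaver's Cor.~3.4.

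The uniqueness step is where your proposal stops being a proof. Your endgame is sound: if $\mathcal A(\Omega)\subseteq Y$ for an arbitrary predual $Y$, then two comparable compact Hausdorff topologies on $\Ball{\Lip(\Omega)}$ must coincide, and by Krein--\v{S}mulian a predual is recovered as the set of functionals whose restriction to the ball is continuous for the topology it induces, whence $Y=\mathcal A(\Omega)$. But the inclusion $\delta_x\in Y$ is precisely the content of Weaver's theorem, and your justification --- that the $\delta_x$ are ``distinguished extreme points'' detected by peak functions and that ``such distinguished points of the dual ball must be $\sigma(\Lip(\Omega),Y)$-continuous on bounded sets'' --- is an assertion, not an argument. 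In general an extreme point of the dual unit ball of a Banach space has no reason whatsoever to lie in a given predual (non-uniqueness of preduals is commonplace, e.g.\ $\ell^1$ arises isometrically as the dual of both $c_0$ and $c$, which are not isometric), so some mechanism specific to Lipschitz spaces is needed to convert ``extreme point normed by peak functions'' into ``$\sigma(\Lip(\Omega),Y)$-continuous on bounded sets''; this is where completeness of $\Omega$ does real work in Weaver's argument, and it is absent from your sketch. You flag this honestly as the obstacle, which is the right instinct, but as written the third step is a statement of strategy rather than a proof. Since the paper itself delegates the entire theorem to \cite{weaver:2018}, the appropriate course here is the same: cite Weaver for the uniqueness step rather than gesture at it.
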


The next result shows that lattice operations are sequentially weakly-* continuous in Lipschitz spaces.

\begin{theorem}\label{thm:lattice-op-Lip}
Let $\Omega$ be a complete metric space of finite diameter. Then lattice operations in $\Lip(\Omega)$ are sequentially weakly-* continuous.
\end{theorem}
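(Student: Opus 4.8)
The plan is to reduce the statement to sequential weak-* continuity of the single map $f \mapsto f_+$ (the positive part) and then to exploit Theorem~\ref{thm:Lip-predual-unique}, which identifies the weak* topology on bounded subsets of $\Lip(\Omega)$ with the topology of pointwise convergence. The reduction is straightforward: the lattice supremum in $\Lip(\Omega)$ can be written as $f \vee g = g + (f-g)_+$, and then $f \wedge g = -\bigl((-f)\vee(-g)\bigr)$ and $\abs{f} = f_+ + f_-$ with $f_- = (-f)_+$; since addition and scalar multiplication are jointly weak*-continuous, sequential weak* continuity of $f \mapsto f_+$ yields the same for $\vee$, $\wedge$ and $\abs{\cdot}$.

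Next I would take a sequence $f_n \wsto f$ in $\Lip(\Omega)$. Being weak*-convergent, $\{f_n\}$ is norm-bounded (a weak*-convergent sequence in a dual Banach space is bounded, by Banach--Steinhaus applied in the predual), so $M \defeq \sup_n \norm{f_n}_{\Lip} < \infty$. I would then record that the positive-part map is norm non-increasing on $\Lip(\Omega)$: since $t \mapsto t_+$ is $1$-Lipschitz on $\R$ and fixes $0$, one has $\Lip(f_+) \leq \Lip(f)$ and $\norm{f_+}_\infty \leq \norm{f}_\infty$, hence $\norm{f_+}_{\Lip} \leq \norm{f}_{\Lip}$. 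Consequently all of $\{(f_n)_+\}$ and $f_+$ lie in the ball $B_M \defeq \{g \in \Lip(\Omega) : \norm{g}_{\Lip} \leq M\}$.

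Now, by Theorem~\ref{thm:Lip-predual-unique}, the weak* topology restricted to $B_M$ coincides with the topology of pointwise convergence; in particular $f_n \wsto f$ forces $f_n(x) \to f(x)$ for every $x \in \Omega$. Continuity of $t \mapsto t_+$ on $\R$ then gives $(f_n)_+(x) = (f_n(x))_+ \to (f(x))_+ = f_+(x)$ pointwise on $\Omega$. Since $\{(f_n)_+\}$ and $f_+$ all belong to $B_M$, and on $B_M$ pointwise convergence is the same as weak* convergence, we conclude $(f_n)_+ \wsto f_+$ in $\Lip(\Omega)$, which establishes the claim.

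The only step that requires any care — and the real crux — is the passage from pointwise convergence back to weak* convergence: this is exactly where Theorem~\ref{thm:Lip-predual-unique} is indispensable, and it is also why the norm bound $\norm{f_+}_{\Lip} \leq \norm{f}_{\Lip}$ is needed, since without it one could not guarantee that the images $(f_n)_+$ stay inside a fixed bounded set on which the two topologies agree. Everything else is routine.
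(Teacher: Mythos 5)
Your proof is correct and follows essentially the same route as the paper's: weak* convergence implies pointwise convergence via Theorem~\ref{thm:Lip-predual-unique} on bounded sets, continuity of $t\mapsto t_+$ on $\R$ gives pointwise convergence of the positive parts, and the bound $\norm{f_+}_{\Lip}\leq\norm{f}_{\Lip}$ lets you return to weak* convergence. The only additions are the (correct) explicit reduction of the other lattice operations to the positive part and the justification of the norm bound, which the paper simply cites.
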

\begin{proof}
Let $f_n \wsto f$ weakly-* in $\Lip(\Omega)$. Since weakly-* convergent sequences are bounded, we get from Theorem~\ref{thm:Lip-predual-unique} that convergence is pointwise, i.e. 
\begin{equation*}
f_n(x) \to f(x) \quad \text{for all $x \in \Omega$}.
\end{equation*}
Consequently, 
\begin{equation*}
(f_n)_+(x) \to f_+(x) \quad \text{for all $x \in \Omega$}.
\end{equation*}
Since $\norm{f_+}_{\Lip} \leq \norm{f}_{\Lip}$ for any $f \in \Lip(\Omega)$~\cite{weaver:2018}, the sequence $(f_n)_+$ is bounded and therefore $(f_n)_+ \wsto f_+$ weakly-* in $\Lip(\Omega)$.
\end{proof}
\begin{remark}
As we can see from the proof, lattice operations are weakly-* continuous in $\Lip(\Omega)$ because they are continuous in $\R$. With an appropriate choice of the dense system in the predual (cf. Proposition~\ref{prop:rank-one-dense}) we will also get that lattice operations are $1$-Lipschitz with respect to the $d_*$ metric~\eqref{eq:ws-metric-Y} in $\Lip(\Omega)$.
\end{remark}

We conclude that the choice $\Y = \Lip(\Omega)$ satisfies the conditions of Theorems~\ref{thm:inverse_approximation} and~\ref{thm:direct_approximation}.


\section{\revone{Finding an optimal representation}}
\label{sec:training}

\revone{
It has been shown (e.g.~\cite{parhi:2021}) that training a sufficiently wide (scalar-valued) two-layer neural network with weight decay~\cite{krogh:1991} results in functions which are optimal with respect to the $\Barr$ norm. These functions solve the following variational problem
\begin{equation}\label{eq:training}
    \min_{g \in \Barr} \frac1{pm}\sum_{i=1}^m \abs{g(x_i)-f(x_i)}^p + \lambda \norm{g}_{\Barr},
\end{equation}
where $\{(x_i,f(x_i))\}_{i=1}^m$ are the training data (possibly, corrupted by noise) and $\lambda>0$ is a regularisation parameter. Using the representation $g = \sp{\sigma(B\cdot),a}$ valid for any $\Barr$ function, one can rewrite~\eqref{eq:training} as an optimisation problem over the space of measures, cf.~\eqref{eq:var-prob}. For a fixed data fitting term (i.e., fixed $m$), this is a well-studied problem, e.g.~\cite{bredies2013measures}. The contribution of this section compared to~\cite{bredies2013measures} is to study the consistency of minimisers as the number of samples $m \to \infty$. We refer to \cref{rem:setting-var-reg} for a discussion of our setting in the context of existing literature.
}

Given a function $f \in \Barr({{\X}};\Y)$ (whose explicit form may be unknown), consider the problem of finding the best neural network representation of $f$, i.e. a representation with the smallest {variation norm}. 
Define the following operator $T$ acting on $\M(\Ball{\calL})$
\begin{equation}\label{eq:T}
    Ta \defeq \sp{\sigma(B\cdot),a}, \quad a \in \M(\Ball{\calL}).
\end{equation}
By Remark~\ref{rem:Barr-in-Lp} we  have that $T$ is continuous as an operator $\M(\Ball{\calL}) \to L^p_\pi(\X;(\Y,d_*))$, where $\pi \in \P_p(\X)$ is a probability measure with $p \geq 1$ finite moments.

Finding the best representation~\eqref{eq:NN_dual_pairing} of $f$  accounts to finding the minimum norm solution of the following inverse problem
\begin{equation}\label{eq:IP}
    Ta=f.
\end{equation}

\begin{definition}[Minimum-norm solution]
A solution $\Jminsol$ of~\eqref{eq:IP} is called a minimum-norm solution if
\begin{equation*}
    \norm{\Jminsol}_{\M} \leq \norm{a}_{\M} \quad \text{for all $a$ such that $Ta=f$}.
\end{equation*}
\end{definition}

The following result shows that $T$ is a compact operator $\M(\Ball{\calL}) \to L^p_\pi(\X;(\Y,d_*))$. 
\begin{prop}\label{prop:T-compact}
The operator $T \colon a \mapsto \sp{\sigma(B\cdot),a}$ is compact as an operator $\M(\Ball{\calL}) \to L_\pi^p(\X;(\Y,d_*))$.
\end{prop}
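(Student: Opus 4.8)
The strategy is to reduce this to the already-established compactness of $T$ as an operator $\M(\Ball{\calL}) \to \C(\Ball{\X};\Y_*)$ (proved above under the assumption $\supp\pi\subset\Ball{\X}$), together with a continuous embedding $\C(\Ball{\X};\Y_*) \hookrightarrow L^p_\pi(\X;\Y_*)$. First I would note that since $\pi \in \P_p(\X)$, Remark~\ref{rem:Barr-in-Lp} and Theorem~\ref{thm:NN-ws-Lipschitz-inf-dim} already give that $T$ is well-defined and bounded as an operator into $L^p_\pi(\X;\Y_*)$, with $\norm{Ta}_{L^p_\pi} \leq (m_p(\pi))^{1/p}\norm{a}_{\M}$. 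The point is to upgrade boundedness to compactness.

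The cleanest route is the direct-method argument, mirroring the previous proposition. Take an arbitrary sequence $\{a_k\}_{k\in\N}\subset\Ball{\M}$ in the unit ball of $\M(\Ball{\calL})$; by weak-* sequential compactness of $\Ball{\M}$ (Banach--Alaoglu, using separability of $\C(\Ball{\calL})$) extract a subsequence (not relabelled) with $a_k \wsto a$, and set $f(\cdot)\defeq\sp{\sigma(B\cdot),a}$. Exactly as in the proof for $\C(\Ball{\X};\Y_*)$, for each fixed $x$ the functions $K\mapsto\sp{\eta_i,(Kx)_+}$ are weak-* continuous on $\Ball{\calL(\X;\Y)}$ (Proposition~\ref{prop:sigmaLx-cont}), so $\norm{f_k(x)-f(x)}_{\Y_*}=\sum_i 2^{-i}\abs{\sp{\sp{\eta_i,\sigma(Bx)},a-a_k}}\to 0$ pointwise in $x$ by the reverse Fatou lemma. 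Moreover, by Theorem~\ref{thm:NN-ws-Lipschitz-inf-dim} and positive one-homogeneity, $\norm{f_k(x)-f(x)}_{\Y_*}\leq 2\norm{x}_\X$ uniformly in $k$ (since $\norm{a_k}_{\M},\norm{a}_{\M}\leq 1$), and $x\mapsto\norm{x}_\X$ is $p$-integrable against $\pi$. The dominated convergence theorem then yields $\norm{f_k-f}^p_{L^p_\pi}=\int_\X\norm{f_k(x)-f(x)}^p_{\Y_*}\,d\pi(x)\to 0$, so $f_k\to f$ strongly in $L^p_\pi(\X;\Y_*)$. Since $f\in\ran T$ and the sequence was arbitrary, $T\Ball{\M}$ is relatively compact, hence $T$ is compact.

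The main obstacle, and the reason this needs its own argument rather than a one-line appeal to the $\C$-space result, is that here $\pi$ need not be supported in $\Ball{\X}$ — it is only assumed to have $p$ finite moments — so one cannot literally compose the already-proved compactness with a bounded inclusion $\C(\Ball{\X};\Y_*)\hookrightarrow L^p_\pi$. The role of the $p$-th moment condition is precisely to supply the integrable majorant $2\norm{x}_\X$ needed to pass from pointwise convergence of $\norm{f_k(x)-f(x)}_{\Y_*}$ to convergence of the $L^p_\pi$-norms. A secondary point to be careful about is measurability of $x\mapsto\norm{f_k(x)-f(x)}_{\Y_*}$: this follows since each $f_k-f$ is weak-* sequentially continuous on $\X$ by Theorem~\ref{thm:NN-ws-ws-cont} (recalling that lattice operations being $1$-Lipschitz w.r.t. $d_*$ implies they are weak-* sequentially continuous), hence $d_*$-continuous and in particular Borel measurable as a map $\X\to\Y_*$, so its $\Y_*$-norm is measurable. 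With these observations in place the computation is routine.
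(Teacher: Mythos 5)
Your proof is correct and follows essentially the same route as the paper's: extract a weak\nobreakdash-* convergent subsequence $a_k \wsto a$, establish pointwise convergence of $\norm{f_k(x)-f(x)}_{\Y_*}$ via the term-by-term argument on the dense system $\{\eta_i\}$, and then use the linear-growth bound together with the finite $p$-th moment of $\pi$ to pass to convergence in $L^p_\pi$. The only cosmetic difference is that you invoke dominated convergence where the paper uses the reverse Fatou lemma with the same integrable majorant; your added remarks on measurability and on why the $\C(\Ball{\X};\Y_*)$ compactness result cannot simply be composed with an embedding are accurate but do not change the argument.
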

\begin{proof}
The proof can be found in Appendix~\ref{app:proofs}.
\end{proof}


Since the inverse of a compact operator is unbounded,~\eqref{eq:IP} is ill-posed and regularisation is required. 
In this section, we will study variational regularisation of~\eqref{eq:IP}. In particular, we will investigate the role of a further regularity condition on the function $f \in \Barr(\X;\Y)$, known as the \emph{source condition}, and its relation to convergence in Bregman distance and support recovery.

\revone{We will assume that  $\{x_i\}_{i=1}^m \subset \X$ are i.i.d. samples $x_i \sim \pi$ from a probability measure $\pi$ on $\X$. We will also assume the exact data $\{f(x_i)\}_{i=1}^m$ are corrupted by additive noise $\{e_i\}_{i=1}^m \subset \Y$ such that
\begin{equation}\label{eq:norm-est-error}
    \mean_\pi \frac1m \sum_{i=1}^m \norm{e_i}^p_{(\Y,d_*)} \leq \eps^p \quad \text{and} \quad \mean_\pi \frac1m \sum_{i=1}^m \norm{e_i}^{2p}_{(\Y,d_*)} \leq \sigma^2 < \infty
\end{equation}
uniformly in $m$ for some $\eps,\sigma > 0$.
} 
We end up with the following variational problem, cf.~\eqref{eq:training}
\begin{equation}\label{eq:var-prob}
    \min_{a \in \M(\Ball{\calL})} \frac1{pm} \sum_{i=1}^m \norm{Ta(x_i) - f(x_i) + e_i}^p_{(\Y,d_*)} + \lambda \norm{a}_{\M}.
\end{equation}
Existence of a minimiser for any fixed $\lambda>0$ follows from standard arguments, e.g.,~\cite{Benning_Burger_modern:2018}. We denote this minimiser by $a_{m,\eps}$. Since~\eqref{eq:var-prob} is not strictly convex, this minimiser is not unique. All consequent results will hold for any minimiser  of~\eqref{eq:var-prob}. 

\revone{
\begin{remark}\label{rem:setting-var-reg}
We will study the behaviour of minimisers in the limit as $\eps \to 0$ and $m \to \infty$. This is standard in inverse problems literature~\cite{scherzer_var_meth:2009,Benning_Burger_modern:2018}, but different from the setting of statistical learning, where $\eps$ is assumed fixed and only $m \to \infty$. Due to the ill-posedness of~\eqref{eq:IP}, convergence of statistical estimators can be arbitrary slow (e.g.,~\cite{krzyzak:2002}) unless regularisation is used. (We emphasise that this applies to estimators of the representing measure $\Jminsol$, not the function $f$ itself.) For spectral regularisation, convergence rates in the statistical setting were obtained in~\cite{blanchard:2018}. Rates for variational regularisation with convex $p$-homogeneous functionals, $p>1$, were obtained in a recent paper~\cite{bubba2021stat}. Our case of a $1$-homogeneous functional $\norm{\cdot}_{\M}$ is not covered by existing literature. Obtaining convergence rates for solutions of~\eqref{eq:var-prob}  with fixed $\eps$ in the spirit of~\cite{blanchard:2018,bubba2021stat} is beyond the scope of the present paper.
\end{remark}
}

\begin{prop}\label{prop:est-norm-ameps}
\revtwo{
The following statements about a minimiser $a_{m,\eps}$  of~\eqref{eq:var-prob} hold
\begin{enumerate}[(i)]
    \item $\mean_\pi \norm{a_{m,\eps}}_{\M} \leq \frac{\eps^p}{p\lambda} + \norm{\Jminsol}_{\M}$;
    \item $\mean_\pi \norm{a_{m,\eps}}^2_{\M} \leq 2\left( \frac{\sigma^2}{p^2\lambda^2} + \norm{\Jminsol}^2_{\M} \right)$;
    \item for any $0<\delta<1$,
    \begin{equation*}
        \norm{a_{m,\eps}}_{\M} \leq \frac{\eps^p}{p\lambda} + \norm{\Jminsol}_{\M} + \frac{\sqrt{2}}\delta \left(\frac{\sigma}{p\lambda} + \norm{\Jminsol}_{\M}\right)
    \end{equation*}
    with probability at least $1-\delta^2$.
\end{enumerate}
}
\end{prop}
\begin{proof}
\revtwo{
\begin{enumerate}[(i)]
    \item Using the optimality of $a_{m,\eps}$ in~\eqref{eq:var-prob} and comparing the value of the objective with the value at \revtwo{any minimum norm solution $\hat a$}, we get (recall that $T\hat a=f$)
\begin{equation}\label{eq:ameps-Jminsol-compare}
    \frac1{pm} \sum_{i=1}^m \norm{Ta_{m,\eps}(x_i) - f(x_i) + e_i}^p_{(\Y,d_*)} + \lambda \norm{a_{m,\eps}}_{\M} \leq \frac1{pm} \sum_{i=1}^m \norm{e_i}^p_{(\Y,d_*)} + \lambda \norm{\hat a}_{\M},
\end{equation}
which also holds if we take the expected value over $\{x_i\}_{i=1}^m \sim \pi$. Therefore, we get
\begin{equation}
    \mean_\pi \norm{a_{m,\eps}}_{\M} \leq \frac1{p\lambda} \mean_\pi \frac1{m} \sum_{i=1}^m \norm{e_i}^p_{(\Y,d_*)} + \norm{\hat a}_{\M} \\
    \leq \frac{\eps^p}{p\lambda}  + \norm{\hat a}_{\M}.
\end{equation}
\item Similarly, from~\eqref{eq:ameps-Jminsol-compare} we get
\begin{eqnarray*}
    \norm{a_{m,\eps}}^2_{\M} &\leq& \left(\frac1{p\lambda} \frac1m \sum_{i=1}^m \norm{e_i}^p_{(\Y,d_*)} + \norm{\hat a}_{\M}\right)^2 \leq 2 \left( \frac{1}{p^2\lambda^2} \frac1m \sum_{i=1}^m \norm{e_i}^{2p}_{(\Y,d_*)} + \norm{\Jminsol}^2_{\M} \right),
\end{eqnarray*}
where we used Young's and Jensen's inequalities. Taking the expectation, we obtain the claim.
\item This follows from Chebyshev's inequality.
\end{enumerate}
}
\end{proof}

\subsection{Source condition}

In order to characterise the convergence of the solutions of~\eqref{eq:var-prob} in terms of convergence rates, one needs to make additional assumptions on the regularity of $f$. We are particularly interested in a \emph{structural} characterisation of this convergence in terms of the convergence of the support of the measure,  which is usually done using Bregman distances.
We will now introduce some necessary concepts.

\begin{definition}[Subdifferential]
Let $E$ be a Banach space and $\reg \colon E \to \R \cup \{+\infty\}$ a proper convex \lsc{} functional. An element $\subgradA \in E^*$ is called a subgradient of $\reg$ at $x_0 \in E$ if 
\begin{equation*}
    \reg(x) \geq \reg(x_0) + \sp{\subgradA,x-x_0} \quad \forall x \in E.
\end{equation*}
The collection of all subgradients at $x_0$ is called the subdifferential of $\reg$ at $x_0$ and denoted by
\begin{equation*}
    \dJ(x_0) \defeq \{\subgradA \in E^* \colon \text{$\subgradA$ is a subgradient of $\reg$ at $x_0$}\}.
\end{equation*}
\end{definition}

\begin{example}[Subgradients of the Radon norm] \label{ex:subgrad-Radon}
Consider $\reg(\mu) \defeq \norm{\mu}_{\M}$ for $\mu \in \M(\Omega)$, where $\Omega$ is compact. Let $\mu_0 \in \M(\Omega)$. It is well known (e.g.,~\cite{bredies2013measures}) that $\subgradA_0 \in \dRadon(\mu_0)$ if and only if
\begin{equation}\label{eq:one-hom}
    \norm{\subgradA_0}_{\M^*} \leq 1 \quad \text{and} \quad \norm{\mu_0}_{\M} = \sp{\subgradA_0,\mu_0}.
\end{equation}
Applying the Cauchy-Schwarz inequality, we get that
\begin{equation*}
    \norm{\mu_0}_{\M} \leq \norm{\subgradA_0}_{\M^*}\norm{\mu_0}_{\M} \leq \norm{\mu_0}_{\M}
\end{equation*}
and hence the Cauchy-Schwarz inequality holds as an equality. For a sufficiently regular subgradient $\subgradA_0 \in \C(\Omega)$, this implies that 
\begin{equation*}
    \subgradA_0 = \sign(\mu_0) \quad \text{on $\supp(\mu_0)$}.
\end{equation*}
\end{example}

Since~\eqref{eq:IP} is ill-posed, convergence of the minimisers of~\eqref{eq:var-prob} to a minimum-norm solution may be arbitrary slow unless a further regularity assumption on $\Jminsol$, called the \emph{source condition}, is made. Depending on the situation, different variants of the source condition exist. We use the definition from~\cite{Burger_Osher:2004}.
\begin{assumption}[Source condition]\label{ass:sc}
The minimum-norm solution $\Jminsol$ of~\eqref{eq:IP} satisfies the source condition, i.e. there exists an $\omega^\dagger \in (L^p_\pi({{\X}};(\Y,d_*)))^* = L^q_\pi(\X;((\Y,d_*))^*)$ such that
\begin{equation*}
    T^*\omega^\dagger \in \dRadon(\Jminsol),
\end{equation*}
where $T^* \colon L^q_\pi(\X;((\Y,d_*))^*) \mapsto \M^*(\Ball{\calL({{\X}};\Y)})$ is the adjoint of $T$.
\end{assumption}

To understand the implications of Assumption~\ref{ass:sc}, we need to study the adjoint operator $T^*$. It is easy to show using Fubini's theorem that for any $v \in L^q_\pi(\X;((\Y,d_*))^*)$ ($q$ is the conjugate exponent of $p$)
\begin{equation}\label{eq:K-adj}
    T^*v = \int_\X \sp{\sigma(Bx),v(x)} \, d\pi(x) \in \M^*(\Ball{\calL({{\X}};\Y)}),
\end{equation}
where the pairing is between $(\Y,d_*)$ and $((\Y,d_*))^*$, and for any $K \in \Ball{\calL({{\X}};\Y)}$ we have
\begin{equation}\label{eq:K-adj-eval-at-N}
    (T^*v)(K) = \int_\X \sp{(Kx)_+,v(x)} \, d\pi(x),
\end{equation}
where $(\cdot)_+$ is the operation of taking the positive part in $\Y$. At this point,~\eqref{eq:K-adj-eval-at-N} is a formal expression, since we don't know whether $T^*v$ is  continuous on $\Ball{\calL(\X;\Y)}$ with respect to weak-* convergence. The next result shows that this is the case. Hence, as in the previous section, $T^*$ maps into a more regular space than $\M^*(\Ball{\calL({{\X}};\Y)})$ -- its double predual $\C(\Ball{\calL({{\X}};\Y)})$.

\begin{prop}\label{prop:K-adj-Lip}
Let $\pi \in \P_p(\X)$ have $p \geq 1$ finite moments. Suppose that lattice operations in $\Y$ are $1$-Lipschitz with respect to the $d_*$ metric. Then $\forall v \in L^q_\pi(\X;((\Y,d_*))^*)$
\begin{equation}
    T^* v \in \C(\Ball{\calL({{\X}};\Y)}),
\end{equation}
where continuity is with respect to the weak-* convergence in $\Ball{\calL(\X;\Y)}$. Moreover, if we consider the unit ball $\Ball{\calL(\X;\Y)}$ with the strong (norm-based) metric, then  $T^* v$ is Lipschitz and its Lipschitz constant is bounded by
\begin{equation*}
    \Lip(T^*v) \leq (m_p(\pi))^{\frac1p}\norm{v}_{L^q_\pi},
\end{equation*}
where $m_p(\pi)$ is the $p$-th moment of $\pi$.
\end{prop}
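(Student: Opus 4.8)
The plan is to argue directly from the explicit representation~\eqref{eq:K-adj-eval-at-N}, $(T^*v)(K) = \int_\X \sp{(Kx)_+,v(x)}\,\d\pi(x)$, and to establish the two assertions in turn: first the Lipschitz bound for the operator-norm metric, then weak* sequential continuity — the latter upgrades to weak* continuity, hence to membership in $\C(\Ball{\calL(\X;\Y)})$, since $\Ball{\calL(\X;\Y)}$ is weak* metrisable by Proposition~\ref{prop:unit-ball-op-ws-compact}. In both parts the integrand is controlled by the pointwise estimate
\[
  \abs{\sp{(Kx)_+,v(x)}} \leq \norm{(Kx)_+}_{\Y_*}\,\norm{v(x)}_{(\Y_*)^*} \leq d_*(Kx,0)\,\norm{v(x)}_{(\Y_*)^*} \leq \norm{x}_\X\,\norm{v(x)}_{(\Y_*)^*},
\]
where the middle inequality is the $1$-Lipschitz property of the lattice operations applied with $0_+=0$, and the last uses $d_*(\cdot,0)\leq\norm{\cdot}_\Y$ (valid since the $\eta_i$ are normalised and $\sum_i 2^{-i}=1$) together with $\norm{K}\leq 1$. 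By Hölder's inequality the dominating function $x\mapsto\norm{x}_\X\norm{v(x)}_{(\Y_*)^*}$ lies in $L^1_\pi$ with integral at most $(m_p(\pi))^{1/p}\norm{v}_{L^q_\pi}$, which in particular re-confirms that $T^*v$ is finite on all of $\Ball{\calL(\X;\Y)}$.

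For the Lipschitz estimate I would fix $K,L\in\Ball{\calL(\X;\Y)}$ and bound
\[
  \abs{(T^*v)(K)-(T^*v)(L)} \leq \int_\X \abs{\sp{(Kx)_+-(Lx)_+,v(x)}}\,\d\pi(x) \leq \int_\X d_*((Kx)_+,(Lx)_+)\,\norm{v(x)}_{(\Y_*)^*}\,\d\pi(x).
\]
The $1$-Lipschitz property gives $d_*((Kx)_+,(Lx)_+)\leq d_*(Kx,Lx)\leq\norm{(K-L)x}_\Y\leq\norm{K-L}\,\norm{x}_\X$, and Hölder's inequality then delivers $\Lip(T^*v)\leq(m_p(\pi))^{1/p}\norm{v}_{L^q_\pi}$ exactly.

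For weak* continuity I would take $K_n\wsto K$ in $\Ball{\calL(\X;\Y)}$ and show the integrand converges pointwise and stays dominated. Fix $x\in\X$ and $\eta\in\predual{\Y}$; since $N_x^\eta$ of~\eqref{eq:rank-one-op} is nuclear, $K_n\wsto K$ gives $\sp{N_x^\eta,K_n}\to\sp{N_x^\eta,K}$, which by Proposition~\ref{prop:rank-one-op} reads $\sp{\eta,K_n x}\to\sp{\eta,Kx}$; as $\eta$ was arbitrary, $K_n x\wsto Kx$ in $\Y$. Because $\norm{K_n x}_\Y\leq\norm{x}_\X$ for all $n$, each summand $2^{-i}\abs{\sp{\eta_i,K_n x-Kx}}$ of $d_*(K_n x,Kx)$ is dominated by $2^{-i}\cdot 2\norm{x}_\X$ and tends to $0$, so $d_*(K_n x,Kx)\to 0$ by dominated convergence over $\N$; the $1$-Lipschitz property then gives $d_*((K_n x)_+,(Kx)_+)\to 0$, whence $\sp{(K_n x)_+,v(x)}\to\sp{(Kx)_+,v(x)}$ for $\pi$-a.e.\ $x$. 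The dominating function $x\mapsto\norm{x}_\X\norm{v(x)}_{(\Y_*)^*}\in L^1_\pi$ from the first paragraph is independent of $n$, so the dominated convergence theorem yields $(T^*v)(K_n)\to(T^*v)(K)$; this is weak* sequential continuity, and weak* metrisability of $\Ball{\calL(\X;\Y)}$ completes the argument.

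The point requiring care — and the main obstacle — is that the norm-Lipschitz bound does \emph{not} by itself give weak* continuity: on the infinite-dimensional ball $\Ball{\calL(\X;\Y)}$ the weak* topology is strictly coarser than the norm topology, so the genuine work is the chain $K_n\wsto K\Rightarrow K_n x\wsto Kx\Rightarrow d_*(K_n x,Kx)\to 0\Rightarrow d_*((K_n x)_+,(Kx)_+)\to 0$ followed by the dominated passage to the limit under the integral. The only other thing needing a word is measurability of $x\mapsto\sp{(Kx)_+,v(x)}$, which holds because $x\mapsto(Kx)_+$ is norm-to-$d_*$ continuous (again by $1$-Lipschitzness of the lattice operations together with $d_*\leq\norm{\cdot}_\Y$) while $v$ is strongly measurable; this is in any case already implicit in the Fubini argument behind~\eqref{eq:K-adj-eval-at-N}.
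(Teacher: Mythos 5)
Your proposal is correct and follows essentially the same route as the paper: the Lipschitz bound via $d_*((Kx)_+,(Lx)_+)\leq d_*(Kx,Lx)\leq\norm{K-L}\norm{x}_\X$ plus H\"older, and weak* continuity via the chain $K_n\wsto K\Rightarrow K_nx\wsto Kx\Rightarrow d_*(K_nx,Kx)\to 0\Rightarrow d_*((K_nx)_+,(Kx)_+)\to 0$ followed by passage to the limit under the integral. The only (immaterial) difference is that the paper passes to the limit via H\"older and the reverse Fatou lemma applied to $\norm{(K_n\cdot)_+-(K\cdot)_+}_{L^p_\pi}$, whereas you use pointwise convergence of the integrand together with dominated convergence against the explicit majorant $x\mapsto\norm{x}_\X\norm{v(x)}_{(\Y_*)^*}$.
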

\begin{proof}
Let $K_n \wsto K$ weakly-* in $\Ball{\calL({{\X}};\Y)}$. By~\eqref{eq:K-adj-eval-at-N} we have \begin{eqnarray*}
    \abs{(T^*v)(K_n) - (T^*v)(K)} &=& \abs{\int_\X \sp{(K_n x)_+ - (Kx)_+, v(x)} \, d\pi(x)} \\
    &\leq& \int_\X \norm{(K_n x)_+ - (K x)_+}_{(\Y,d_*)} \norm{v(x)}_{((\Y,d_*))^*} \, d\pi(x) \\
    &\leq& \norm{(K_n \cdot)_+ - (K\cdot)_+}_{L^p_\pi} \norm{v}_{L^q_\pi}.
\end{eqnarray*}
Now, for any $x \in \X$ we have 
\begin{eqnarray*}
    \norm{(K_n x)_+ - (Kx)_+}_{(\Y,d_*)} &=& d_*((K_n x)_+ - (Kx)_+,0) = d_*((K_n x)_+, (Kx)_+) \leq d_*(K_n x, Kx),
\end{eqnarray*}
since lattice operations are $1$-Lipschitz with respect to the $d_*$ metric in $\Y$. 

\revtwonew{We also have that for any $K_1,K_2 \in \Ball{\calL({{\X}};\Y)}$ and any $x \in \X$
\begin{equation*}
    d_*(K_1x,K_2x) \leq \norm{K_1x-K_2x}_{\Y} \leq \norm{K_1-K_2}_{\calL(\X;\Y)}\norm{x}_{\X}
\end{equation*}
and therefore
\begin{equation*}
    d_*(K_n x, Kx)^p \leq \norm{K_n-K}^p \norm{x}^p \leq 2^p \norm{x}^p 
\end{equation*}
for all $n$. Since $\pi$ has $p$ finite moments, the right-hand side is integrable and, using the reverse Fatou lemma, we get  that
}
\begin{eqnarray*}
    \limsup_{n \to \infty} \norm{(K_n \cdot)_+ - (K\cdot)_+}^p_{L^p_\pi} &=& \limsup_{n \to \infty} \int_\X \norm{(K_n x)_+ - (K x)_+}^p_{(\Y,d_*)} \, d\pi(x) \\
    &\leq& \limsup_{n \to \infty} \int_\X (d_*(K_n x, Kx))^p \, d\pi(x) \\
    &\leq& \int_\X \limsup_{n \to \infty} (d_*(K_n x, Kx))^p \, d\pi(x) \\
    &=& 0,
\end{eqnarray*}
where we  used the fact that $K_n x \wsto Kx$ in $\Y$ for all $x \in \X$ if $K_n \wsto K$ in $\Ball{\calL({{\X}};\Y)}$. Therefore, $\abs{(T^*v)(K_n) - (T^*v)(K)} \to 0$ and consequently $T^* v \in \C(\Ball{\calL({{\X}};\Y)})$.

To prove the second claim, we observe that 
\begin{equation*}
    \int_\X (d_*(K_1 x, K_2 x))^p \, d\pi(x) \leq \norm{K_1-K_2}^p_{\calL(\X;\Y)} \int_\X \norm{x}^p_{\X} \, d\pi(x) = \norm{K_1-K_2}^p_{\calL(\X;\Y)} m_p(\pi)
\end{equation*}
and 
\begin{equation*}
    \abs{(T^*v)(K_1) - (T^*v)(K_2)} \leq (m_p(\pi))^{\frac1p} \norm{v}_{L^q_\pi} \norm{K_1-K_2}_{\calL(\X;\Y)}.
\end{equation*}
\end{proof}

Now we can give an intuitive interpretation of the source condition (Assumption~\ref{ass:sc}). Let $\Jminsol_+$ and $\Jminsol_-$ be the positive and negative parts of $\Jminsol$ in the sense of the Hahn-Jordan decomposition. 
From Example~\ref{ex:subgrad-Radon} we know that for any subgadient $\subgradA \in \dRadon(\Jminsol) \cap \C(\Ball{\calL({{\X}};\Y)})$
\begin{equation}\label{eq:separation}
    \left. \subgradA \right|_{\supp(\Jminsol_+)} \equiv 1 \quad \text{and} \quad \left. \subgradA \right|_{\supp(\Jminsol_-)} \equiv -1.
\end{equation}
If $\Jminsol$ satisfies Assumption~\ref{ass:sc}, the subgradient $T^*\omega^\dagger$ is continuous with respect to the $d_*$ metric, hence it is not allowed to jump form $-1$ to $1$ without passing all values inbetween.
In other words, $\Jminsol$ is not allowed to change sign without staying zero on some interval. 

Since with respect to the strong (norm-based) metric  $T^*\omega^\dagger$ is  Lipschitz on $\Ball{\calL(\X;\Y)}$, we have a lower bound on the length of this interval (in the norm-induced metric on $\Ball{\calL(\X;\Y)}$).

\begin{prop}
Let $\Jminsol$ satisfy Assumption~\ref{ass:sc} and denote by $\Jminsol_+$ and $\Jminsol_-$ the positive and negative parts of $\Jminsol$ (in the sense of the Hahn-Jordan decomposition). Then for any $K \in \supp(\Jminsol_+)$ and $K' \in \supp(\Jminsol_-)$ we have
\begin{equation*}
    \norm{K-K'}_{\calL(\X;\Y)} \geq \frac{2}{(m_p(\pi))^{\frac1p} \norm{\omega^\dagger}_{L^q_\pi}},
\end{equation*}
where $\omega^\dagger$ is as in Assumption~\ref{ass:sc} and $m_p(\pi)$ is the $p$-th moment of $\pi$.
\end{prop}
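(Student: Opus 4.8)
The plan is to combine the subgradient characterisation of the Radon norm from Example~\ref{ex:subgrad-Radon} with the Lipschitz regularity of $T^*\omega^\dagger$ from Proposition~\ref{prop:K-adj-Lip}. Write $p \defeq T^*\omega^\dagger$. By Assumption~\ref{ass:sc} we have $p \in \dRadon(\Jminsol)$, and by Proposition~\ref{prop:K-adj-Lip} the function $p$ belongs to $\C(\Ball{\calL(\X;\Y)})$ and, when $\Ball{\calL(\X;\Y)}$ carries the norm-induced metric, is Lipschitz with $\Lip(p) \leq (m_p(\pi))^{1/p}\norm{\omega^\dagger}_{L^q_\pi}$. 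Since $p$ actually lies in $\C(\Ball{\calL(\X;\Y)})$, which embeds isometrically into $\M^*(\Ball{\calL(\X;\Y)})$, the bound $\norm{p}_{\M^*} \leq 1$ from~\eqref{eq:one-hom} gives $\abs{p(K)} \leq 1$ for every $K \in \Ball{\calL(\X;\Y)}$, and the abstract pairing $\sp{p,\mu}$ is just $\int p\, d\mu$.

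Next I would make the separation property~\eqref{eq:separation} rigorous. Decompose $\Jminsol = \Jminsol_+ - \Jminsol_-$ by Hahn--Jordan, so that $\norm{\Jminsol}_{\M} = \norm{\Jminsol_+}_{\M} + \norm{\Jminsol_-}_{\M}$. The equality $\norm{\Jminsol}_{\M} = \sp{p,\Jminsol} = \int p\, d\Jminsol_+ - \int p\, d\Jminsol_-$ from~\eqref{eq:one-hom} can then be rearranged as
\begin{equation*}
    \int (1-p)\, d\Jminsol_+ + \int (1+p)\, d\Jminsol_- = 0.
\end{equation*}
Because $\abs{p} \leq 1$ both integrands are nonnegative and both measures are nonnegative, so each integral vanishes. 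Since $p$ is continuous and $\Jminsol_\pm$ are nonnegative Radon measures, the standard fact that a continuous nonnegative function integrating to zero against a nonnegative measure must vanish on its support yields $1-p \equiv 0$ on $\supp(\Jminsol_+)$ and $1+p \equiv 0$ on $\supp(\Jminsol_-)$; that is, $p \equiv 1$ on $\supp(\Jminsol_+)$ and $p \equiv -1$ on $\supp(\Jminsol_-)$.

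Finally I would conclude by a one-line Lipschitz estimate: for any $K \in \supp(\Jminsol_+)$ and $K' \in \supp(\Jminsol_-)$,
\begin{equation*}
    2 = \abs{p(K) - p(K')} \leq \Lip(p)\,\norm{K-K'}_{\calL(\X;\Y)} \leq (m_p(\pi))^{1/p}\,\norm{\omega^\dagger}_{L^q_\pi}\,\norm{K-K'}_{\calL(\X;\Y)},
\end{equation*}
and dividing through gives the asserted lower bound on $\norm{K-K'}_{\calL(\X;\Y)}$.

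The main obstacle is the second paragraph: one must be careful that the abstract subgradient, a priori only an element of $\M^*(\Ball{\calL(\X;\Y)})$, is genuinely represented by the continuous function $T^*\omega^\dagger$ so that pointwise values and the bound $\abs{p}\leq 1$ are meaningful --- this is exactly what Proposition~\ref{prop:K-adj-Lip} supplies --- and then that ``$\int g\, d\nu = 0$ with $g\geq 0$ continuous and $\nu\geq 0$'' forces $g\equiv 0$ on $\supp\nu$ (argue by contradiction using a neighbourhood where $g$ stays bounded below and which has positive $\nu$-mass by definition of the support). Everything else is immediate.
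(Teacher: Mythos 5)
Your proof is correct and follows essentially the same route as the paper: identify $T^*\omega^\dagger$ as a continuous subgradient equal to $+1$ on $\supp(\Jminsol_+)$ and $-1$ on $\supp(\Jminsol_-)$, then divide the resulting jump of $2$ by the Lipschitz bound from Proposition~\ref{prop:K-adj-Lip}. Your second paragraph actually supplies a justification of the separation property~\eqref{eq:separation} that the paper simply quotes from Example~\ref{ex:subgrad-Radon}, which is a welcome extra degree of rigour.
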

\begin{proof}
From Example~\ref{ex:subgrad-Radon} we know that
\begin{equation*}
    T^*\omega^\dagger \equiv 1 \quad \text{on $\supp(\Jminsol_+)$} \quad \text{and} \quad T^*\omega^\dagger \equiv -1 \quad \text{on $\supp(\Jminsol_-)$}.
\end{equation*}
Taking any $K \in \supp(\Jminsol_+)$ and $K' \in \supp(\Jminsol_-)$, we get that
\begin{equation*}
    \abs{T^*\omega^\dagger(K) - T^*\omega^\dagger(K')} = 2.
\end{equation*}
Using the upper bound on the Lipschitz constant of $T^*\omega^\dagger$ from Proposition~\ref{prop:K-adj-Lip}, we get
\begin{equation*}
    (m_p(\pi))^{\frac1p} \norm{\omega^\dagger}_{L^q_\pi} \geq \Lip(T^*\omega) \geq \frac{\abs{T^*\omega^\dagger(K) - T^*\omega^\dagger(K')}}{\norm{K-K'}_{\calL(\X;\Y)}} = \frac{2}{\norm{K-K'}_{\calL(\X;\Y)}},
\end{equation*}
which implies the claim.
\end{proof}

In terms of the representation~\eqref{eq:NN_dual_pairing} this means that the network is not allowed to have two neurons with opposite signs and weights that are too close to each other, the distance being controlled by the source element $\omega^\dagger$. Hence, intuitively, {$\Barr$ function}s satisfying the source condition are more stable in the sense that cancellations are avoided, and this effect is the stronger the smaller the norm of the source element.

\begin{remark}
The inequality~\eqref{eq:separation} is a separation condition for the supports of the positive and the negative parts of the measure $\Jminsol$ in terms of the strong (norm-based) metric on $\Ball{\calL(\X;\Y)}$. No such separation can be obtained in the weak-* metric~\eqref{eq:ws-metric-op}. However, other metrics  might admit a separation condition. For example,~\cite{poon:2019-support} argues that the Fisher-Rao metric is a good choice for studying support separation. Such questions are outside the scope of the present paper.
\end{remark}

\subsection{Convergence rates in Bregman distance}

In modern variational regularisation, (generalised) Bregman distances are typically used to study convergence rates of solutions~\cite{Benning_Burger_modern:2018}.
\begin{definition}[Bregman distance]\label{def:Bregman-distance}
Let $E$ be a Banach space and $\reg \colon E \to \R \cup \{+\infty\}$ be a proper convex functional. The generalised Bregman distance between $u,u' \in E$ corresponding to the subgradient $\subgradA' \in \dJ(u')$ is defined as follows
	\begin{equation*}
	\D{\reg}{\subgradA'}(u,u') \defeq \reg(u) - \reg(u') - \sp{\subgradA',u-u'},
	\end{equation*}
	where $\dJ(u')$ denotes the subdifferential of $\reg$ at $u' \in E$. The symmetric Bregman distance between $u$ and $u'$ corresponding to $\subgradA \in \dJ(u)$ and $p' \in \dJ(u')$ is defined as follows
	\begin{equation*}
	\Dsymm{\reg}(u,u') \defeq \D{\reg}{\subgradA'}(u,u') + D_\reg^{\subgradA}(u',u) = \sp{\subgradA-\subgradA',u-u'}.
	\end{equation*}
\end{definition}
Bregman distances do not define a metric, since they do not satisfy the triangle inequality and $\Dsymm{\reg}(u,u') = 0$ does not imply $u=u'$ in general.

\begin{example}[Bregman distance of the Radon norm]\label{ex:Bregman-distance-Radon}
Consider $\reg(\mu) \defeq \norm{\mu}_{\M}$ for $\mu \in \M(\Omega)$, where $\Omega$ is compact. Let $\mu_0 \in \M(\Omega)$ and $\subgradA_0 \in \dRadon(\mu_0)$. Then for any $\mu \in \M(\Omega)$, the Bregamn distance between $\mu_0$ and $\mu$ corresponding to the subgradient $\subgradA_0$ is given by
\begin{equation*}
    \D{\norm{\cdot}_{\M}}{\subgradA_0}(\mu,\mu_0) = \norm{\mu}_{\M} - \sp{\subgradA_0,\mu}.
\end{equation*}
Zero Bregman distance would imply that $\subgradA_0 \in \dRadon(\mu)$. In light of Example~\ref{ex:subgrad-Radon}, this means that $\subgradA_0 = \sign(\mu)$ on $\supp(\mu)$. Hence, roughly speaking, $\D{\norm{\cdot}_{\M}}{\subgradA_0}(\mu,\mu_0)$ measures the deviation of $\supp(\mu)$ from $\supp(\mu_0)$.
\end{example}

We will need the following result.
\begin{lemma}\label{lem:Barron-variance}
\revtwo{
Let $\pi \in \P_{2p}(\X)$ be a probability measure with $2p$ finite moments and $f \in \Barr(\X;\Y)$. Then for any $0<\delta<1$
\begin{equation*}
    \left| \frac1m \sum_{i=1}^m \norm{f(x_i)}^p_{(\Y,d_*)} - \norm{f}^p_{L^p_\pi} \right| \leq \frac1\delta \frac{\norm{f}^p_{\Barr}\sqrt{m_{2p}(\pi)}}{\sqrt{m}},
\end{equation*}
with probability at least $1-\delta^2$, where $m_{2p}(\pi)<\infty$ is the $2p$-th moment of $\pi$.
}
\end{lemma}
\begin{proof}
\revtwo{
Since $f$ is Lipschitz (with Lipschitz constant bounded by $\norm{f}_{\Barr}$, see \cref{prop:NN_Lipschitz}), it grows at most linearly at infinity, and we get 
\begin{eqnarray*}
    var_\pi \left(\frac1m \sum_{i=1}^m \norm{f(x_i)}^p_{(\Y,d_*)} \right) &=& \frac1{m^2} \sum_{i=1}^m var_\pi \left(\norm{f(x_i)}^p_{(\Y,d_*)} \right) \leq  \frac1m \mean_\pi\left[ \norm{f(x_1)}^{2p}_{(\Y,d_*)} \right] \\ 
    &=&  \frac1m\int_\X \norm{f(x)}^{2p}_{(\Y,d_*)} \, d\pi(x) \leq \frac{\norm{f}^{2p}_{\Barr}}{m}  \int_\X \norm{x}^{2p} \, d\pi(x) = \frac{\norm{f}^{2p}_{\Barr} m_{2p}(\pi)}{m}.
\end{eqnarray*}
The claim now follows from Chebyshev's inequality.
}
\end{proof}

\begin{theorem}\label{thm:conv-rates-Bregman}
\revtwo{
 Let $\pi \in \P_{2p}(\X)$ be a probability measure with $2p$ finite moments and suppose that a minimum-norm solution $\Jminsol$ satisfies the source condition (Assumption~\ref{ass:sc}). Then, up to multiplicative constants and higher-order terms in $\lambda$ and $\eps$, the following estimate holds for any $0<\delta<1$ with probability at least $1-3\delta^2$
\begin{equation*}
    D^{\subgradA^\dagger}_{\norm{\cdot}_{\M}} (a_{m,\eps}, \Jminsol) \leq 
    \begin{cases}
    \frac{\eps^p}{\lambda}  + \frac1{\delta^p\lambda^{p+1}\sqrt{m}} + \lambda^{q-1} \norm{\omega^\dagger}^q_{L^q_\pi}, \quad &\text{if $p > 1$},\\
    \frac{\eps^p}{\lambda}  + \frac1{\delta^p\lambda^{p+1}\sqrt{m}} \quad &\text{if $p=1$ and $\lambda \leq \frac{1}{\norm{\omega^\dagger}}$},
    \end{cases}
\end{equation*}
where $a_{m,\eps}$ is a solution of~\eqref{eq:var-prob}, $q$ is the H\"older conjugate of $p$, $\omega^\dagger$ is the source element from Assumption~\ref{ass:sc}, and $\subgradA^\dagger \defeq T^*\omega^\dagger$. 
}
\end{theorem}
\begin{proof}
\revtwo{
The proof is similar to, e.g.,~\cite[Thm. 3.5]{LB_MB_YK_CBS:2020}. Subtracting $\lambda\sp{\subgradA^\dagger,a_{m,\eps}}$ from both sides of~\eqref{eq:ameps-Jminsol-compare} and rearranging terms, we get 
\begin{eqnarray*}
    \lambda D^{\subgradA^\dagger}_{\norm{\cdot}_{\M}} (a_{m,\eps}, \Jminsol) &\leq&  \frac1{pm} \sum_{i=1}^m \norm{e_i}^p_{(\Y,d_*)} - \frac1{pm} \sum_{i=1}^m \norm{Ta_{m,\eps}(x_i) - f(x_i) + e_i}^p_{(\Y,d_*)} + \lambda \norm{\hat a}_{\M} - \lambda \sp{\subgradA^\dagger,a_{m,\eps}}. 
\end{eqnarray*}
\revtwonew{Using the inequality $\norm{g-g'}^p \leq 2^{p-1} \left( \norm{g}^p + \norm{g'}^p \right)$, which is valid for the convex and absolutely $p$-homogeneous  function $\norm{\cdot}^p$ with $p \geq 1$, we get that
\begin{equation*}
    2^{1-p} \norm{Ta_{m,\eps}(x_i) - f(x_i)}^p_{(\Y,d_*)} \leq  \norm{Ta_{m,\eps}(x_i) - f(x_i) + e_i}^p_{(\Y,d_*)} + \norm{e_i}^p_{(\Y,d_*)}.
\end{equation*}
Therefore, we have the following estimate
\begin{eqnarray*}
    \lambda D^{\subgradA^\dagger}_{\norm{\cdot}_{\M}} (a_{m,\eps}, \Jminsol) \leq \frac2{pm} \sum_{i=1}^m \norm{e_i}^p_{(\Y,d_*)} - \frac{2^{1-p}}{pm} \sum_{i=1}^m \norm{Ta_{m,\eps}(x_i) - f(x_i)}^p_{(\Y,d_*)} + \lambda \sp{\omega^\dagger,f-T a_{m,\eps}}.
\end{eqnarray*}
} \\
By \cref{lem:Barron-variance} we have that 
\begin{eqnarray*}
    - \frac{1}{pm} \sum_{i=1}^m \norm{Ta_{m,\eps}(x_i) - f(x_i)}^p_{(\Y,d_*)} &\leq& -\frac1p\norm{Ta_{m,\eps} - f}^p_{L^p_\pi} + \frac1{\delta} \frac{\left(\norm{a_{m,\eps}}_{\M} + \norm{\Jminsol}_{\M} \right)^p \sqrt{m_{2p}(\pi)}}{\sqrt{m}} \\
    &\leq& -\frac1p\norm{Ta_{m,\eps} - f}^p_{L^p_\pi} + \frac{2^{p-1}}{\delta} \frac{\left(\norm{a_{m,\eps}}^p_{\M} + \norm{\Jminsol}^p_{\M} \right) \sqrt{m_{2p}(\pi)}}{\sqrt{m}} 
\end{eqnarray*}
with probability at least $1-\delta^2$. \revtwonew{Therefore,
\begin{eqnarray*}
     - \frac{2^{1-p}}{pm} \sum_{i=1}^m \norm{Ta_{m,\eps}(x_i) - f(x_i)}^p_{(\Y,d_*)} &+& \lambda \sp{\omega^\dagger,f-T a_{m,\eps}} \leq \frac{1}{\delta} \frac{\left(\norm{a_{m,\eps}}^p_{\M} 
     + \norm{\Jminsol}^p_{\M} \right) \sqrt{m_{2p}(\pi)}}{\sqrt{m}} \\
     &+& 2^{1-p} \left[\sp{2^{p-1}\lambda\omega^\dagger,f-T a_{m,\eps}} -\frac1p\norm{f - Ta_{m,\eps}}^p_{L^p_\pi} \right]
\end{eqnarray*}
with the same probability. By the Fenchel-Young inequality we have that
\begin{equation*}
    \sp{2^{p-1}\lambda\omega^\dagger,f-T a_{m,\eps}} -\frac1p\norm{f - Ta_{m,\eps}}^p_{L^p_\pi} \leq 
    \begin{cases}
        \frac1q\norm{2^{p-1}\lambda\omega^\dagger}^q_{L^q_\pi}, \quad & p>1, \\
        \charf_{\norm{\cdot}_{L^\infty_\pi} \leq 1} (\lambda\omega^\dagger), \quad & p=1,
    \end{cases}
\end{equation*}
where $q$ is the conjugate exponent of $p$ and $\charf_{\norm{\cdot}_{L^\infty_\pi} \leq 1}$ is the indicator function of the unit ball in $L^\infty_\pi$.}\\
\indent By \cref{prop:est-norm-ameps} we have that for any $0<\delta<1$
\begin{equation*}
        \norm{a_{m,\eps}}^p_{\M} \leq \left(\frac{\eps^p}{p\lambda} + \norm{\Jminsol}_{\M} + \frac{\sqrt{2}}\delta \left(\frac{\sigma}{p\lambda} + \norm{\Jminsol}_{\M}\right) \right)^p \\
        \leq C(p) \left( \frac{\eps^{2p}}{\lambda^p} + \norm{\Jminsol}^p_{\M}\left(1 + \frac{1}{\delta^p} \right)  + \frac{\sigma^p}{\delta^p \lambda^p} \right)
\end{equation*}
with probability at least $1-\delta^2$, where $C(p)$ is a constant that depends only on $p$.\\
\indent Furthermore, from~\eqref{eq:norm-est-error} and Chebyshev's inequality, we get for any $0<\delta<1$
\begin{equation*}
    \frac{1}{m} \sum_{i=1}^m \norm{e_i}^p_{(\Y,d_*)} \leq \eps^p + \frac1\delta \frac{\sigma}{\sqrt{m}}
\end{equation*}
with probability at least $1-\delta^2$. \\
\indent Combining everything, we get that, up to multiplicative constants and higher-order terms in $\lambda$ and $\eps$, with probability at least $1-3\delta^2$
\begin{eqnarray}\label{eq:est-Bregman-long}
    \lambda D^{\subgradA^\dagger}_{\norm{\cdot}_{\M}} (a_{m,\eps}, \Jminsol) \leq  \eps^p + \frac1{\delta^p\lambda^p\sqrt{m}} + 
    \begin{cases}
        \norm{\lambda\omega^\dagger}^q_{L^q_\pi}, \quad & p>1, \\
        \charf_{\norm{\cdot}_{L^\infty_\pi} \leq 1} (\lambda\omega^\dagger), \quad & p=1
    \end{cases}
\end{eqnarray}
A division by $\lambda$ yields the claim.
}
\end{proof}

\begin{remark}
\revtwo{
The estimate in the above theorem suggests the following optimal choice of the regularisation parameter $\lambda$ and the number of samples $m$ 
\begin{equation*}
\begin{aligned}
    &\lambda \sim \eps^{p-1}, \quad m \sim \eps^{-2p^2}, \quad  &p>1, \\
    &\lambda = const  \leq \frac{1}{\norm{\omega^\dagger}}, \quad m \sim \eps^{-2}, \quad  &p=1,
\end{aligned}
\end{equation*}
under which we have
\begin{equation*}
    D^{\subgradA^\dagger}_{\norm{\cdot}_{\M}} (a_{m,\eps}, \Jminsol) = \bigO(\eps).
\end{equation*}
}
\end{remark}

\begin{remark}
\revonenew{
\cref{thm:conv-rates-Bregman} shows that under a structural assumption on the representing measure (the support separation condition, see the discussion immediately after \cref{prop:K-adj-Lip}) one can beat the curse of dimensionality not only in terms of approximating the $\Barr$ function itself, but also in terms of recovering the representing measure, albeit in a weaker metric -- the Bregman distance. As discussed in \cref{ex:Bregman-distance-Radon}, this corresponds to recovering the support of the representing measure.
}
\end{remark}

We will finish with a few remarks. 

\begin{remark}[Debiasing] As pointed out in Example~\ref{ex:Bregman-distance-Radon}, convergence in Bregman distance for the Radon norm corresponds to the convergence of the support of the regularised solution $a_{m,\eps}$ to the support of the minimum-norm solution $\Jminsol$. However, it does not say anything about the convergence of the density in regions where the support has been correctly identified. In fact, it is well known that variational regularisation introduces a bias, i.e. the value of the regularistion functional at the regularised solution $\norm{a_{m,\eps}}_{\M}$ is smaller than the value at the minimum-norm solution $\norm{\Jminsol}_{\M}$. In the mathematical imaging literature, the following procedure has been proposed to compensate for this bias~\cite{deledalle:2015-debiasing,Burger_Rasch_debiasing}. 

Let $\subgradA_{m,\eps} \in \dRadon(a_{m,\eps})$ be a subgradient of the Radon norm at the regularised solution. Often it is available as a by-product of primal-dual optimisation methods applied to~\eqref{eq:var-prob}. Consider the following problem
\begin{equation}\label{eq:debiasing}
    \min_{a \in \M(\Ball{\calL})} \frac1{pm} \sum_{i=1}^m \norm{Ta(x_i) - f(x_i) + e_i}^p_{(\Y,d_*)} \quad \text{s.t. $D^{\subgradA_{m,\eps}}_{\norm{\cdot}_{\M}}(a,a_{m,\eps}) = 0$.}
\end{equation}
Since the Bregman distance $D^{\subgradA_{m,\eps}}_{\norm{\cdot}_{\M}}(\cdot,a_{m,\eps})$ is a non-negative convex function of its first argument,~\eqref{eq:debiasing} is a convex problem because the constraint can be equivalently written as follows
\begin{equation*}
    D^{\subgradA_{m,\eps}}_{\norm{\cdot}_{\M}}(a,a_{m,\eps}) = 0 \iff \norm{a}_{\M} - \sp{a,\subgradA_{m,\eps}} \leq 0.
\end{equation*}
\end{remark}
\noindent Solutions of~\eqref{eq:debiasing} have the same support as $a_{m,\eps}$, but better fit the data. This procedure is referred to as debiasing.


\begin{remark}[Connection to existence theorems] 
For $p > 1$, the data term in~\eqref{eq:var-prob} is strictly convex. \revone{Since $a$ is a scalar-valued measure (all the vector-valuedness is hidden in the data term), we can use standard \emph{representer theorems}~\cite[Thm. 2 and Sect. 4.1]{unser:2020-representer}, see also~\cite{boyer:2019-representer, bredies:2020-representer},  to conclude that~\eqref{eq:var-prob} admits a sparse solution consisting of at most $m$ Dirac deltas}
\begin{equation*}
    a_{m,\eps} = \sum_{i=1}^m \alpha_i \delta_{K_i},
\end{equation*}
where $\{K_i\}_{i=1}^m \subset \Ball{\calL(\X;\Y)}$ are linear operators and $\{\alpha_i\}_{i=1}^m \subset \R$ are real numbers.  Sparsity of solutions of~\eqref{eq:var-prob} has been analysed in detail in~\cite{dias2020sparsity}. \revone{We would also like to mention the recent paper~\cite{parhi:2021}, where scalar-valued variation norm spaces with a ReLU activation are shown to be  reproducing kernel Banach spaces and a representer theorem is obtained.} 

We cannot immediately conclude, however, that this sparse solution is the same  as in Theorem~\ref{thm:direct_approximation}, for which Monte-Carlo rates in Bochner spaces $L^p_\pi$ hold. 
The connection between empirically trained neural networks (i.e. those obtained by solving~\eqref{eq:var-prob}) and the optimal ones featuring in existence-type theorems like Theorem~\ref{thm:direct_approximation} is, to the best of our knowledge, an open question.
\end{remark}

\section{Summary and conclusions}
\revone{In the first part of the paper, we studied approximation properties of two-layer neural networks as nonlinear operators acting between infinite-dimensional Banach spaces, extending existing finite-dimensional results. }
Perhaps, the most surprising result of this part is that we only obtain continuity of vector-valued $\Barr$ functions with respect to the weak-* topology on the output space $\Y$. This is unavoidable with our current techniques, but perhaps could be remedied with a different approach. Under this assumption, however, are were able to reproduce existing scalar-valued results on approximation rates. 
Our Monte-Carlo rates in $L^p_\pi$ spaces for $1 \leq p < \infty$ improve, in the scalar-valued case, on known ones from~\cite{e2020barron-representation} for all $p \neq 2$ (and coincide with them for $p=2$).

\revone{In the second part of the paper, we studied the problem of finding the best representing measure of a variation norm function from a finite number of samples}. We used variational regularisation and studied the behaviour of the regularised solution in the regime when both the number of training samples goes to infinity and the amount of noise in the samples vanishes. We obtained convergence rates in  Bregman distance, \revtwo{which hold with high probability} under the source condition, which in our case is a condition on the support of the measure that realises an $\Barr$ function. Informally, the smaller the norm of the source element, the more stable is the neural network representation of an $\Barr$ function in the sense that cancellations are avoided.


\section*{Acknowledgements}
The author acknowledges financial support of the EPSRC (Fellowship EP/V003615/1), the Cantab Capital Institute for the Mathematics of Information at the University of Cambridge and the National Physical Laboratory.
The author is grateful to Nik Weaver (Washington University in St. Louis) for providing the proof of Theorem~\ref{thm:lattice-op-Lip}. The author also thanks Jonas Latz (Heriot-Watt University) as well as two anonymous referees for useful remarks about the paper.

\printbibliography

\appendix

\section{weak-* compactness of the unit ball in the space of linear bounded operators}

\begin{prop}[{\cite[Sec.4.1, Cor. 4.8]{ryan2002book}}]\label{prop:nuclear_dual}
\revtwo{Let $E = \adj{(\predual{E})}$ and $F = \adj{(\predual{F})}$ be Banach spaces such that either $E$ or $\predual{F}$ has the approximation property. Then there exists an isometric isomorphism between $(\calN(\predual{E};\predual{F}))^*$ and $\calL(E;F)$. For any $K \in \calL(E;F)$ and $N \in \calN(\predual{E};\predual{F})$, the duality pairing can be written as follows
\begin{equation}\label{eq:nuc-lin-duality}
    \sp{N,K}  = \tr(KN^*) 
    \defeq \sum_{i=1}^\infty \sp{\eta_i, K x_i},
\end{equation}
where $\{x_i,\eta_i\}_{i \in \N} \subset E \times \predual{F}$ is any nuclear representation~\eqref{eq:nuc_rep} of $N \in \calN(\predual{E};\predual{F})$.} 
\end{prop}

\revtwo{Using the Banach-Alaoglu theorem, we obtain}
\begin{prop}\label{prop:unit-ball-op-ws-compact}
\revtwo{Suppose that $E,F$ are the duals of separable spaces $\predual{E}$ and $\predual{F}$ and that either $E$ or $\predual{F}$ has the approximation property. Then the unit ball $\Ball{\calL(E;F)}$ is weakly-* compact and metrisable.}
\end{prop}
\begin{proof}
\revtwo{Since every nuclear operator is a limit of finite-rank operators, $\calN(\predual{E};\predual{F})$ is separable and by Proposition~\ref{prop:nuclear_dual} we have that $\calL(E;F)$ is the dual of a separable Banach space. The (sequential) Banach-Alaoglu theorem implies that the unit ball $\Ball{\calL({{E}};F)}$ is weakly-* compact and metrisable.}
\end{proof}

\section{Proof of  Proposition~\ref{prop:T-compact}}\label{app:proofs}

\begin{proof}
Consider the unit ball $\Ball{\M}$ in $\M(\Ball{\calL({{\X}};\Y)})$ and its image $T\Ball{\M}$. Take an arbitrary sequence $\{f_k\}_{k \in \N} \subset T\Ball{\M}$. It corresponds to a sequence $\{a_k\}_{k \in \N} \subset \Ball{\M}$ such that $f_k = \sp{\sigma(B\cdot),a_k}$. Since $\{a_k\}_{k \in \N}$ is bounded, it contains a weakly-* converging subsequence (which we don't relabel)
\begin{equation*}
    a_k \wsto a \quad \text{in $\Ball{\M}$}.
\end{equation*}
Fix $x \in \X$ and consider 
\begin{equation*}
    \norm{f(x)-f_k(x)}_{(\Y,d_*)} =  \sum_{i=1}^\infty 2^{-i} \abs{\sp{\eta_i,\sp{\sigma(Bx),a-a_k}}} = \sum_{i=1}^\infty 2^{-i} \abs{\sp{\sp{\eta_i,\sigma(Bx)},a-a_k}},
\end{equation*}
where $\{\eta_i\}_{i=1}^\infty \subset \predual{\Y}$ is a countable dense system in $\predual{\Y}$ as defined in Proposition~\ref{prop:rank-one-dense}. Since the functions $\sp{\eta_i,\sigma(Bx)}$ are continuous on $(\Ball{\calL(\X;\Y)},d_*)$, we have $\sp{\sp{\eta_i,\sigma(Bx)},a-a_k} \to 0$ for all $i$. \revtwo{We also have the following bound
\begin{equation}\label{eq:tmp2}
    \sum_{i=1}^\infty 2^{-i} \abs{\sp{\sp{\eta_i,\sigma(Bx)},a-a_k}} \leq \sum_{i=1}^\infty 2^{-i} \norm{a-a_k}_{\M} \norm{x} \leq C \norm{x},
\end{equation}
where we used the facts that $\norm{\eta_i}=1$, $\norm{B} \leq 1$ and $\norm{a-a_k}_{\M} \leq C$ since weakly-* convergent sequences are bounded. By the dominated convergence theorem we obtain that $\norm{f(x)-f_k(x)}_{(\Y,d_*)} \to 0$ pointwise on $\X$. Since $\pi$ has $p \geq 1$ finite moments, the function $x \mapsto \norm{x}$ is in $L^p_\pi$ and by~\eqref{eq:tmp2} and the dominated convergence theorem we get that $f_k \to f$ strongly in $L^p_\pi(\X;(\Y,d_*))$. 
Since $\{f_k\}_{k \in \N}$ was arbitrary, every sequence in $T\Ball{\M}$ has a convergent sequence and $T\Ball{\M}$ is (sequentially) compact, hence $T$ is a compact operator.}
\end{proof}

\end{document}